\documentclass{article}

\usepackage[authoryear,square]{natbib}
\usepackage[preprint]{neurips_2026}
\usepackage[utf8]{inputenc} 
\usepackage[T1]{fontenc}    
\usepackage{url}            
\usepackage{booktabs}       
\usepackage{nicefrac}       
\usepackage{microtype}      
\usepackage{xcolor}         
\usepackage{graphicx}
\usepackage{subfig}
\usepackage{amsmath,amsfonts,amsthm,amssymb}
\usepackage{bm}
\usepackage{algorithm}
\usepackage{algpseudocode}
\usepackage{hyperref}
\usepackage{authblk}

\newtheorem{assumption}{Assumption}
\newtheorem{theorem}{Theorem}

\newtheorem{lemma}{Lemma}

\title{PowerStep: Memory-Efficient Adaptive Optimization \\ via $\ell_p$-Norm Steepest Descent}

\author{%
   \textbf{Yao Lu}$^1$\thanks{Email: \texttt{yaolubrain@gmail.com}} \quad 
   \textbf{Dengdong Fan}$^1$ \quad 
   \textbf{Shixun Zhang}$^1$ \quad 
   \textbf{Yonghong Tian}$^{1,2}$ \\ \vspace{-0.1cm}
  \normalsize{Pengcheng Laboratory$^1$} \quad \normalsize{Peking University$^2$}
}
\date{}

\begin{document}

\maketitle

\vspace{-.3cm}

\begin{abstract}
Adaptive optimizers, most notably Adam, have become the default standard for training large-scale neural networks such as Transformers. These methods maintain running estimates of gradient first and second moments, incurring substantial memory overhead. We introduce PowerStep, a memory-efficient optimizer that achieves coordinate-wise adaptivity without storing second-moment statistics. Motivated by steepest descent under an $\ell_p$-norm geometry, we show that applying a nonlinear transform directly to a momentum buffer yields coordinate-wise adaptivity. We prove that PowerStep converges at the optimal $O(1/\sqrt{T})$ rate for non-convex stochastic optimization. Extensive experiments on Transformer models ranging from 124M to 235B parameters demonstrate that PowerStep matches Adam's convergence speed while halving optimizer memory. Furthermore, when combined with aggressive \texttt{int8} quantization, PowerStep remains numerically stable and reduces optimizer memory by $\sim\!8\times$ compared to full-precision Adam. PowerStep thus provides a principled, scalable and resource-efficient alternative for large-scale training. Code is available at 
\url{https://github.com/yaolubrain/PowerStep}.
\end{abstract}

\section{Introduction}

Neural networks are notoriously difficult to train, owing in large part to the vanishing and exploding gradient problem \citep{hochreiter1991untersuchungen, bengio1994learning} and the ill-conditioned curvature of the loss landscape \citep{lecun1998efficient, dauphin2014identifying, choromanska2015loss}, which can cause stochastic gradient descent (SGD) to converge slowly or even diverge. Adaptive gradient methods \citep{duchi2011adaptive, tieleman2012rmsprop, kingma2015adam, loshchilov2019decoupled} alleviate these issues by preconditioning gradient updates with accumulated second-moment statistics. Among these methods, Adam \citep{kingma2015adam} and its decoupled weight decay variant AdamW \citep{loshchilov2019decoupled} have become the default optimizers for training large-scale neural networks, particularly Transformers \citep{vaswani2017attention}.

Consider the optimization of an objective function $f(\boldsymbol{\theta}): \mathbb{R}^d \to \mathbb{R}$. Let $\mathbf{g}_t$ denote the stochastic gradient at step $t$, while $\mathbf{m}_t$ and $\mathbf{v}_t$ represent the exponential moving averages (EMA) of the first and second moments of $\mathbf{g}_t$, respectively. The update rule for Adam \citep{kingma2015adam} is defined as
\begin{equation}
\boldsymbol{\theta}_t = \boldsymbol{\theta}_{t-1} - \eta_t \cdot \mathbf{m}_t / (\sqrt{\mathbf{v}_t} + \epsilon),
\end{equation}
where $\eta_t$ is the learning rate and $\epsilon$ is a small constant. Intuitively, $\mathbf{v}_t$ acts as a coordinate-wise scaling factor: by dividing the first moment by the square root of the second moment, Adam dampens parameters with large historical gradients and amplifies those with small ones. Equivalently, this applies a diagonal preconditioner $\mathbf{D}_t = \mathrm{diag}((\sqrt{\mathbf{v}_t} + \epsilon)^{-1})$ to the momentum direction. This adaptivity enables Adam to converge substantially faster than SGD, a phenomenon whose underlying mechanisms are still under active investigation \citep{balles2018dissecting,pan2022directional,kunstner2023noise,zhang2024why,tomihari2025understanding, zhao2025deconstructing,jin2026adam}.

However, the reliance on the second-moment accumulator $\mathbf{v}_t$ incurs a significant memory overhead. Since $\mathbf{v}_t$ shares the same dimensionality as the model parameters $\boldsymbol{\theta}_t$ and typically requires \texttt{fp32} precision to maintain numerical stability, it doubles the optimizer state footprint relative to SGD with momentum. This ``memory wall'' has motivated a growing body of work aimed at reducing optimizer memory. One line of work approximates the second-moment matrix through factorization \citep{shazeer2018adafactor} or blockwise sharing \citep{zheng2019blockwise,zhang2025adammini}. Another applies low-precision quantization to the optimizer states \citep{dettmers2022bit,li2024memory,han2025qadam}. Alternatively, several optimizers dispense with the second moment buffer entirely~\citep{bernstein2018signsgd,balles2018dissecting,chen2024symbolic,zhang2025adams}.

Orthogonal to the adaptive optimization methods, a parallel line of work explores optimization under an $\ell_p$-norm geometry. Early work by Grove et al.~\citep{grove2001general} and Gentile \citep{gentile2003robustness} established convergence guarantees for $\ell_p$-norm algorithms in linear classification and regression. More recently, this geometric perspective has been adapted to accelerate gradient-based optimization in broader model classes. The Powerball method \citep{yuan2019powerball} leverages $\ell_p$-norm steepest descent to update parameters directly, while pbSGDM \citep{zhou2021pbsgd} integrates this geometry with momentum accumulation by applying a signed power transform before momentum. Stacey \citep{luo2025stacey} further extends the approach, applying the transform to a momentum buffer within a primal-dual interpolation framework to accelerate SGD in non-convex settings.

In this paper, we establish a direct connection between adaptive optimization and $\ell_p$-norm steepest descent. Our contributions are as follows:

\begin{itemize}
    \item \textbf{Algorithm design.} We derive PowerStep from $\ell_p$-norm 
    steepest descent principles and show that applying a signed power 
    transform directly to a heavy-ball momentum buffer yields 
    coordinate-wise adaptivity without storing any second-moment statistics.
    
    \item \textbf{Theoretical analysis.} We establish an optimal 
    $O(1/\sqrt{T})$ convergence rate for PowerStep in non-convex stochastic 
    optimization.
    
    \item \textbf{Empirical validation.} We evaluate PowerStep on Transformer 
    models ranging from 124M to 235B parameters, spanning both dense and 
    mixture-of-experts (MoE) architectures. PowerStep matches the convergence 
    speed of AdamW while using only half the optimizer state memory. 
    Furthermore, PowerStep remains numerically stable under aggressive \texttt{int8} quantization, reducing optimizer memory by $\sim\!8\times$ compared to full-precision 
AdamW.
\end{itemize}

\section{Algorithm}
\label{sec:algorithm}

In this section, we present {PowerStep}, a memory-efficient optimizer that achieves adaptive convergence without second-moment estimation. We derive its update rule from first principles and provide a geometric interpretation of its behavior.

\subsection{Steepest descent on $\ell_p$-norm geometry}
The goal of first-order optimization is to identify an update direction $\mathbf{v}$ that maximally decreases the objective $f(\bm{\theta})$. In the framework of steepest descent in general normed spaces \citep{boyd2004convex}, the optimal direction is defined as
\begin{equation}
\mathbf{v}^* = \arg\min_{\mathbf{v} \in \mathbb{R}^d} \langle \nabla f(\bm{\theta}), \mathbf{v} \rangle \quad \text{s.t. } \|\mathbf{v}\| \leq 1.
\end{equation}
The geometry of the trust region is dictated by the choice of norm. By adopting the Minkowski $\ell_p$ norm, $\|\mathbf{v}\|_p = (\sum_{i=1}^d |v_i|^p)^{1/p}$, we define a continuum of geometries. To derive the analytical update, we form the Lagrangian using the $p$-th power of the norm constraint
\begin{equation}
\mathcal{L}(\mathbf{v}, \mu) = \sum_{i=1}^d g_i v_i + \mu \left( \sum_{i=1}^d |v_i|^p - 1 \right),
\end{equation}
where $g_i$ is the $i$-th component of the gradient $\mathbf{g} = \nabla f(\bm{\theta})$ and $\mu > 0$ is the Lagrange multiplier. Setting $\frac{\partial \mathcal{L}}{\partial v_i} = 0$ yields the coordinate-wise update rule
\begin{align}
v_i^* \propto -\operatorname{sign}(g_i) \cdot |g_i|^{\frac{1}{p-1}}.
\label{eq:v}
\end{align}
Updating the parameters directly along this direction in a Euclidean manifold leads to $\ell_p$-norm steepest descent, also known as the Powerball method \citep{yuan2019powerball}.

\newpage

\subsection{Adaptivity via $\ell_p$-norm steepest descent}
Motivated by steepest descent direction $\mathbf{v}^*$ in (\ref{eq:v}), we define the signed power transform as
\begin{align}
    \Phi_{\beta}(\mathbf{x}) = \operatorname{sign}(\mathbf{x}) \odot |\mathbf{x}|^{\beta},
\end{align}
where $\beta = 1/(p-1) \in [0, 1]$, $\odot$ denotes elementwise multiplication and all other operations ($\operatorname{sign}(\cdot)$, $|\cdot|$ and $(\cdot)^{\beta}$) are applied elementwise. 
As illustrated in Figure~\ref{fig:spt}, for $\beta < 1$, the transform nonlinearly dampens large magnitudes while amplifying small ones.
\begin{figure}[t]
    \centering
    \includegraphics[width=0.4\linewidth]{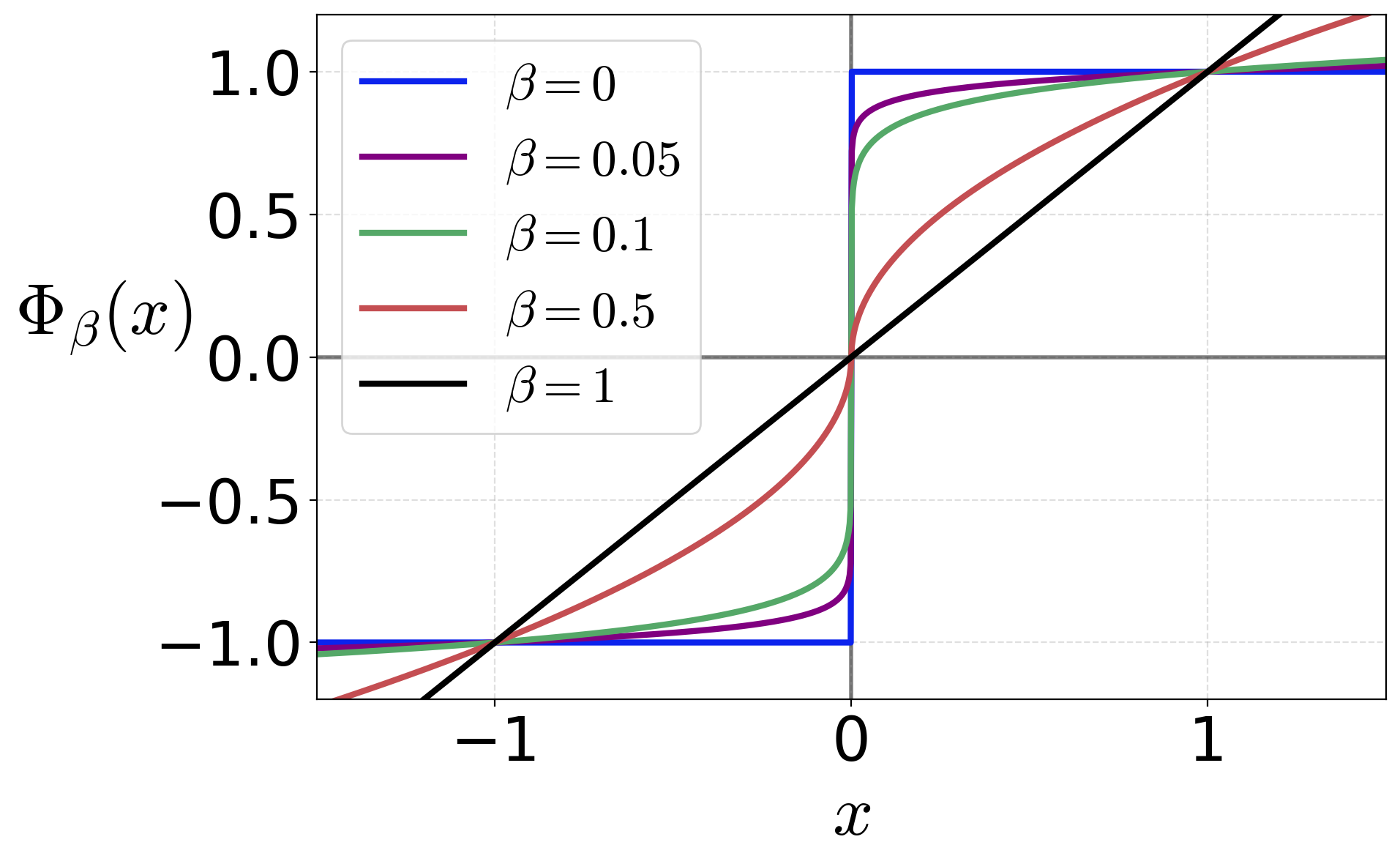}
    \caption{Signed power transform $\Phi_\beta(x)$ for different $\beta$ values}
    \label{fig:spt}
\end{figure}

We propose PowerStep, which applies this nonlinear transform to the heavy-ball momentum \citep{polyak1964some}. Unlike Powerball that transforms raw gradients, PowerStep accumulates gradients linearly and applies $\Phi_\beta$ after momentum integration, allowing temporal smoothing of the noisy gradients,
\begin{align}
\mathbf{m}_t &= \gamma \mathbf{m}_{t-1} + \mathbf{g}_t, \label{eq:powerstep1} \\ 
\bm{\theta}_t &= \bm{\theta}_{t-1} - \eta_t \Phi_\beta(\mathbf{m}_t). \label{eq:powerstep2}
\end{align}
The complete procedure is stated in Algorithm~\ref{alg:psd}.

Equivalently, the PowerStep update~(\ref{eq:powerstep2}) can be expressed as a preconditioned momentum step,
\begin{align}
\bm{\theta}_t &= \bm{\theta}_{t-1} - \eta_t \mathbf{D}_t \mathbf{m}_t, \\
\mathbf{D}_t &= \operatorname{diag}\bigl(|\mathbf{m}_t|^{\beta-1}\bigr), \label{eq:up1}
\end{align}
where $\mathbf{D}_t$ is a diagonal preconditioner whose $i$-th entry is $|m_{t,i}|^{\beta-1}$, with $m_{t,i}$ denoting the $i$-th component of $\mathbf{m}_t$. 
This formulation reveals the source of PowerStep's adaptivity. For $\beta \in (0,1)$, the exponent $\beta-1$ is strictly negative, so each $|m_{t,i}|^{\beta-1}$ is inversely related to the local momentum magnitude: when $|m_{t,i}|$ is small (a flat direction), the preconditioner amplifies the step to accelerate progress; when $|m_{t,i}|$ is large (a steep direction), it dampens the update to prevent overshooting. Crucially, this coordinate-wise scaling is computed instantaneously from $\mathbf{m}_t$ without maintaining a second-moment estimator.

At the extremes, PowerStep recovers classical methods: $\beta = 0$ yields $\Phi_0(\mathbf{m}_t) = \operatorname{sign}(\mathbf{m}_t)$, reducing to SignSGD with momentum~\citep{bernstein2018signsgd,balles2018dissecting}; $\beta = 1$ yields $\Phi_1(\mathbf{m}_t) = \mathbf{m}_t$, recovering standard SGD with momentum~\citep{polyak1964some}. Between these limits, PowerStep interpolates between sign-based and linear updates, with $\beta = 0.1$ providing a practically effective trade-off (Section~\ref{sec:hyperparameter}).

\vspace{0.2cm}

\begin{algorithm}[]
\caption{PowerStep}\label{alg:psd}
\begin{algorithmic}[1]
\Require $\bm{\theta}_0 \in \mathbb{R}^d$ (initial parameters), $\{\eta_t\}_{t=1}^T$ (learning rate schedule)
\Require $\gamma \in [0, 1)$ (momentum coefficient), $\beta \in [0, 1]$ (power exponent), $\lambda \geq 0$ (weight decay)
\State $\mathbf{m}_0 \leftarrow \mathbf{0}$
\For{$t = 1$ \textbf{to} $T$}
    \State $\mathbf{g}_t \leftarrow \nabla f_t(\bm{\theta}_{t-1})$ \Comment{Compute stochastic gradient}
    \State $\mathbf{m}_t \leftarrow \gamma \mathbf{m}_{t-1} + \mathbf{g}_t$ \Comment{Update momentum}
    \State $\mathbf{u}_t \leftarrow \operatorname{sign}(\mathbf{m}_t) \odot |\mathbf{m}_t|^{\beta}$ \Comment{Apply signed power transform}
    \State $\bm{\theta}_t \leftarrow \bm{\theta}_{t-1} - \eta_t (\mathbf{u}_t + \lambda \bm{\theta}_{t-1})$ \Comment{Update with weight decay}
\EndFor
\end{algorithmic}
\end{algorithm}

\newpage

\subsection{Relation to prior methods}

Powerball \citep{yuan2019powerball} applies the signed power transform $\Phi_\beta(\mathbf{x})$ to raw gradients, providing no temporal smoothing. pbSGDM \citep{zhou2021pbsgd} applies $\Phi_\beta$ before momentum accumulation, which can introduce a temporal mismatch when the gradient distribution shifts. PowerStep instead applies $\Phi_\beta$ after momentum integration, ensuring the nonlinear scaling reflects the fully accumulated state. We provide empirical comparisons between PowerStep and pbSGDM in Section \ref{sec:exp}.

Stacey \citep{luo2025stacey} applies a stabilized transform 
$\Phi_{\beta}^{\epsilon}(\mathbf{x}) = \operatorname{sign}(\mathbf{x}) \odot (|\mathbf{x}| + \epsilon)^{\beta}$ 
to an EMA momentum buffer within a primal-dual framework, making it the closest algorithmic relative to PowerStep. PowerStep differs in three respects. First, heavy-ball momentum \citep{polyak1964some} replaces EMA, removing the 
zero-initialization bias and being more robust to quantization (see Appendix \ref{sec:quantization-analysis}). Second, the primal-dual auxiliary variables and the $\epsilon$ stabilization are eliminated, leaving only a single buffer with the exact transform $\Phi_{\beta}(\mathbf{x})$. Third, the design target shifts from accelerated convergence to memory 
efficiency. We show that this simplified, single-buffer variant suffices to match Adam's adaptivity at scale, a finding bolstered by direct comparisons with Stacey in Appendix \ref{sec:stacey}.

PowerStep can also be interpreted through mirror descent \citep{nemirovsky1983problem,beck2003mirror}: the momentum buffer $\mathbf{m}_t$ acts as an accumulated dual state and $\Phi_\beta(\cdot)$ serves as the inverse mirror map, projecting the dual momentum back onto the primal manifold.

\section{Convergence analysis}
\label{sec:convergence}

In this section, we establish the convergence rate of PowerStep for non-convex stochastic optimization. We show that under standard regularity conditions, the algorithm achieves the optimal ${O}(1/\sqrt{T})$ convergence rate to a stationary point. For clarity, we analyze the unregularized setting ($\lambda = 0$ in Algorithm~\ref{alg:psd}). All proofs are deferred to Appendix~\ref{app:proofs}.

Following the standard setup for non-convex stochastic optimization~\citep{ghadimi2013stochastic}, we make the following assumptions on the objective function $f:\mathbb{R}^d\to\mathbb{R}$.

\begin{assumption}[$L$-Smoothness]
\label{assum:smoothness}
$f$ is continuously differentiable and there exists a constant $L > 0$ such that, for all $\mathbf{x}, \mathbf{y} \in \mathbb{R}^d$,
\begin{equation}
    \|\nabla f(\mathbf{x}) - \nabla f(\mathbf{y})\|_2 \leq L \|\mathbf{x} - \mathbf{y}\|_2.
\end{equation}
\end{assumption}

\begin{assumption}[Bounded Below]
\label{assum:bounded}
There exists $f^* \in \mathbb{R}$ such that $f(\bm{\theta}) \geq f^*$ for all $\bm{\theta} \in \mathbb{R}^d$.
\end{assumption}

\begin{assumption}[Bounded Gradient]
\label{assum:grad_bound}
There exists a constant $G > 0$ such that $\|\nabla f(\bm{\theta})\|_2 \leq G$ for all $\bm{\theta} \in \mathbb{R}^d$.
\end{assumption}

\begin{assumption}[Unbiased Gradient]
\label{assum:unbiased}
At each iteration $t$, the stochastic gradient $\mathbf{g}_t$ satisfies
\begin{equation}
\mathbb{E}[\mathbf{g}_t | \bm{\theta}_{t-1}] = \nabla f(\bm{\theta}_{t-1}).
\end{equation}
\end{assumption}

\begin{assumption}[Bounded Variance]
\label{assum:variance}
There exists a constant $\sigma > 0$ such that, for all $t \geq 1$,
\begin{equation}
\mathbb{E}\bigl[\|\mathbf{g}_t - \nabla f(\bm{\theta}_{t-1})\|_2^2 | \bm{\theta}_{t-1}\bigr] \leq \sigma^2.
\end{equation}
\end{assumption}

Assumption~\ref{assum:grad_bound} is standard in the analysis of adaptive methods \citep{reddi2018convergence,chen2019convergence,luo2019adaptive,defossez2022simple} and simplifies the control of the momentum buffer. We note that this condition is stronger than necessary for many practical objectives and has been removed in recent work on convergence of Adam \citep{zhang2022adam,li2024convergence,wang2024closing}. Extending our analysis to relax the global bounded gradient assumption is an important direction for future work.

The signed power transform $\Phi_\beta(\mathbf{x}) = \operatorname{sign}(\mathbf{x}) \odot |\mathbf{x}|^\beta$ is the core operation distinguishing PowerStep from standard momentum methods. The following three lemmas characterize its essential properties.

\begin{lemma}[Induced Norm Structure]
\label{lemma:metric}
For any vector $\mathbf{m} \in \mathbb{R}^d$ and $\beta \in (0, 1]$, 
\begin{equation}
\langle \mathbf{m}, \Phi_\beta(\mathbf{m}) \rangle =  \|\mathbf{m}\|_{1+\beta}^{1+\beta}.
\end{equation}
\end{lemma}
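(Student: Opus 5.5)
The identity is a direct coordinate-wise computation, so the plan is to expand the inner product and simplify each summand. By definition of the signed power transform, the $i$-th component of $\Phi_\beta(\mathbf{m})$ is $\operatorname{sign}(m_i)\,|m_i|^\beta$. Hence
\begin{equation*}
\langle \mathbf{m}, \Phi_\beta(\mathbf{m}) \rangle = \sum_{i=1}^d m_i \operatorname{sign}(m_i)\, |m_i|^{\beta}.
\end{equation*}

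The key step is the scalar identity $m_i \operatorname{sign}(m_i) = |m_i|$, valid for every real $m_i$ with the convention $\operatorname{sign}(0)=0$. Substituting it gives summands $|m_i|\cdot|m_i|^\beta = |m_i|^{1+\beta}$. Summing over $i$ yields $\sum_i |m_i|^{1+\beta}$, which is by definition $\|\mathbf{m}\|_{1+\beta}^{1+\beta}$.

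The only point needing care, and the closest thing to an obstacle, is the treatment of zero coordinates. Since $\beta>0$, we have $|0|^\beta=0$, so $\Phi_\beta(0)=0$ regardless of the sign convention. Both sides of the scalar identity therefore vanish when $m_i=0$. This is exactly why the lemma requires $\beta\in(0,1]$: at $\beta=0$ the expression $0^0$ would be ambiguous, although the identity would still hold with $\operatorname{sign}(0)=0$. For $\beta\le 1$, $\|\cdot\|_{1+\beta}$ is a genuine norm because $1+\beta\ge 1$, which justifies the name ``induced norm structure'', but this fact is not needed for the equality itself.
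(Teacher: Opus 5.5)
Your proof is correct and follows the paper's argument: both expand the inner product coordinate-wise and use $m_i \operatorname{sign}(m_i) = |m_i|$ (the paper writes this as $m_i = \operatorname{sign}(m_i)|m_i|$ with $(\operatorname{sign}(m_i))^2 = 1$) to get $\sum_i |m_i|^{1+\beta}$, handling zero coordinates separately. One small correction to your side remark: the identity needs only $\beta > 0$, and $\|\cdot\|_{1+\beta}$ is a norm for every $\beta \ge 0$, so neither fact depends on the upper bound $\beta \le 1$.
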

Lemma~\ref{lemma:metric} shows that stepping along $-\Phi_\beta(\mathbf{m})$ decreases the local linear approximation of $f$ when $\mathbf{m}$ is aligned with the gradient, generalizing the identity $\langle \mathbf{m}, \mathbf{m} \rangle = \|\mathbf{m}\|_2^2$.

\begin{lemma}[Norm Relationship]
\label{lemma:stability}
For any $\mathbf{m} \in \mathbb{R}^d$ and $\beta \in (0, 1]$,
\begin{equation}
\|\Phi_\beta(\mathbf{m})\|_2^2 = \|\mathbf{m}\|_{2\beta}^{2\beta} \leq d^{1-\beta} \|\mathbf{m}\|_2^{2\beta}.
\end{equation}
\end{lemma}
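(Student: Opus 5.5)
The equality is a direct coordinatewise computation, and the inequality follows from concavity (power-mean/Jensen or Hölder). For the equality, note that the $i$-th entry of $\Phi_\beta(\mathbf{m})$ is $\operatorname{sign}(m_i)|m_i|^\beta$. Its square is $|m_i|^{2\beta}$, since $\operatorname{sign}(m_i)^2=1$ whenever $m_i\neq 0$, and both sides vanish when $m_i=0$; for $\beta>0$ we use the convention $0^\beta=0$. Summing over $i$ gives $\|\Phi_\beta(\mathbf{m})\|_2^2=\sum_{i=1}^d |m_i|^{2\beta}=\|\mathbf{m}\|_{2\beta}^{2\beta}$. Here $\|\cdot\|_{2\beta}^{2\beta}$ is understood simply as this sum, which matters when $2\beta<1$ and $\|\cdot\|_{2\beta}$ is only a quasi-norm; nothing in the argument needs the triangle inequality.

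For the inequality, apply Hölder's inequality to $\sum_i 1\cdot |m_i|^{2\beta}$ with conjugate exponents $r=1/\beta$ and $r'=1/(1-\beta)$. This is valid for $\beta\in(0,1)$:
\begin{equation*}
\sum_{i=1}^d |m_i|^{2\beta} \le \Bigl(\sum_{i=1}^d 1\Bigr)^{1-\beta}\Bigl(\sum_{i=1}^d |m_i|^{2}\Bigr)^{\beta} = d^{1-\beta}\|\mathbf{m}\|_2^{2\beta}.
\end{equation*}
Equivalently, one can apply Jensen's inequality to the concave map $x\mapsto x^\beta$ on $x_i=m_i^2\ge 0$, which gives $\frac1d\sum_i x_i^\beta\le\bigl(\frac1d\sum_i x_i\bigr)^\beta$; multiplying by $d$ yields the same bound.

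The endpoint $\beta=1$ has to be handled separately, because Hölder with $r'=\infty$ degenerates. In that case both sides equal $\|\mathbf{m}\|_2^2$ and $d^{0}=1$, so the inequality holds with equality. The case $\mathbf{m}=\mathbf{0}$ is trivial.

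There is no real obstacle here. The only points needing care are the sign/zero conventions in the first identity and the fact that the Hölder exponents become degenerate at $\beta=1$, which the separate treatment above covers.
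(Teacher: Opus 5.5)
Your proposal is correct and follows essentially the same route as the paper's proof: the equality by coordinatewise computation, and the inequality via H\"older's inequality applied to $\sum_i |m_i|^{2\beta}\cdot 1$ with exponents $1/\beta$ and $1/(1-\beta)$, with $\beta=1$ handled separately as an equality case. Your added remarks on the $0^\beta=0$ convention, the quasi-norm interpretation of $\|\cdot\|_{2\beta}^{2\beta}$, and the equivalent Jensen argument are accurate refinements of the paper's argument.
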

Lemma~\ref{lemma:stability} bounds the $\ell_2$-norm of the transformed update. When $\beta < 1$, the dependence on $\|\mathbf{m}\|_2$ is sub-quadratic, providing an implicit gradient clipping effect.

\begin{lemma}[H\"older Continuity of $\Phi_\beta$]
\label{lemma:holder}
For any $\mathbf{x}, \mathbf{y} \in \mathbb{R}^d$ and $\beta \in (0, 1]$,
\begin{equation}
\|\Phi_\beta(\mathbf{x}) - \Phi_\beta(\mathbf{y})\|_{1+\beta} 
\leq C_\beta \|\mathbf{x} - \mathbf{y}\|_{1+\beta}^\beta,
\end{equation}
where $C_\beta = 2^{1-\beta} d^{(1-\beta)/(1+\beta)} \leq 2d$.
\end{lemma}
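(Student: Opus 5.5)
The proof reduces to a scalar inequality, which is then summed over coordinates and converted between norms with H\"older's inequality. The scalar step is to show that for all $a,b\in\mathbb{R}$ and $\beta\in(0,1]$,
\begin{equation}
|\Phi_\beta(a)-\Phi_\beta(b)| \le 2^{1-\beta}|a-b|^\beta .
\end{equation}
This splits into two cases according to the signs of $a$ and $b$.

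\emph{Same sign.} By symmetry, take $a\ge b\ge 0$. Since $t\mapsto t^\beta$ is concave with value $0$ at the origin, it is subadditive. Hence $a^\beta \le b^\beta+(a-b)^\beta$, which gives $|a^\beta-b^\beta|\le |a-b|^\beta$. This is even sharper than the claimed bound.

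\emph{Opposite signs.} Take $a>0>b$ and set $x=a$, $y=-b>0$. Then $|\Phi_\beta(a)-\Phi_\beta(b)| = x^\beta+y^\beta$ and $|a-b|=x+y$. Concavity of $t\mapsto t^\beta$ (the power-mean inequality) gives
\begin{equation}
x^\beta+y^\beta \le 2\left(\frac{x+y}{2}\right)^\beta = 2^{1-\beta}(x+y)^\beta .
\end{equation}
This case is where the constant $2^{1-\beta}$ comes from.

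Next I raise the scalar bound to the power $1+\beta$ and sum over coordinates:
\begin{equation}
\|\Phi_\beta(\mathbf{x})-\Phi_\beta(\mathbf{y})\|_{1+\beta}^{1+\beta} \le 2^{(1-\beta)(1+\beta)}\sum_i |x_i-y_i|^{\beta(1+\beta)} .
\end{equation}
The exponent $\beta(1+\beta)$ is at most $1+\beta$, so the right-hand side is an $\ell_q$-quasi-norm sum with $q=\beta(1+\beta)\le 1+\beta$. To compare it with $\|\mathbf{x}-\mathbf{y}\|_{1+\beta}$, I apply H\"older with exponents $r=1/\beta$ and $r'=1/(1-\beta)$ to $\sum_i |z_i|^{\beta(1+\beta)}\cdot 1$, where $\mathbf{z}=\mathbf{x}-\mathbf{y}$. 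This gives
\begin{equation}
\sum_i |z_i|^{\beta(1+\beta)} \le \Big(\sum_i |z_i|^{1+\beta}\Big)^{\beta} d^{1-\beta}.
\end{equation}
Taking the $(1+\beta)$-th root yields
\begin{equation}
\|\Phi_\beta(\mathbf{x})-\Phi_\beta(\mathbf{y})\|_{1+\beta}\le 2^{1-\beta} d^{(1-\beta)/(1+\beta)}\|\mathbf{z}\|_{1+\beta}^{\beta},
\end{equation}
which is exactly $C_\beta$. When $\beta=1$ the statement is trivial, and H\"older degenerates consistently with $r'=\infty$.

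The final claim $C_\beta\le 2d$ is immediate: $2^{1-\beta}\le 2$, and $d^{(1-\beta)/(1+\beta)}\le d$ because the exponent lies in $[0,1]$ and $d\ge 1$.

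The only delicate step is the opposite-sign case, since that is where the factor $2^{1-\beta}$ enters. The H\"older exponent bookkeeping is the other place an error could slip in, so I would double-check that $r=1/\beta$ turns $|z_i|^{\beta(1+\beta)}$ into $|z_i|^{1+\beta}$.
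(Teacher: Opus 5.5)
Your proposal is correct and follows the paper's proof essentially step for step. Both use the same scalar bound $|\Phi_\beta(a)-\Phi_\beta(b)|\le 2^{1-\beta}|a-b|^\beta$, obtained from subadditivity when the signs agree and concavity when they differ; both then sum the $(1+\beta)$-th powers and apply H\"older with exponents $1/\beta$ and $1/(1-\beta)$ to get the factor $d^{(1-\beta)/(1+\beta)}$. You also handle $\beta=1$ explicitly, which the paper's H\"older step with $q=1/(1-\beta)$ silently skips.
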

Lemma~\ref{lemma:holder} establishes H\"older continuity of order $\beta$, which is critical for controlling the error when the momentum deviates from the true gradient.

The next three lemmas form the backbone of the convergence proof.
\begin{lemma}[Momentum Bound]
\label{lemma:momentum_bound}
Under Assumptions~\ref{assum:grad_bound}--\ref{assum:variance}, for PowerStep with $\gamma \in [0,1)$ and any $t \geq 1$,
\begin{equation}
\mathbb{E}\bigl[\|\mathbf{m}_t\|_2^2\bigr] \leq \frac{2(G^2 + \sigma^2)}{(1-\gamma)^2}.
\end{equation}
Consequently, for any $\beta \in (0,1]$,
\begin{equation}
\mathbb{E}\bigl[\|\Phi_\beta(\mathbf{m}_t)\|_2^2\bigr] \leq d^{1-\beta} 2^\beta \left( \frac{G^2 + \sigma^2}{(1-\gamma)^2} \right)^{\beta} =: M_\beta.
\end{equation}
\end{lemma}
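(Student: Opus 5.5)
Since $\mathbf{m}_0=\mathbf{0}$, the recursion (\ref{eq:powerstep1}) unrolls to $\mathbf{m}_t = \sum_{k=1}^{t}\gamma^{t-k}\mathbf{g}_k$. The first claim then follows from a convexity (Jensen) argument on this weighted sum, and the second from Lemma~\ref{lemma:stability} together with Jensen's inequality for the concave map $s\mapsto s^\beta$.

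For the first claim, set $S_t=\sum_{k=1}^t\gamma^{t-k}\le 1/(1-\gamma)$ and write $\mathbf{m}_t = S_t\sum_k w_k \mathbf{g}_k$ with weights $w_k=\gamma^{t-k}/S_t$ summing to one. Convexity of $\|\cdot\|_2^2$ gives
\begin{equation}
\|\mathbf{m}_t\|_2^2 \le S_t^2\sum_{k=1}^t w_k\|\mathbf{g}_k\|_2^2 \le \frac{1}{1-\gamma}\sum_{k=1}^t\gamma^{t-k}\|\mathbf{g}_k\|_2^2 .
\end{equation}
Next I bound each second moment by splitting $\mathbf{g}_k=\nabla f(\bm\theta_{k-1})+(\mathbf{g}_k-\nabla f(\bm\theta_{k-1}))$ and using $\|a+b\|^2\le 2\|a\|^2+2\|b\|^2$. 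Conditioning on $\bm\theta_{k-1}$ and applying Assumptions~\ref{assum:grad_bound} and~\ref{assum:variance} gives $\mathbb{E}\|\mathbf{g}_k\|_2^2\le 2(G^2+\sigma^2)$. Summing the geometric weights,
\begin{equation}
\mathbb{E}\|\mathbf{m}_t\|_2^2\le \frac{2(G^2+\sigma^2)}{(1-\gamma)^2}.
\end{equation}
The split can be sharpened: by Assumption~\ref{assum:unbiased} the cross term vanishes, so the factor 2 is not even needed. The version with the factor 2 is what the stated bound requires, and it is valid either way.

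For the second claim, Lemma~\ref{lemma:stability} gives the pointwise bound
\begin{equation}
\|\Phi_\beta(\mathbf{m}_t)\|_2^2\le d^{1-\beta}\bigl(\|\mathbf{m}_t\|_2^2\bigr)^{\beta}.
\end{equation}
Taking expectations and using Jensen's inequality for the concave function $s\mapsto s^\beta$ with $\beta\in(0,1]$, we get $\mathbb{E}[(\|\mathbf{m}_t\|_2^2)^\beta]\le(\mathbb{E}\|\mathbf{m}_t\|_2^2)^\beta$. Substituting the first claim yields
\begin{equation}
\mathbb{E}\|\Phi_\beta(\mathbf{m}_t)\|_2^2\le d^{1-\beta}\,2^\beta\left(\frac{G^2+\sigma^2}{(1-\gamma)^2}\right)^\beta = M_\beta .
\end{equation}

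The only step needing care is handling the correlated sum $\sum_k\gamma^{t-k}\mathbf{g}_k$. Expanding the square directly would produce cross terms between different $k$, and these do not vanish because the gradients are evaluated at dependent iterates. Applying convexity with normalized weights before taking any expectation avoids these cross terms entirely, which is why I use it. The remaining steps are routine.
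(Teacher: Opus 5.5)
Your proof is correct. The second claim is handled exactly as in the paper: Lemma~\ref{lemma:stability} pointwise, then Jensen for $s\mapsto s^\beta$. The first claim, however, takes a different route.

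The paper splits the unrolled momentum into two parts:
\begin{itemize}
\item a signal part $\mathbf{a}_t=\sum_{k}\gamma^k\nabla f(\boldsymbol{\theta}_{t-k-1})$, bounded deterministically by $G/(1-\gamma)$ via the triangle inequality;
\item a noise part $\mathbf{b}_t=\sum_k\gamma^k\boldsymbol{\xi}_{t-k}$, whose square it \emph{does} expand, with the cross terms vanishing by the martingale-difference property. This gives $\mathbb{E}\|\mathbf{b}_t\|_2^2\le\sigma^2/(1-\gamma^2)$.
\end{itemize}
Then $\|\mathbf{a}+\mathbf{b}\|_2^2\le 2\|\mathbf{a}\|_2^2+2\|\mathbf{b}\|_2^2$ finishes. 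So the obstacle you cite to expanding the square is really the nonzero means of the $\mathbf{g}_k$, and the paper removes those means first.

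Your normalized-weight Jensen step avoids cross terms entirely. It needs only the per-step bound $\mathbb{E}\|\mathbf{g}_k\|_2^2\le 2(G^2+\sigma^2)$, obtained by conditioning on $\boldsymbol{\theta}_{k-1}$. This is shorter, and it uses Assumptions~\ref{assum:grad_bound} and~\ref{assum:variance} exactly as stated. The paper's claim $\mathbb{E}\langle\boldsymbol{\xi}_i,\boldsymbol{\xi}_j\rangle=0$, by contrast, tacitly needs $\boldsymbol{\xi}_j$ to be conditionally mean-zero given the whole past, not just given $\boldsymbol{\theta}_{j-1}$.

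In exchange, the paper's route buys two things:
\begin{itemize}
\item It keeps the variance-averaging factor $1/(1-\gamma^2)$ on the noise. Your convexity step treats the noise as perfectly correlated, costing an extra factor $1/(1-\gamma)$ on $\sigma^2$; the stated bound absorbs this anyway.
\item It yields a standalone bound on $\mathbf{b}_t$, which the paper reuses in the proof of Lemma~\ref{lemma:alignment}.
\end{itemize}
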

Lemma~\ref{lemma:momentum_bound} provides a uniform bound on the second moment of the momentum and the transformed update. The bound depends only on $G$, $\sigma$ and $\gamma$, not on the learning rate or iteration count.

\begin{lemma}[Descent Inequality]
\label{lemma:descent}
Under Assumption~\ref{assum:smoothness}, the iterates of PowerStep with learning rate $\eta_t$ satisfy
\begin{equation}
\mathbb{E}[f(\bm{\theta}_t)] \leq \mathbb{E}[f(\bm{\theta}_{t-1})] 
- \eta_t \mathbb{E}\bigl[\langle \nabla f(\bm{\theta}_{t-1}), \Phi_\beta(\mathbf{m}_t) \rangle\bigr] 
+ \frac{L \eta_t^2}{2} \mathbb{E}\bigl[\|\Phi_\beta(\mathbf{m}_t)\|_2^2\bigr].
\end{equation}
\end{lemma}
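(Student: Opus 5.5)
The plan is to apply the standard descent lemma for $L$-smooth functions to one step of PowerStep, and then take total expectation. Since Assumption~\ref{assum:smoothness} gives Lipschitz continuity of $\nabla f$ in $\ell_2$, I would first record the quadratic upper bound: for all $\mathbf{x},\mathbf{y}\in\mathbb{R}^d$,
\begin{equation*}
f(\mathbf{y}) \leq f(\mathbf{x}) + \langle \nabla f(\mathbf{x}), \mathbf{y}-\mathbf{x}\rangle + \frac{L}{2}\|\mathbf{y}-\mathbf{x}\|_2^2 .
\end{equation*}
This follows from writing $f(\mathbf{y})-f(\mathbf{x}) = \int_0^1 \langle \nabla f(\mathbf{x}+s(\mathbf{y}-\mathbf{x})), \mathbf{y}-\mathbf{x}\rangle\, ds$, subtracting $\langle \nabla f(\mathbf{x}),\mathbf{y}-\mathbf{x}\rangle$, and bounding the remaining integrand with Cauchy--Schwarz and the Lipschitz bound. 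The integral $\int_0^1 Ls\,ds = L/2$ supplies the constant.

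Next I would instantiate this with $\mathbf{x}=\bm{\theta}_{t-1}$ and $\mathbf{y}=\bm{\theta}_t$. With $\lambda=0$, the update~(\ref{eq:powerstep2}) gives $\bm{\theta}_t-\bm{\theta}_{t-1} = -\eta_t\Phi_\beta(\mathbf{m}_t)$. Substituting yields the pathwise inequality
\begin{equation*}
f(\bm{\theta}_t) \leq f(\bm{\theta}_{t-1}) - \eta_t \langle \nabla f(\bm{\theta}_{t-1}), \Phi_\beta(\mathbf{m}_t)\rangle + \frac{L\eta_t^2}{2}\|\Phi_\beta(\mathbf{m}_t)\|_2^2 .
\end{equation*}
This holds for every realization of the stochastic gradients, because no property of $\Phi_\beta$ or of the noise is used. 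Here $\eta_t$ is a deterministic schedule.

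Finally, I would take total expectation of both sides and use linearity, which gives exactly the claimed inequality. The only point requiring care is that the expectations must be well defined and finite. For the quadratic term, finiteness follows from Lemma~\ref{lemma:momentum_bound}. For the inner product, Cauchy--Schwarz combined with Assumption~\ref{assum:grad_bound} and Lemma~\ref{lemma:momentum_bound} bounds it by $G\sqrt{M_\beta}$. If $\mathbb{E}[f(\bm{\theta}_{t-1})]$ were $+\infty$, the inequality would hold trivially. I do not expect a genuine obstacle.

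I also would not condition on $\bm{\theta}_{t-1}$ at this stage. The reason is that $\mathbf{m}_t$ depends on $\mathbf{g}_t$ nonlinearly through $\Phi_\beta$, so the inner-product term cannot be simplified using Assumption~\ref{assum:unbiased}. That term is deliberately left intact, to be handled later with Lemma~\ref{lemma:metric} and Lemma~\ref{lemma:holder}.
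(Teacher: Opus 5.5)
Your proposal is correct and is essentially the paper's own proof. The paper likewise applies the $L$-smoothness quadratic upper bound at $\bm{\theta}_{t-1}$, substitutes $\bm{\theta}_t-\bm{\theta}_{t-1}=-\eta_t\Phi_\beta(\mathbf{m}_t)$ to get the pathwise inequality, and takes total expectation. Your extra check that the expectations are finite, which the paper omits, relies on Assumptions~\ref{assum:grad_bound}--\ref{assum:variance} through Lemma~\ref{lemma:momentum_bound}, so it goes beyond the lemma's stated hypothesis of Assumption~\ref{assum:smoothness} alone.
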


Lemma~\ref{lemma:descent} follows directly from $L$-smoothness and bounds the per-iteration decrease in function value.

\begin{lemma}[Gradient Alignment]
\label{lemma:alignment}
Under Assumptions~\ref{assum:smoothness}--\ref{assum:grad_bound}, for PowerStep with learning rate $\eta_t$ and $\gamma \in [0,1)$, there exists a constant $C_0 > 0$ depending on $L$, $\gamma$, $G$, $\sigma$, $d$ and $\beta$ such that for all $t \geq 1$,
\begin{equation}
\mathbb{E}\bigl[\langle \nabla f(\bm{\theta}_{t-1}), \Phi_\beta(\mathbf{m}_t) \rangle\bigr] 
\geq \mathbb{E}\bigl[\|\nabla f(\bm{\theta}_{t-1})\|_{1+\beta}^{1+\beta}\bigr] - C_0(1 + \eta_t^\beta).
\end{equation}
\end{lemma}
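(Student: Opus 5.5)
We write $\mathbf{g} := \nabla f(\bm{\theta}_{t-1})$ and decompose $\Phi_\beta(\mathbf{m}_t) = \Phi_\beta(\mathbf{g}) + \bigl(\Phi_\beta(\mathbf{m}_t) - \Phi_\beta(\mathbf{g})\bigr)$. By Lemma~\ref{lemma:metric}, $\langle \mathbf{g}, \Phi_\beta(\mathbf{g})\rangle = \|\mathbf{g}\|_{1+\beta}^{1+\beta}$, which is exactly the leading term of the claim. It therefore suffices to show
\begin{equation*}
\mathbb{E}\bigl[\,|\langle \mathbf{g}, \Phi_\beta(\mathbf{m}_t) - \Phi_\beta(\mathbf{g})\rangle|\,\bigr] \le C_0(1+\eta_t^\beta).
\end{equation*}

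\textbf{Bounding the error term.} We apply H\"older's inequality with the conjugate pair $(1+\beta)/\beta$ and $1+\beta$, and then Lemma~\ref{lemma:holder}:
\begin{equation*}
|\langle \mathbf{g}, \Phi_\beta(\mathbf{m}_t)-\Phi_\beta(\mathbf{g})\rangle| \le \|\mathbf{g}\|_{(1+\beta)/\beta}\, C_\beta \|\mathbf{m}_t - \mathbf{g}\|_{1+\beta}^{\beta}.
\end{equation*}
All finite-dimensional norms are comparable up to powers of $d$, so Assumption~\ref{assum:grad_bound} bounds $\|\mathbf{g}\|_{(1+\beta)/\beta}$ by $c_d G$. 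The problem thus reduces to bounding $\mathbb{E}\|\mathbf{m}_t-\mathbf{g}\|_{1+\beta}^\beta$. Since $\beta\le 1$, Jensen's inequality gives $\mathbb{E}[X^\beta]\le (\mathbb{E}X)^\beta$. Comparing $\ell_{1+\beta}$ with $\ell_2$ costs another factor $d^{c}$, so it is enough to control $\mathbb{E}\|\mathbf{m}_t-\mathbf{g}\|_2$.

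\textbf{The deviation $\mathbf{m}_t - \mathbf{g}$.} Note that $\mathbf{m}_t$ is an unnormalized sum, $\mathbf{m}_t = \sum_{s\le t}\gamma^{t-s}\mathbf{g}_s$, whose mean scale is about $\mathbf{g}/(1-\gamma)$ rather than $\mathbf{g}$. We make no attempt to exploit cancellation. We use the triangle inequality, $\|\mathbf{m}_t - \mathbf{g}\|_2 \le \|\mathbf{m}_t\|_2 + G$. Lemma~\ref{lemma:momentum_bound} together with Jensen's inequality gives $\mathbb{E}\|\mathbf{m}_t\|_2 \le \sqrt{2(G^2+\sigma^2)}/(1-\gamma)$. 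This yields a constant bound, which is absorbed into $C_0\cdot 1$.

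\textbf{Where $\eta_t^\beta$ comes from.} A sharper route would split
\begin{equation*}
\mathbf{m}_t - \mathbf{g} = \sum_{s\le t}\gamma^{t-s}\bigl(\mathbf{g}_s - \nabla f(\bm{\theta}_{s-1})\bigr) + \sum_{s\le t}\gamma^{t-s}\bigl(\nabla f(\bm{\theta}_{s-1})-\nabla f(\bm{\theta}_{t-1})\bigr) + \Bigl(\tfrac{1-\gamma^t}{1-\gamma}-1\Bigr)\mathbf{g}.
\end{equation*}
The first (noise) and third (scale) pieces are $O(1)$. For the middle drift piece, we use Assumption~\ref{assum:smoothness} and $\|\bm{\theta}_{s-1}-\bm{\theta}_{t-1}\|_2 \le \sum \eta_r\|\Phi_\beta(\mathbf{m}_r)\|_2$, and bound the latter in expectation by Lemma~\ref{lemma:momentum_bound} via $\sqrt{M_\beta}$. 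This gives a term of order $L\sqrt{M_\beta}\,\eta_t/(1-\gamma)^2$ (assuming the learning rates are nonincreasing, or else bounded by the current scale). Raising it to the power $\beta$ produces the $\eta_t^\beta$ term. The sum of the two contributions, $O(1)+O(\eta_t)$ raised to the power $\beta$, is at most $O(1)+O(\eta_t^\beta)$ by subadditivity of $x\mapsto x^\beta$.

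\textbf{Main obstacle.} The main difficulty is getting the constants right. Because the momentum is unnormalized, and because the lemma allows a constant (non-vanishing) error, the statement is really a crude bound. The care needed is in tracking the $d$-dependent norm-equivalence factors and in justifying the interchange of expectation with the power $\beta$. Both are handled by Jensen's inequality, using concavity of $x\mapsto x^{\beta}$ and of $x\mapsto x^{1/2}$.
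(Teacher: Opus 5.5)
Your proof is correct, and it takes a simpler route than the paper's.

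\textbf{Shared skeleton.} Lemma~\ref{lemma:metric} isolates $\|\nabla f(\bm{\theta}_{t-1})\|_{1+\beta}^{1+\beta}$. H\"older with exponents $(1+\beta)/\beta$ and $1+\beta$, together with Lemma~\ref{lemma:holder}, controls the remainder; your $c_d$ can be taken to be $1$, since $(1+\beta)/\beta\ge 2$. Jensen then moves the power $\beta$ outside the expectation.

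\textbf{Where you diverge.} The difference is in bounding $\mathbb{E}\|\mathbf{m}_t-\nabla f(\bm{\theta}_{t-1})\|$. The paper splits this deviation into a martingale noise sum $\mathbf{b}_t$ and a drift sum $\mathbf{d}_t$, controlled by $L$-smoothness and $\sqrt{M_\beta}\,\eta_t$; this is where its $\eta_t^\beta$ comes from. You instead observe that the additive $C_0$ already tolerates an $O(1)$ error. So the triangle inequality and Lemma~\ref{lemma:momentum_bound} finish the job with no drift analysis.

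\textbf{Your route is also the sounder one.} The paper's identity $\bm{\delta}_t=\mathbf{b}_t+\mathbf{d}_t$ fails for unnormalized heavy-ball momentum. Already $\bm{\delta}_2=\mathbf{b}_2+\gamma\nabla f(\bm{\theta}_0)$, whose non-noise part does not shrink with the step size. The reason is that $\mathbf{d}_t$ contains nothing corresponding to the non-vanishing scale term $\bigl(\tfrac{1-\gamma^t}{1-\gamma}-1\bigr)\nabla f(\bm{\theta}_{t-1})$, which your third piece records. The paper also uses $\|\mathbf{x}\|_{1+\beta}\le\|\mathbf{x}\|_2$, which is backwards for $1+\beta<2$; you correctly keep the factor $d^{1/(1+\beta)-1/2}$. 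Both defects only affect constants, so the lemma survives, but your uniform bound is the argument that actually holds up.

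\textbf{Two small remarks.} First, in your optional sharper route, nonincreasing step sizes make the past rates larger, so they do not yield an $O(\eta_t)$ drift. What is needed is $\sum_{r<t}\gamma^{t-r}\eta_r\lesssim\eta_t$, which $\eta_r=\eta/\sqrt{r}$ satisfies with a $\gamma$-dependent constant. Second, there is a flip side to your observation that the statement is crude. An additive $C_0$ per step contributes $C_0\sum_t\eta_t/\sum_t\eta_t=C_0$ to the weighted average in Theorem~\ref{thm:convergence}. So no proof of this lemma as stated can by itself deliver a vanishing rate there.
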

Lemma~\ref{lemma:alignment} lower-bounds the expected inner product $\langle\nabla f(\boldsymbol{\theta}_{t-1}), \Phi_\beta(\mathbf{m}_t)\rangle$ by the $(1+\beta)$-power norm of the gradient, minus a bias term consisting of a constant noise floor from stochastic gradient variance and an $O(\eta_t^\beta)$ drift penalty from stale gradients. The drift term follows from $L$-smoothness and the H\"older continuity of $\Phi_\beta$ (Lemma~\ref{lemma:holder}). Crucially, the additive separation of noise and drift enables the convergence proof: the decreasing learning rate $\eta_t = \eta/\sqrt{t}$ shrinks the drift to zero while the noise floor is absorbed in the telescoping sum. For $\beta = 1$, the bound recovers the standard heavy-ball momentum analysis.

We now state the main convergence result.

\begin{theorem}[Convergence Rate]
\label{thm:convergence}
Under Assumptions~\ref{assum:smoothness}--\ref{assum:variance}, let $\{\boldsymbol{\theta}_t\}_{t=1}^T$ be generated by PowerStep with learning rate $\eta_t = \eta/\sqrt{t}$ for some $\eta > 0$ and momentum coefficient $\gamma \in [0,1)$. Then for any $\beta \in (0,1]$,
\begin{equation}
\min_{t \in [T]} \mathbb{E}\left[\|\nabla f(\boldsymbol{\theta}_{t-1})\|_{2}^{2}\right] = O\left(\frac{1}{\sqrt{T}}\right).
\end{equation}
\end{theorem}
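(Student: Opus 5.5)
The plan is the standard telescoping argument. Sum the descent inequality (Lemma~\ref{lemma:descent}) over $t=1,\dots,T$ and use Assumption~\ref{assum:bounded} to bound the left side by $f(\bm{\theta}_0)-f^*$. Then use Lemma~\ref{lemma:alignment} for the first-order term and Lemma~\ref{lemma:momentum_bound} for the second-order term, which gives
\begin{equation*}
\sum_{t=1}^T \eta_t\,\mathbb{E}\bigl[\|\nabla f(\bm{\theta}_{t-1})\|_{1+\beta}^{1+\beta}\bigr]
\le f(\bm{\theta}_0)-f^* + \sum_{t=1}^T \eta_t\, \mathcal{E}_t + \frac{L M_\beta}{2}\sum_{t=1}^T \eta_t^2 ,
\end{equation*}
where $\mathcal{E}_t$ is the alignment error. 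With $\eta_t=\eta/\sqrt t$ we have $\sum_t\eta_t\ge \eta\sqrt T$, $\sum_t \eta_t^2\le \eta^2(1+\log T)$ and $\sum_t\eta_t^{1+\beta}=O(T^{(1-\beta)/2})$. Dividing by $\sum_t\eta_t$ controls the minimum of $\mathbb{E}\|\nabla f\|_{1+\beta}^{1+\beta}$. The drift part of $\mathcal{E}_t$, which is $O(\eta_t^\beta)$ and comes from $L$-smoothness together with Lemma~\ref{lemma:holder} applied to the accumulated stale-gradient displacement $\sum_k\gamma^k(\nabla f(\bm\theta_{t-1-k})-\nabla f(\bm\theta_{t-1}))$, is handled by exactly this computation.

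The only way this route gives the stated rate is if the constant part of $C_0(1+\eta_t^\beta)$ is itself multiplied by a vanishing step-size factor. I will not treat it as a black-box constant: I will re-derive $\mathcal{E}_t$ so that the noise enters only at second order. Write $\mathbf m_t=\bar{\mathbf m}_t+\bm\xi_t$, where $\bar{\mathbf m}_t=\sum_k\gamma^k\nabla f(\bm\theta_{t-1-k})$ and $\bm\xi_t=\sum_k\gamma^k(\mathbf g_{t-k}-\nabla f(\bm\theta_{t-1-k}))$ is a martingale-type sum whose conditional mean given $\bm\theta_{t-1}$ vanishes in its newest term.

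For $\beta=1$ the map $\Phi_1$ is linear, so $\mathbb{E}\langle\nabla f(\bm\theta_{t-1}),\bm\xi_t\rangle$ can be reduced to cross terms between $\nabla f(\bm\theta_{t-1})$ and older noise. Each such cross term is bounded by $\mathbb{E}\langle \nabla f(\bm\theta_{t-1})-\nabla f(\bm\theta_{t-1-k}),\cdot\rangle\le LG\,\|\bm\theta_{t-1}-\bm\theta_{t-1-k}\|\cdot\|\cdot\|$. That is $O(k\,\eta_{t-k}\sqrt{M_\beta}\,\sigma)$, and after the geometric sum over $k$ it is $O(\eta_t)$, which telescopes harmlessly.

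For $\beta<1$ I will aim for the same structure. Compare $\Phi_\beta(\mathbf m_t)$ with $\Phi_\beta(\mathbf m_t^{\circ})$, where $\mathbf m_t^{\circ}$ replaces every $\nabla f(\bm\theta_{t-1-k})$ by $\nabla f(\bm\theta_{t-1})$; Lemma~\ref{lemma:holder} makes the difference an $O(\eta_t^\beta)$ drift term. Then try to show, by conditioning on the noise history, that $\mathbb{E}\langle\nabla f(\bm\theta_{t-1}),\Phi_\beta(\mathbf m_t^{\circ})\rangle\ge c\,\mathbb{E}\|\nabla f(\bm\theta_{t-1})\|_{1+\beta}^{1+\beta}-O(\eta_t^\beta)$ with $c>0$.

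Given such a bound, the rest is routine. The bound yields $\min_t\mathbb{E}\|\nabla f\|_{1+\beta}^{1+\beta}=O(T^{-1/2}\log T + T^{-\beta/2})$. To reach the $\ell_2$ statement I will convert norms using $\|\mathbf x\|_2\le\|\mathbf x\|_{1+\beta}$ and $\|\nabla f\|_2\le G$: this gives $\|\nabla f\|_2^2\le G^{1-\beta}\|\nabla f\|_{1+\beta}^{1+\beta}$. Jensen then passes this to expectations, and the resulting $d$- and $G$-dependent constants go into the $O(\cdot)$.

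The main obstacle is the lower bound on $\mathbb{E}\langle\nabla f,\Phi_\beta(\mathbf m^\circ_t)\rangle$ when $\beta<1$ and $\gamma$ is fixed. The nonlinearity means $\mathbb{E}\,\Phi_\beta(\bar{\mathbf m}+\bm\xi)\neq\Phi_\beta(\bar{\mathbf m})$. The natural estimate, via Lemma~\ref{lemma:holder} and Young's inequality, only gives a constant error of order $\sigma^{1+\beta}/(1-\gamma)^{1+\beta}$, and that is precisely the non-vanishing floor the statement excludes. My first attempt is a coordinatewise argument. For symmetric noise the scalar map $a\mapsto\mathbb{E}\,\Phi_\beta(a+\xi)$ is odd and monotone, so $g_i\,\mathbb{E}\Phi_\beta(m_i)\ge0$, and it behaves like $c_\beta(\sigma)\,|a|$ near $0$ and like $|a|^\beta$ for large $|a|$. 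Paired with $g_i$ this gives a lower bound of the form $\min(|g_i|^{1+\beta}, c\,g_i^2)$, with no additive constant.

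If that argument goes through, the theorem follows with $\mathbb{E}\|\nabla f\|_2^2$ controlled through this Huber-type potential. If it requires symmetry or another distributional assumption beyond Assumptions~\ref{assum:unbiased}--\ref{assum:variance}, that is where the proof must either add the assumption or restrict to $\beta=1$.
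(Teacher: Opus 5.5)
Your telescoping skeleton is the same as the paper's, and your objection to it is correct. The paper keeps the constant part of $C_0(1+\eta_t^\beta)$ from Lemma~\ref{lemma:alignment} and bounds $C_0\sum_t\eta_t\le 2C_0\eta\sqrt{T}$. It then calls the whole right-hand side $O(\sqrt{T})$ and divides by $\sum_t\eta_t=\Theta(\sqrt{T})$ to claim $O(1/\sqrt{T})$, but that quotient is only $O(1)$. So the paper does not supply the step you are missing. That step is the lower bound $\mathbb{E}\langle\nabla f(\bm\theta_{t-1}),\Phi_\beta(\mathbf m_t^\circ)\rangle\ge c\,\mathbb{E}\|\nabla f(\bm\theta_{t-1})\|_{1+\beta}^{1+\beta}-O(\eta_t^\beta)$ with no additive constant, and it is a genuine gap. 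For $\beta<1$ it cannot be closed under Assumptions~\ref{assum:smoothness}--\ref{assum:variance}. Take $d=1$ and $\gamma=0$, so $\mathbf m_t^\circ=\mathbf m_t=\mathbf g_t$ and there is no drift at all. Let $f(\theta)=\sqrt{1+\theta^2}$, with i.i.d.\ noise equal to $-1$ with probability $0.9$ and to $9$ with probability $0.1$ (mean $0$, variance $9$). Then $h(a)=\mathbb{E}\,\Phi_\beta(a+\xi)$ is strictly increasing with $h(0)=-0.9+0.1\cdot 9^{\beta}<0<h(1)$, so $h(a^*)=0$ for some $a^*\in(0,1)$. Let $\theta^*$ be the point with $f'(\theta^*)=a^*$. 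There your target quantity equals $a^*h(a^*)=0$, while $|f'(\theta^*)|^{1+\beta}>0$. Moreover, $\theta^*$ is the globally attracting zero of the mean field $-h(f'(\theta))$, so the iterates concentrate there and $\mathbb{E}|f'(\theta_{t-1})|^2\to(a^*)^2>0$. Started at $\theta_0=\theta^*$ with small $\eta$, $\mathbb{E}|f'(\theta_{t-1})|^2$ stays bounded away from zero for all $t$. The theorem is claimed for every $\gamma\in[0,1)$, so this already refutes it for $\beta<1$; for $\gamma>0$ the filtered noise $\sum_k\gamma^k\xi_{t-k}$ is still skewed in general. An extra assumption such as conditional symmetry is therefore unavoidable. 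Even with it, your conditioning step is incomplete for $\gamma>0$. The older noise $\sum_{k\ge1}\gamma^k(\mathbf g_{t-k}-\nabla f(\bm\theta_{t-k-1}))$ is $\mathcal F_{t-1}$-measurable and correlated with $\bm\theta_{t-1}$. After conditioning it acts as a non-vanishing bias rather than symmetric noise, so the odd/monotone scalar argument does not apply to it as stated.

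Second, even if you grant that lower bound, your bookkeeping does not reach the stated rate. You arrive at $O(T^{-1/2}\log T+T^{-\beta/2})$, and for $\beta<1$ the drift contribution $\sum_t\eta_t^{1+\beta}/\sum_t\eta_t\asymp T^{-\beta/2}$ is much larger than $T^{-1/2}$. Lemma~\ref{lemma:holder} can only turn an $O(\eta_t)$ displacement of the momentum into an $O(\eta_t^\beta)$ error. To get $T^{-1/2}$ you would need an $O(\eta_t)$ drift error, for instance from smoothing of $\Phi_\beta$ by the noise, which again needs distributional assumptions. You must also account for the $\log T$ coming from $\sum_t\eta_t^2$. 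That factor is present even at $\beta=1$, where your cross-term argument is otherwise sound.

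Two parts of your plan are better than the paper's. First, comparing against $\mathbf m_t^\circ$, whose deterministic part is $\tfrac{1-\gamma^t}{1-\gamma}\nabla f(\bm\theta_{t-1})$, is correct. The paper's proof of Lemma~\ref{lemma:alignment} compares $\mathbf m_t$ with $\nabla f(\bm\theta_{t-1})$ itself. The deterministic part of that difference is $\sum_{k\ge1}\gamma^k\nabla f(\bm\theta_{t-1-k})$, of order $\gamma G/(1-\gamma)$, not $O(\eta_t)$. Second, your conversion $\|\nabla f\|_2^2\le G^{1-\beta}\|\nabla f\|_{1+\beta}^{1+\beta}$ is valid. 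The paper's last step instead uses both the norm-equivalence inequality and Jensen's inequality (for the concave map $x\mapsto x^{(1+\beta)/2}$) in the wrong direction.
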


Thus, PowerStep achieves the $O(1/\sqrt{T})$ convergence rate for non-convex stochastic optimization under standard assumptions, attaining  the optimal rate bound for this problem class~\citep{arjevani2023lower}. 
Their result states that any stochastic first-order method requires at least $\epsilon^{-4}$ gradient queries to find an $\epsilon$-stationary point. Inverting this relationship, $T = \Omega(\epsilon^{-4})$ implies 
$\epsilon = O(T^{-1/4})$ and therefore $\|\nabla f\|^2 \leq \epsilon^2 = O(T^{-1/2})$. Thus, the query complexity lower bound of $\epsilon^{-4}$ is equivalent to an $\Omega(1/\sqrt{T})$ convergence rate for the squared gradient norm, confirming that PowerStep's rate is optimal for this problem class.

\newpage

\section{Low-precision training}
\label{sec:low_precision}

PowerStep's single-buffer design naturally lends itself to aggressive quantization for further memory savings. We advocate the following implementation that reuses the gradient buffer $\mathbf{g}_t$ for in-place computation.
\begin{align}
\mathbf{g}_t &\leftarrow \nabla f_t(\bm{\theta}_{t-1}), \\
\mathbf{g}_t &\leftarrow \mathbf{g}_t + \gamma \cdot \text{dequantize}(\mathbf{m}_{t-1}), \\
\mathbf{m}_t &\leftarrow \text{quantize}(\mathbf{g}_t), \\
\mathbf{g}_t &\leftarrow \operatorname{sign}(\mathbf{g}_t) \odot |\mathbf{g}_t|^\beta, \\
\bm{\theta}_t &\leftarrow \bm{\theta}_{t-1} - \eta_t (\mathbf{g}_t + \lambda \bm{\theta}_{t-1}).
\end{align}
Crucially, only the momentum buffer $\mathbf{m}_t$ is stored in low precision; both the signed power transform and the parameter update are then computed in full precision. We compress $\mathbf{m}_t$ from \texttt{fp32} to \texttt{int8} using the blockwise quantization technique of \citet{dettmers2022bit}. As shown in Section~\ref{sec:quantization}, this configuration preserves both convergence speed and numerical stability while reducing the optimizer memory footprint by approximately $8\times$ compared to full-precision Adam. In contrast, Adam diverges under the same \texttt{int8} quantization. This failure stems from the high precision-sensitivity of its second-moment estimator~\citep{li2024memory,han2025qadam,tang2025convergence}, a vulnerability that PowerStep eliminates entirely by design. We provide a detailed analysis in Appendix~\ref{sec:quantization-analysis}.

\section{Experiments}\label{sec:exp}

\subsection{Setup}

\paragraph{Models.}
We evaluate PowerStep on a diverse suite of Transformer language models for pretraining, spanning 124M to 235B parameters. The suite covers both dense and MoE architectures. Dense models include the GPT-2 series~\citep{radford2019language} (124M and 350M)
and the Qwen3 series~\citep{yang2025qwen3} (0.6B, 1.7B, 4B, 8B and 32B). MoE models include
DeepSeek-V2-Lite~\citep{deepseekai2024deepseekv2} (16B total, 2.4B active) and two Qwen3-MoE
variants (30B-A3B and 235B-A22B)~\citep{yang2025qwen3}. All model parameters are kept in \texttt{bf16} precision.

\paragraph{Datasets.} GPT-2 models are trained on the OpenWebText corpus \citep{Gokaslan2019OpenWeb}. All Qwen3 and DeepSeek-V2 models are trained on the C4 dataset \citep{raffel2020exploring}. 

\paragraph{Optimizers.}
We compare PowerStep against AdamW \citep{loshchilov2019decoupled} and several closely related optimizers that also completely eliminate the full second-moment buffer: pbSGDM \citep{zhou2021pbsgd}, SignSGD with momentum \citep{bernstein2018signsgd,balles2018dissecting} and AdamS \citep{zhang2025adams}. All optimizer states are maintained in \texttt{fp32} precision for the small-scale experiments (Sections~\ref{sec:comparison}--\ref{sec:hyperparameter}). For the quantization and large-scale experiments (Sections~\ref{sec:quantization} and~\ref{sec:large_scale}), precision is indicated explicitly in the text and figures.

\paragraph{Hyperparameters.}
For AdamW and AdamS, we set $\beta_1 = 0.9$, $\beta_2 = 0.95$ and $\epsilon = 10^{-8}$. For PowerStep and pbSGDM, we use momentum coefficient $\gamma = 0.9$ and power exponent $\beta = 0.1$; an ablation study of both hyperparameters is provided in Section~\ref{sec:hyperparameter}. The learning rate ranges from $6 \times 10^{-4}$ to $2 \times 10^{-4}$ depending on model size (see Table~\ref{tab:lr} in Appendix \ref{sec:exp_app}). All runs employ decoupled weight decay $\lambda = 0.1$, global gradient norm clipping at $1.0$ and a 2000-step linear warmup followed by cosine decay. For MoE models, an auxiliary load-balancing loss with coefficient $1 \times 10^{-3}$ is applied. For GPT-2, we use a sequence length of 1024 and a global batch size of 480. For all other models, we use a sequence length of 2048 and a global batch size of 256.
More details are provided in Appendix \ref{sec:exp_app}.

\paragraph{Infrastructure.}
All experiments are conducted on Huawei Ascend 910C NPU clusters. GPT-2 models are trained using the nanoGPT codebase\footnote{\url{https://github.com/karpathy/nanoGPT}}. Billion-parameter models are trained with Megatron-Core\footnote{\url{https://github.com/NVIDIA/Megatron-LM}} and MindSpeed-LLM\footnote{\url{https://gitcode.com/Ascend/MindSpeed-LLM}} frameworks.

\newpage

\subsection{Comparison with prior methods}
\label{sec:comparison}

We evaluate PowerStep against AdamW, AdamS, pbSGDM and SignSGD on small-scale models, ranging from 124M to 8B parameters.  Figure~\ref{fig:small_exp} reports training loss trajectories.
Across all model scales, PowerStep matches the convergence speed of AdamW. In  contrast, the related memory-efficient optimizers, pbSGDM, SignSGD and AdamS, exhibit slower convergence or, for the larger models, catastrophic training instability. The validation loss can be found in Appendix \ref{sec:val_loss}.

\begin{figure}[ht]
    \centering
    \subfloat[GPT-2-Small (124M)]{\includegraphics[width=0.45\linewidth]{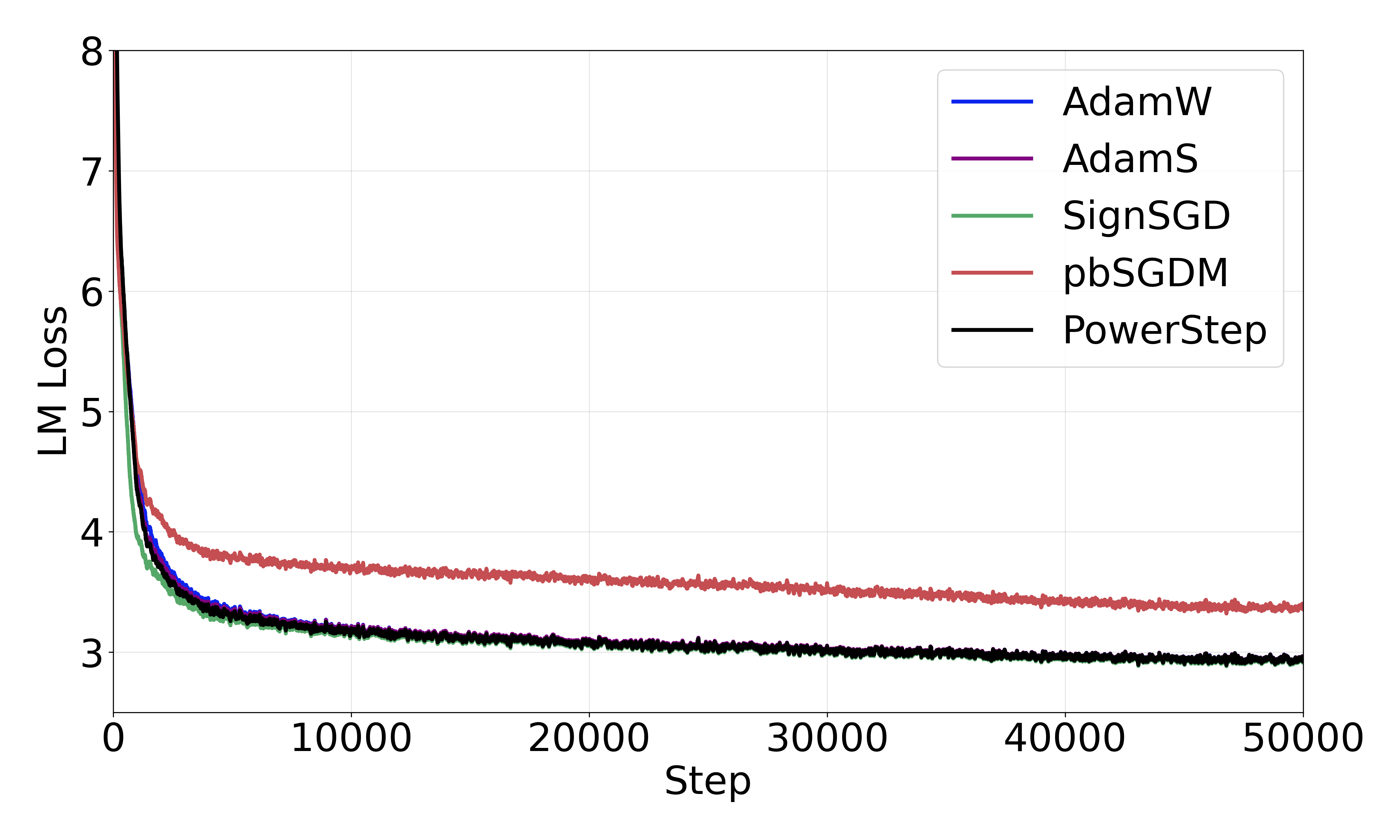}} 
    \subfloat[GPT-2-Medium (350M)]{\includegraphics[width=0.45\linewidth]{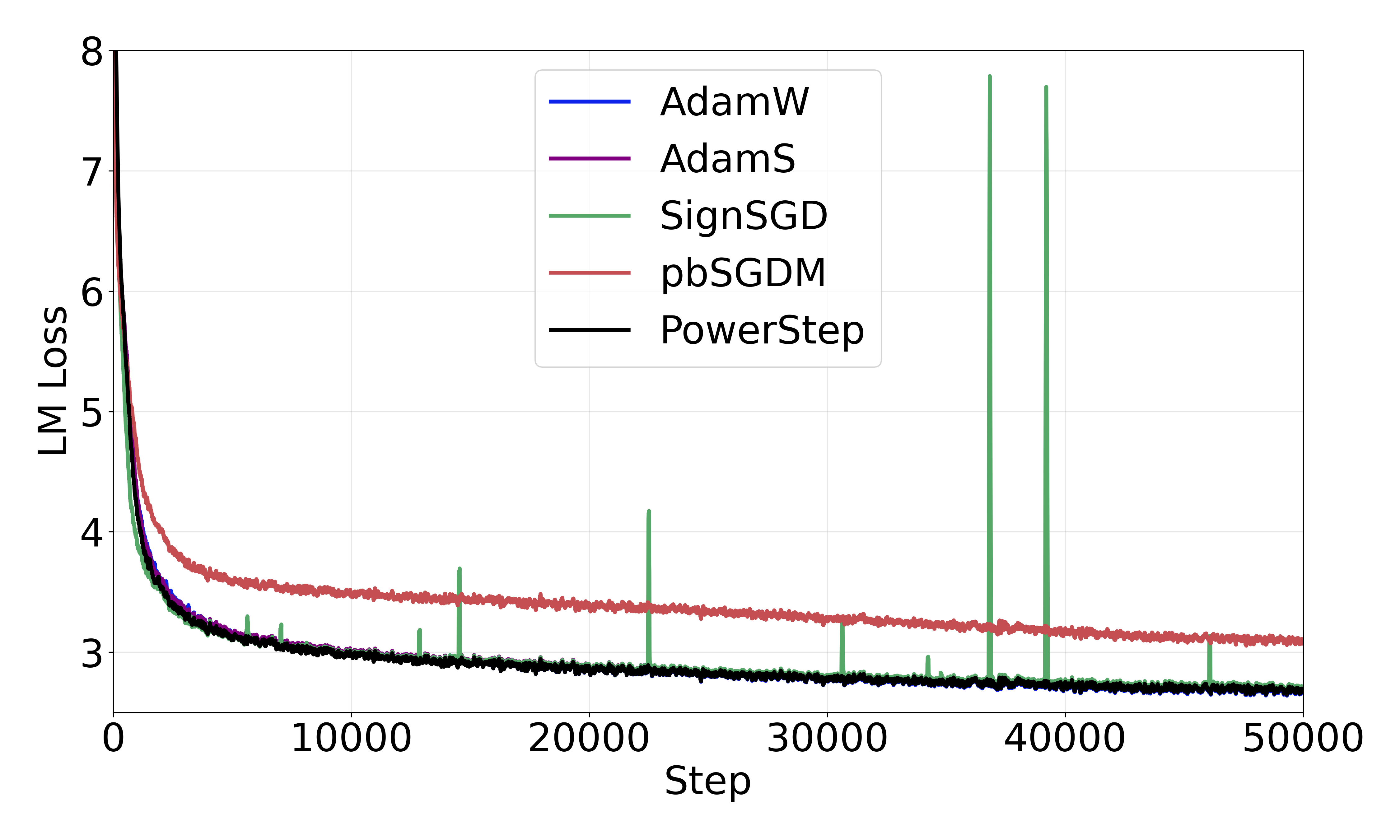}} 
    
    \subfloat[Qwen3-0.6B]{\includegraphics[width=0.45\linewidth]{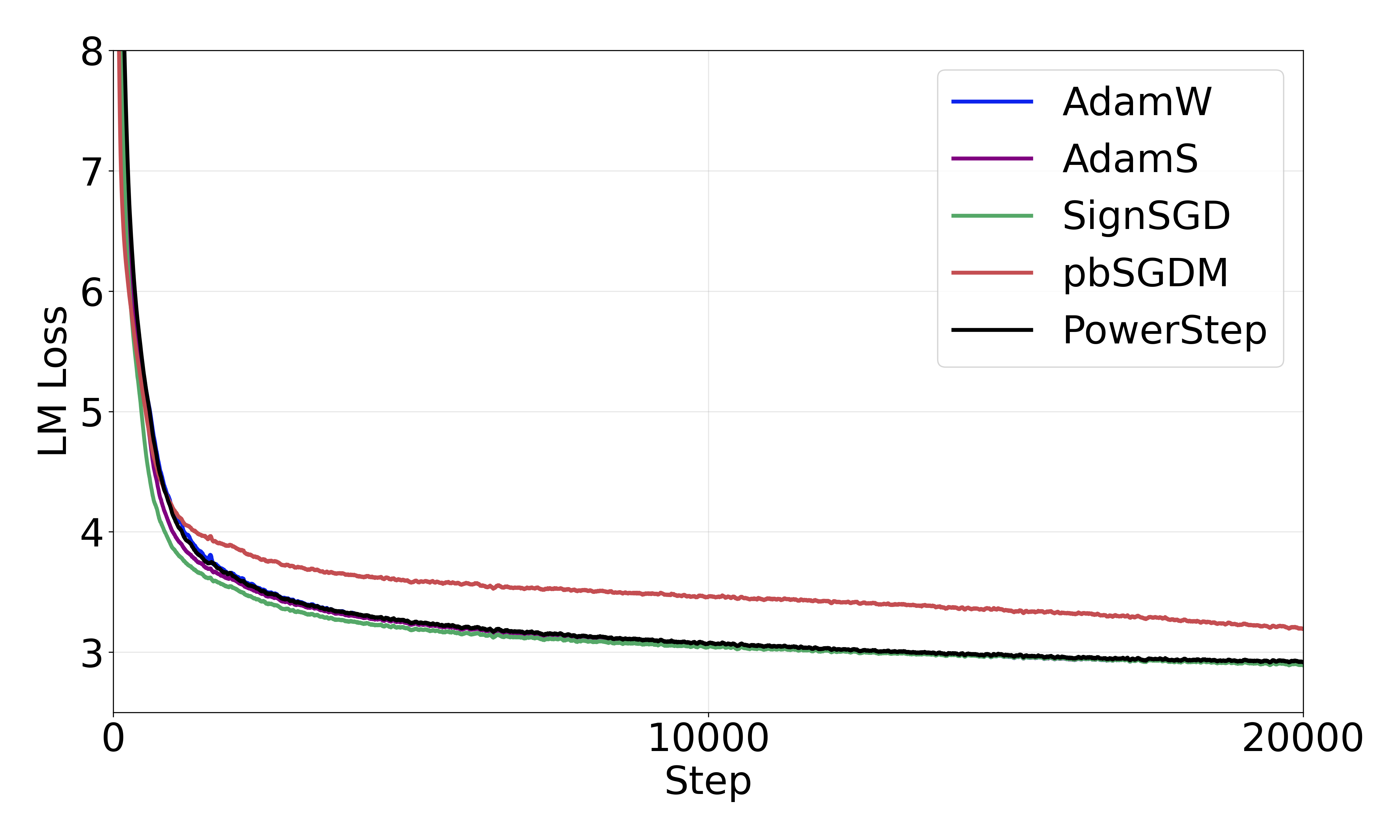}} 
    \subfloat[Qwen3-1.7B]{\includegraphics[width=0.45\linewidth]{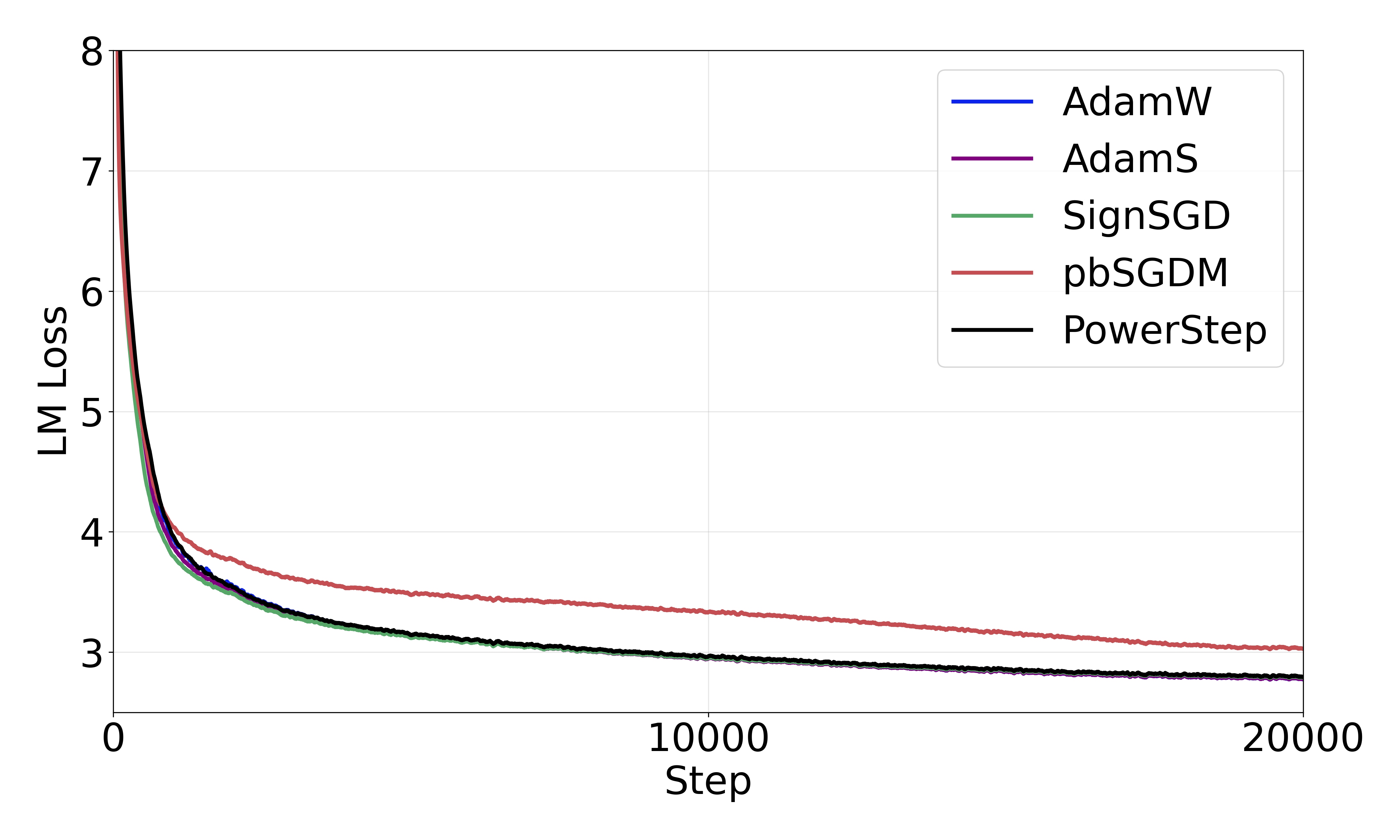}}
    
    \subfloat[Qwen3-4B]{\includegraphics[width=0.45\linewidth]{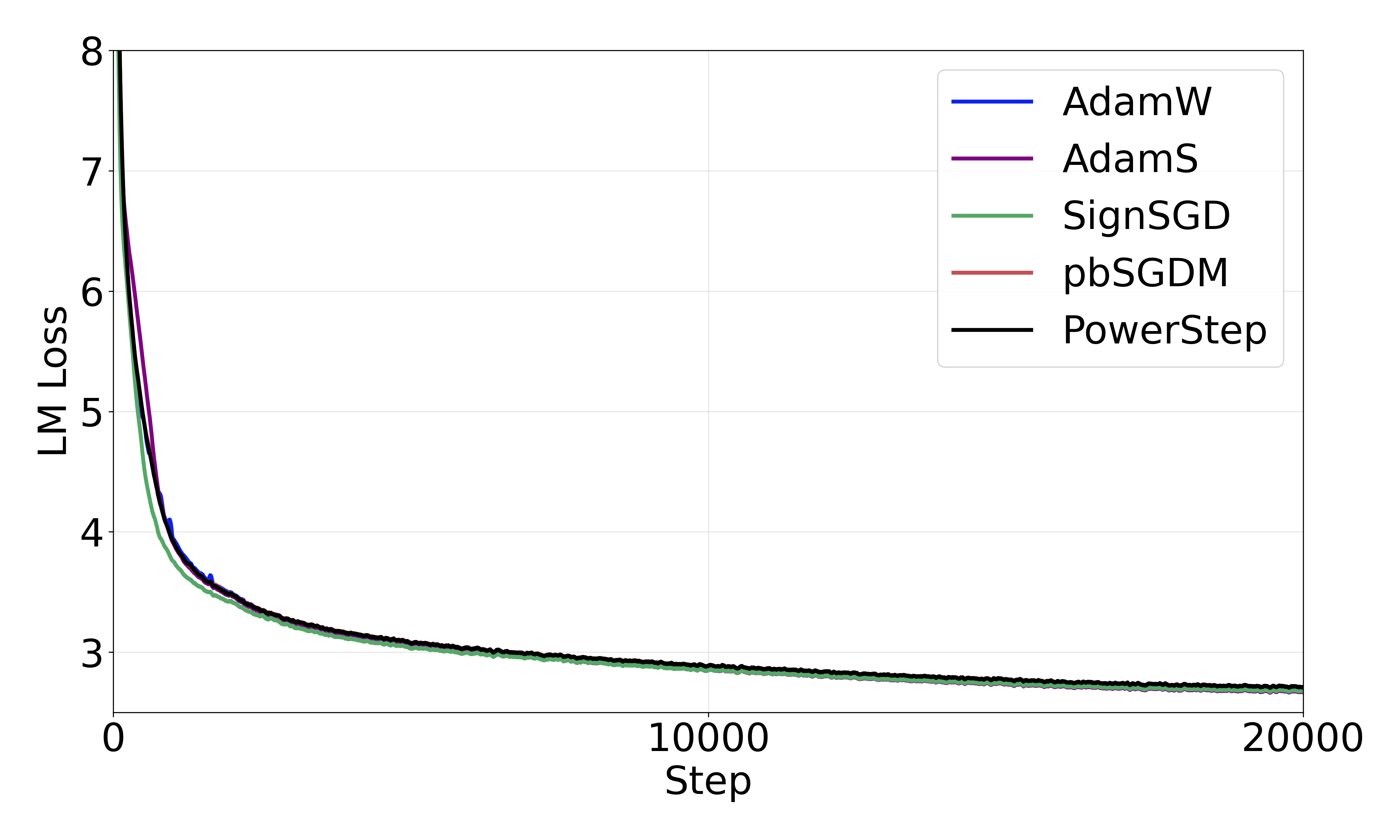}} 
    \subfloat[Qwen3-8B]{\includegraphics[width=0.45\linewidth]{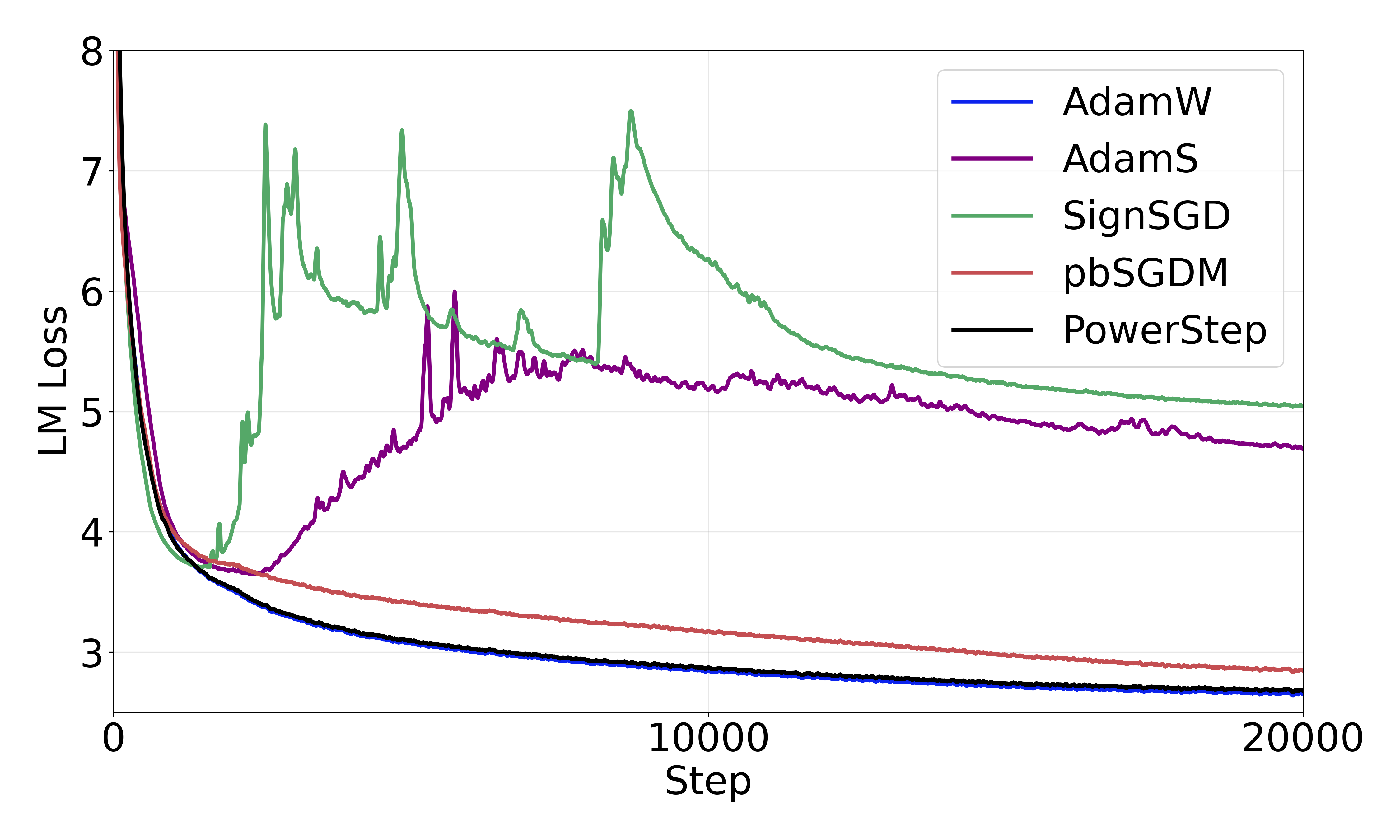}}
    \caption{Training loss comparison across model scales. PowerStep matches the convergence speed and stability of AdamW while using half the optimizer memory. Other memory-efficient optimizers, pbSGDM, SignSGD and AdamS, exhibit slower convergence or instability on larger models.}
    \label{fig:small_exp}
\end{figure}

\subsection{Hyperparameter sensitivity}
\label{sec:hyperparameter}

We analyze the sensitivity of PowerStep to its two key hyperparameters: power exponent $\beta$ and momentum coefficient $\gamma$. We conduct an ablation on GPT-2-Medium (350M), varying $\beta \in \{0, 0.1, 0.2\}$ and $\gamma \in \{0.85, 0.9, 0.95\}$.
Figure~\ref{fig:hyper}(a) shows that $\beta = 0.1$ provides the best trade-off. A value of $\beta = 0.0$ (equivalent to SignSGD with momentum) leads to initial rapid progress but ultimately collapses, underscoring the necessity of retaining some magnitude information. Conversely, $\beta = 0.2$ converges more slowly due to insufficient nonlinearity. As shown in Figure~\ref{fig:hyper}(b), the method is robust to the momentum coefficient $\gamma$ in the range $[0.85, 0.95]$. We also provide an ablation study on learning rates in Appendix \ref{sec:lr}.

\begin{figure}[h!]
    \centering    
    \subfloat[Varying power exponent $\beta$]{\includegraphics[width=0.45\linewidth]{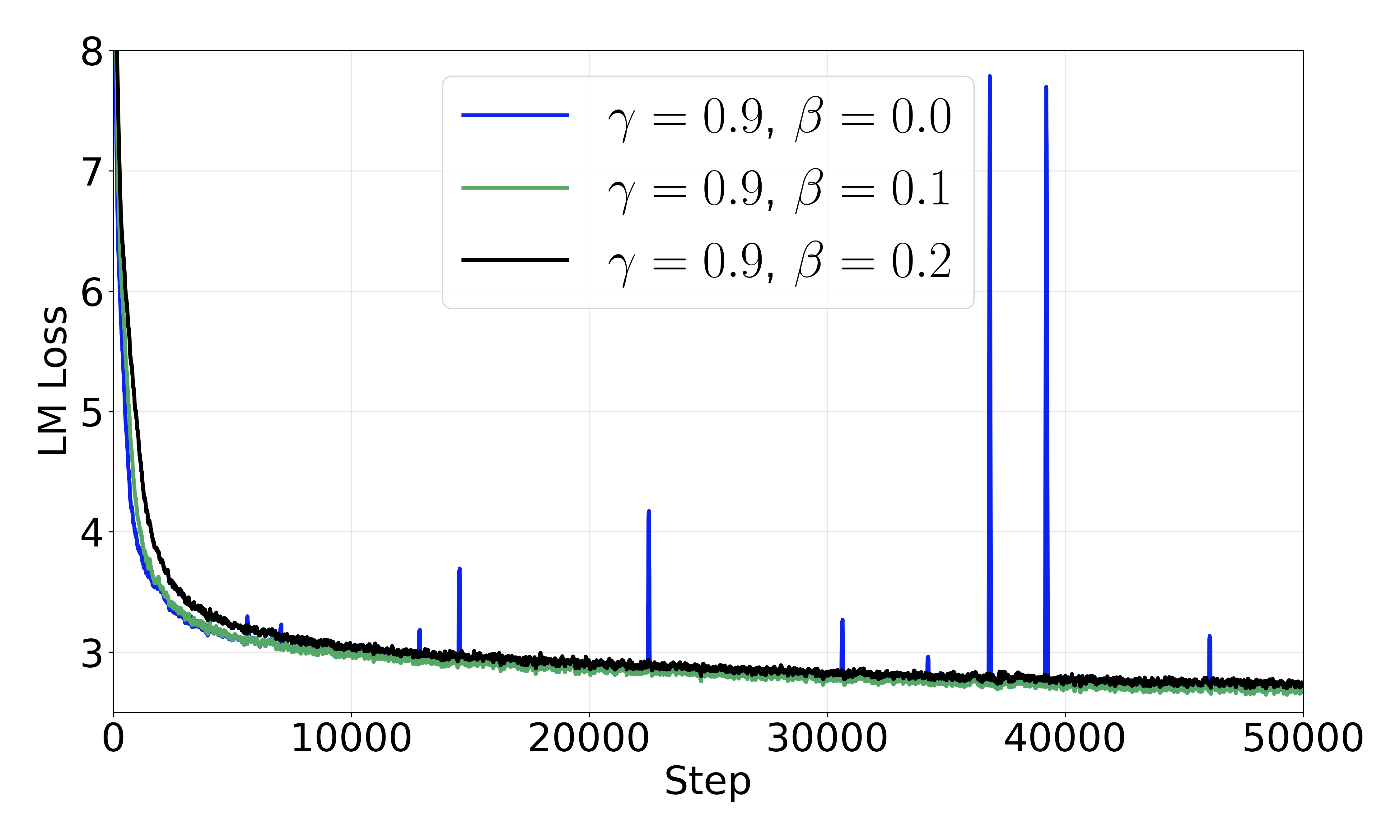}}
    \subfloat[Varying momentum coefficient $\gamma$]{\includegraphics[width=0.45\linewidth]{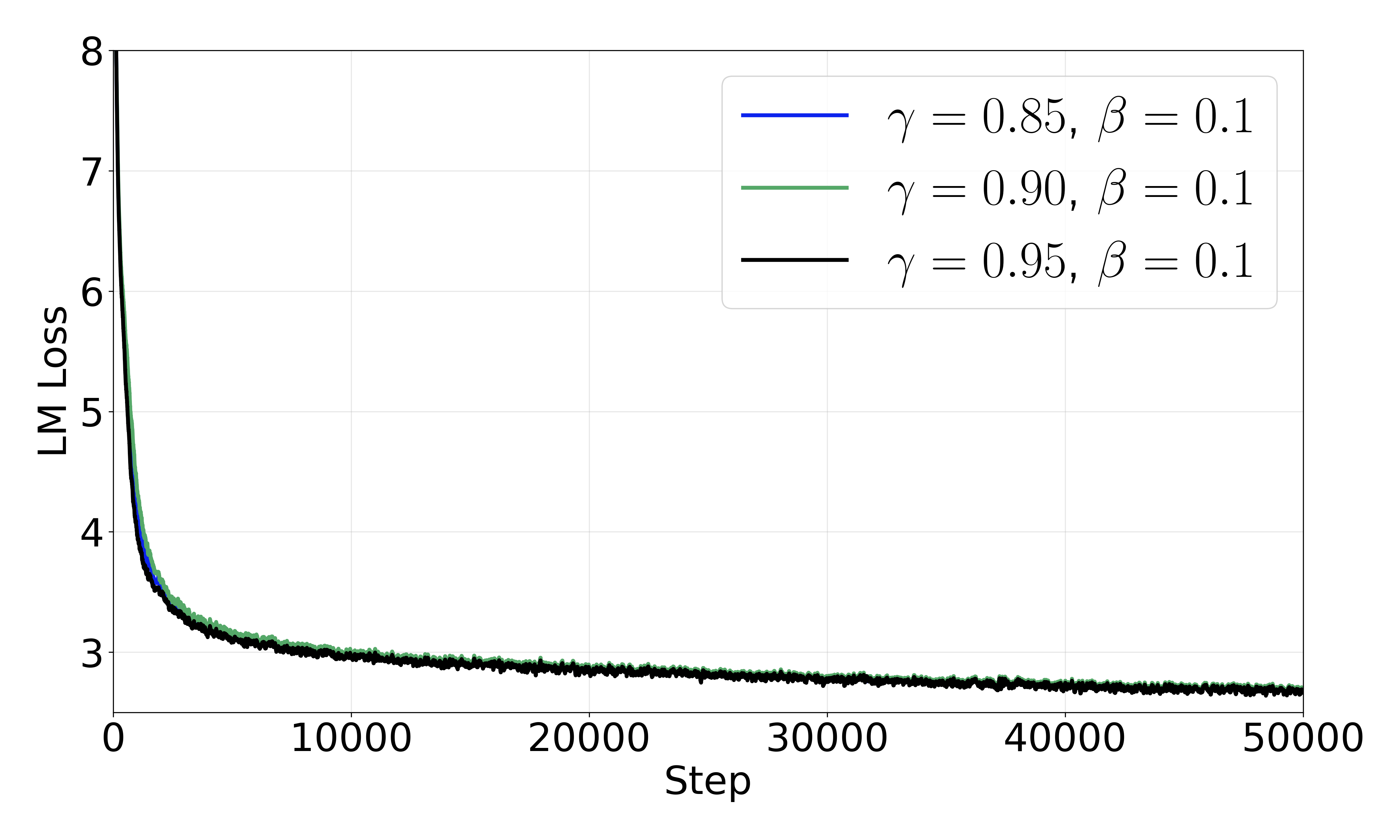}} \quad
    
    \caption{Hyperparameter sensitivity on GPT-2-Medium (350M). \textbf{(a)} Power exponent $\beta$: $\beta = 0.1$ balances rapid early progress with long-run stability; $\beta = 0$ (SignSGD) collapses, while $\beta = 0.2$ converges slowly. \textbf{(b)} Momentum coefficient $\gamma$: performance is robust across $\gamma \in [0.85, 0.95]$, with $\gamma = 0.9$ providing a reliable default.}
    \label{fig:hyper}
\end{figure}

\subsection{Quantization}
\label{sec:quantization}

A key advantage of PowerStep's single-buffer design is its amenability to aggressive quantization. We compare AdamW and PowerStep under a naive \texttt{int8} quantization of optimizer states on GPT-2-Small and GPT-2-Medium with blockwise dynamic quantization \citep{dettmers2022bit} of block size 128, without sophisticated techniques such as stochastic rounding, for all layers (including embedding). As shown in Figure~\ref{fig:quant}, AdamW training collapses immediately under the \texttt{int8} quantization, a known failure mode caused by the catastrophic accumulation of quantization error in the second-moment estimator \citep{li2024memory,han2025qadam,tang2025convergence}. 
In contrast, PowerStep maintains stability and convergence speed, matching its full-precision counterpart (see Appendix~\ref{sec:quantization-analysis} for further analysis).

\begin{figure}[h]
    \centering
    \subfloat[GPT-2-Small]{\includegraphics[width=0.45\linewidth]{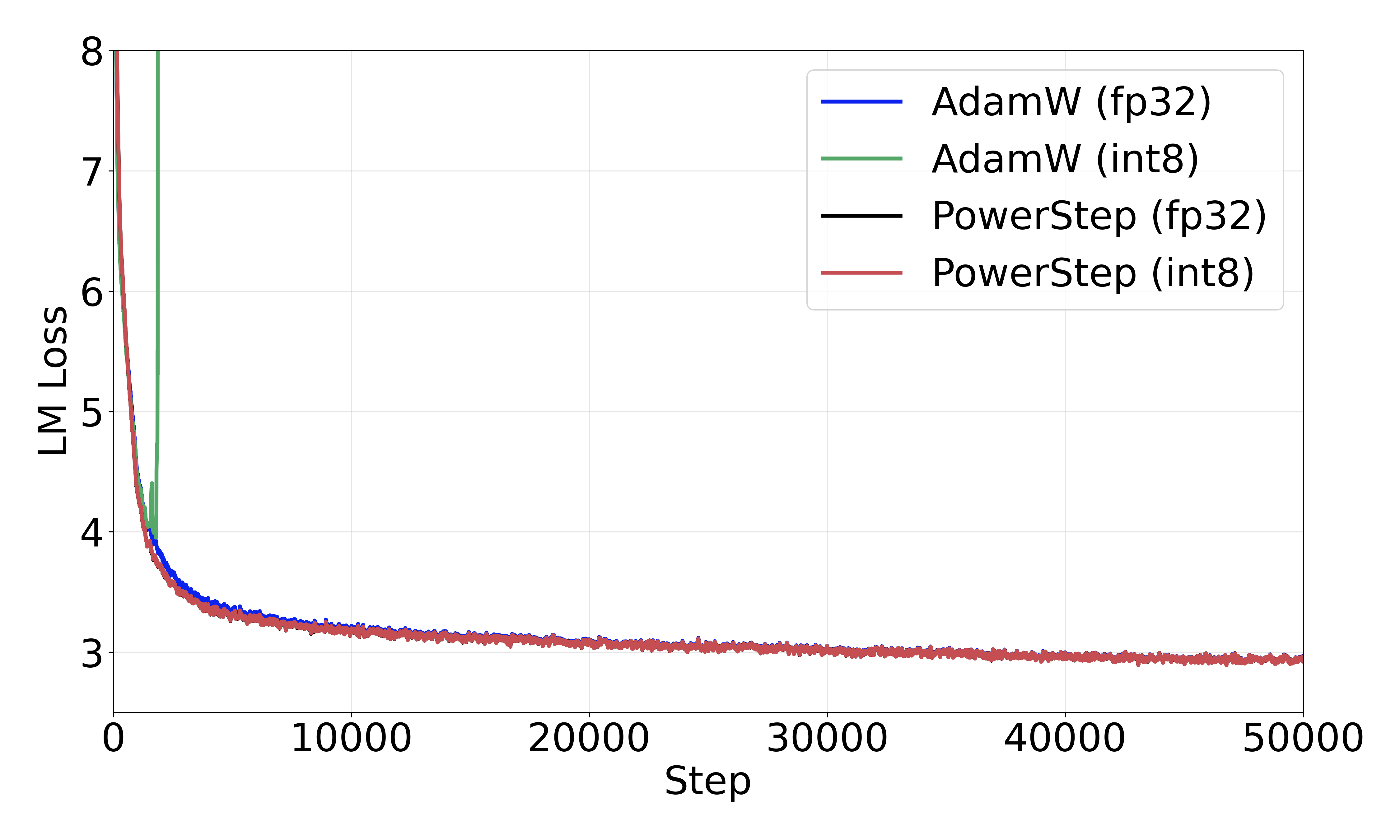}} \quad
    \subfloat[GPT-2-Medium]{\includegraphics[width=0.45\linewidth]{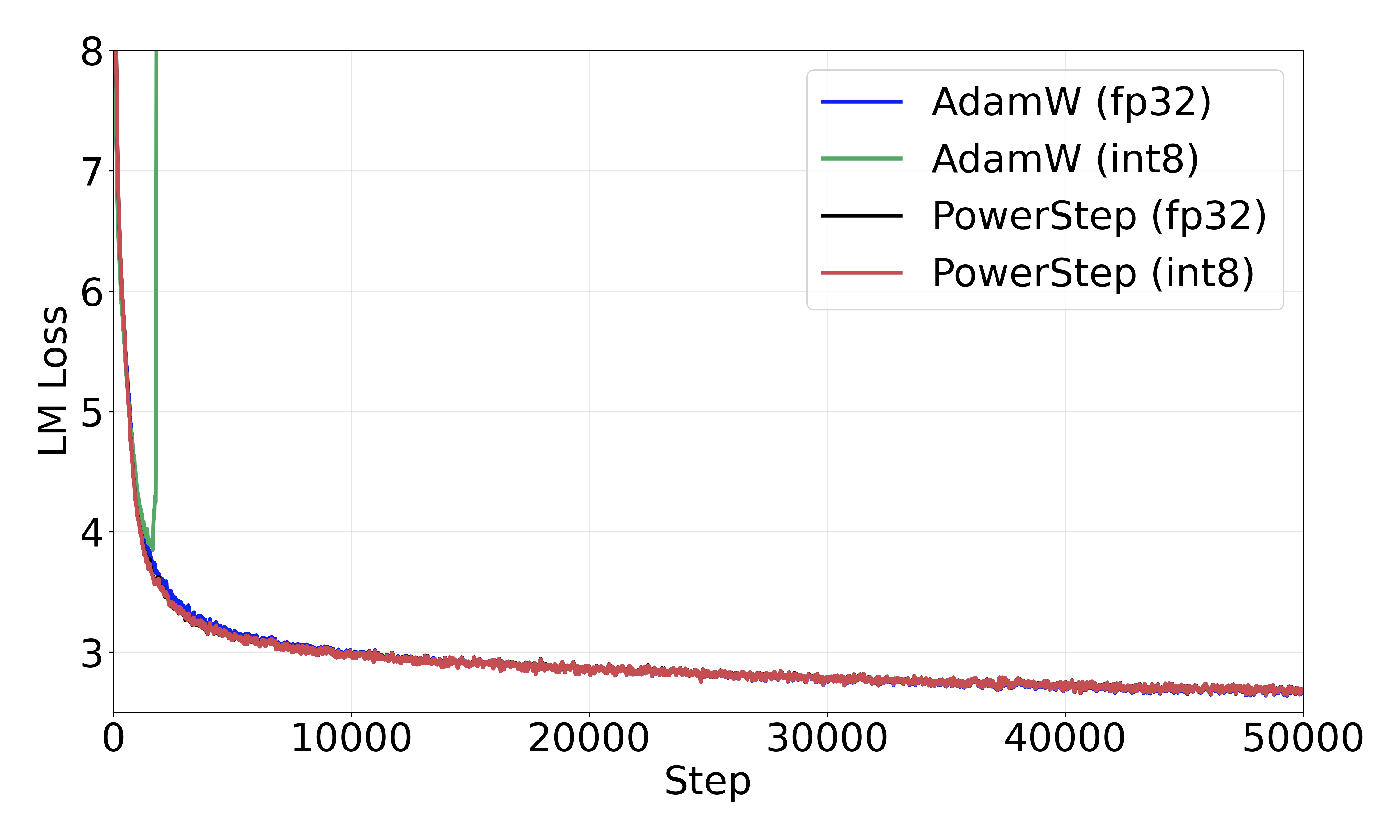}}
    \caption{Training loss under \texttt{int8} optimizer state quantization. AdamW diverges under the quantization while PowerStep remains stable and matches full-precision convergence.}
    \label{fig:quant}
\end{figure}

\subsection{Scaling to large models}
\label{sec:large_scale}

Finally, we evaluate PowerStep on large-scale models to verify its scalability. We train DeepSeek-V2-Lite (16B), Qwen3-30B-A3B, Qwen3-32B and Qwen3-235B-A22B, spanning both dense and MoE architectures, and compare full-precision AdamW against PowerStep with \texttt{int8} quantization.
Figure~\ref{fig:large_exp} reports training loss trajectories. Across all four models, PowerStep (\texttt{int8}) matches the convergence of AdamW (\texttt{fp32}) without divergence or degradation. We do not observe noticeable wall-clock time or throughput differences between PowerStep and AdamW, since all additional operations (sign, power, and quantization) are elementwise and contribute negligible overhead. Table~\ref{tab:memory} reports optimizer state memory per NPU. PowerStep reduces the optimizer memory footprint by approximately $8\times$ relative to full-precision AdamW.  Table~\ref{tab:valid_loss_large} in Appendix~\ref{sec:val_loss} reports final validation loss, confirming that PowerStep with \texttt{int8} quantization matches full-precision AdamW with negligible difference in validation performance.

\begin{figure}[ht]
    \centering
    \subfloat[DeepSeek-V2-Lite (16B)]{\includegraphics[width=0.45\linewidth]{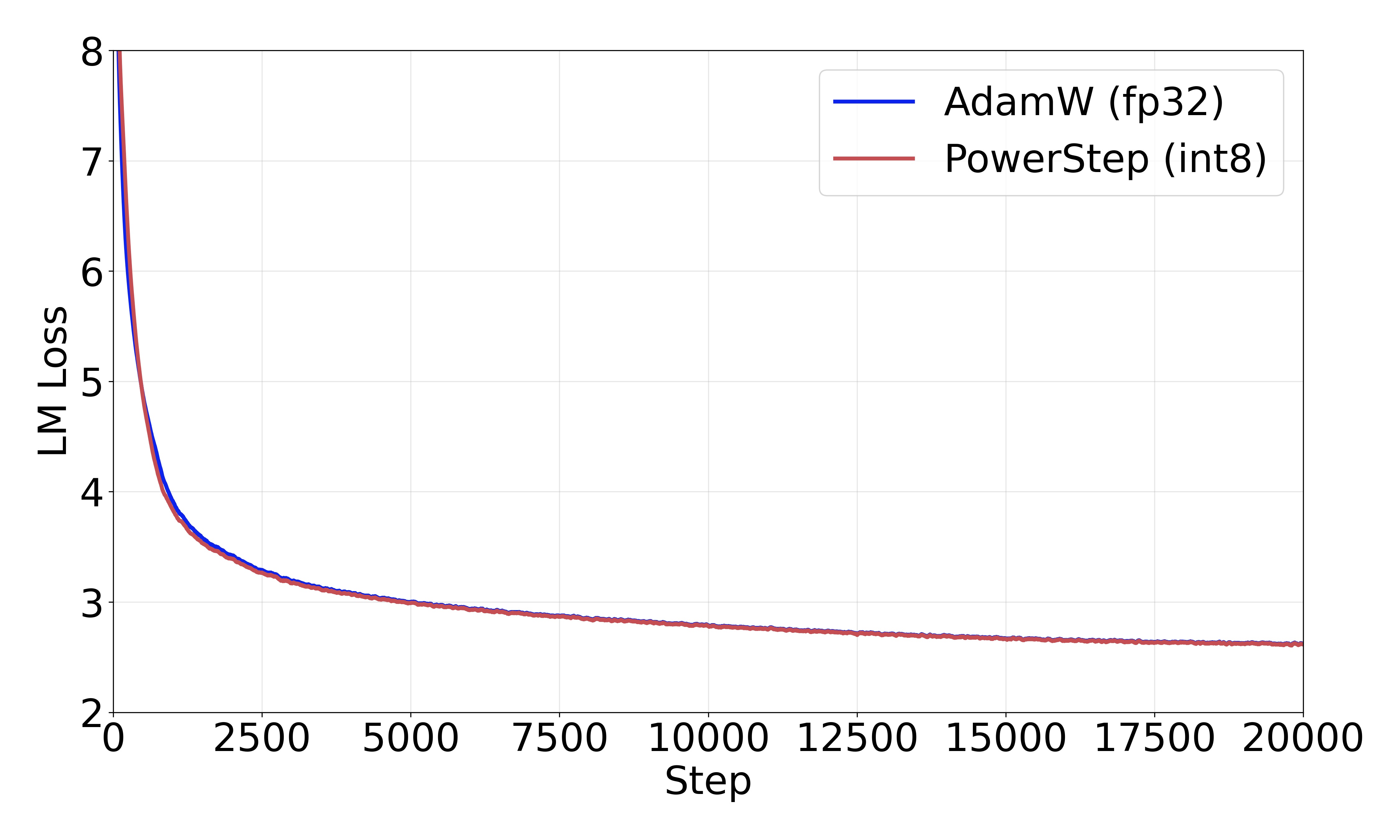}} \quad
    \subfloat[Qwen3-30B-A3B]{\includegraphics[width=0.45\linewidth]{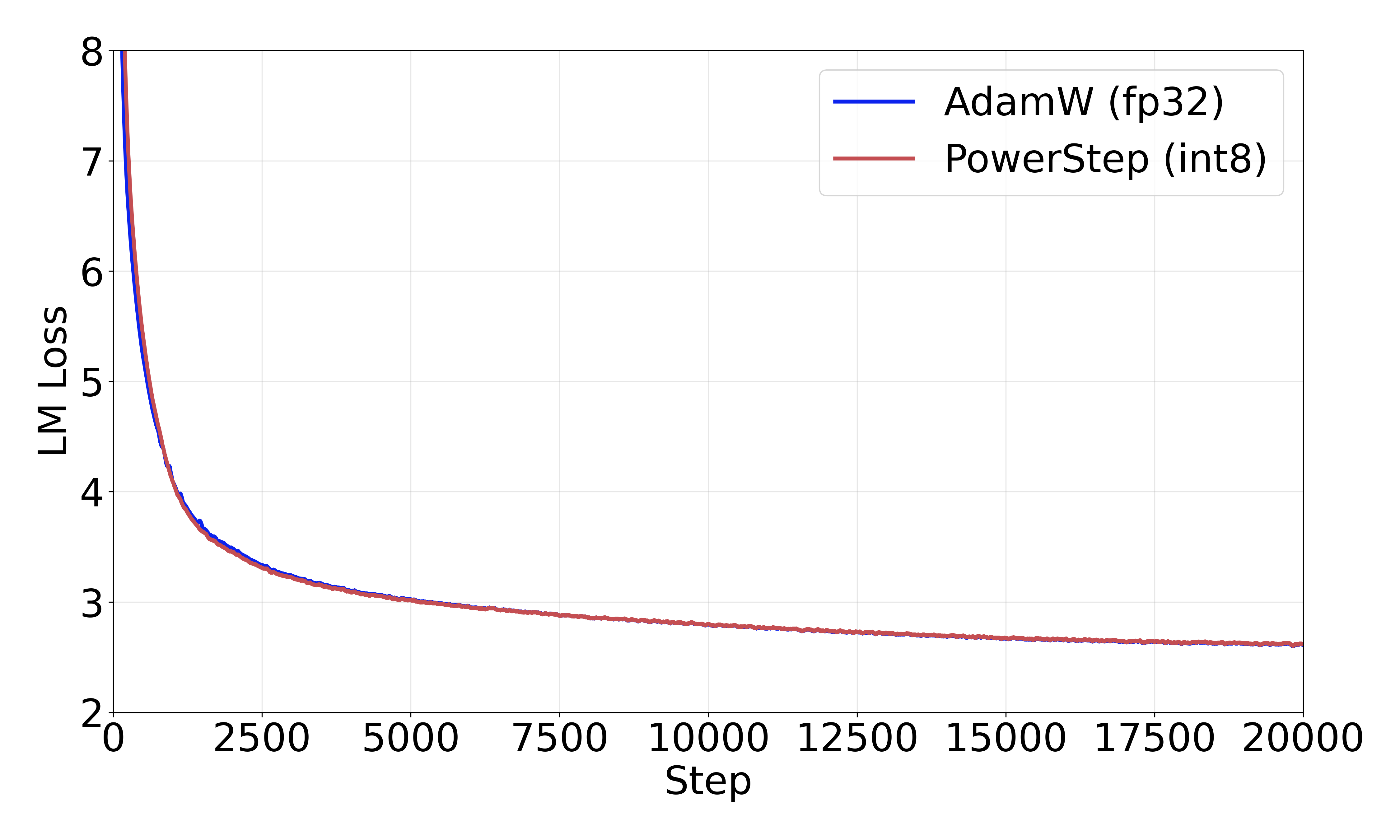}} \\
    \subfloat[Qwen3-32B]{\includegraphics[width=0.45\linewidth]{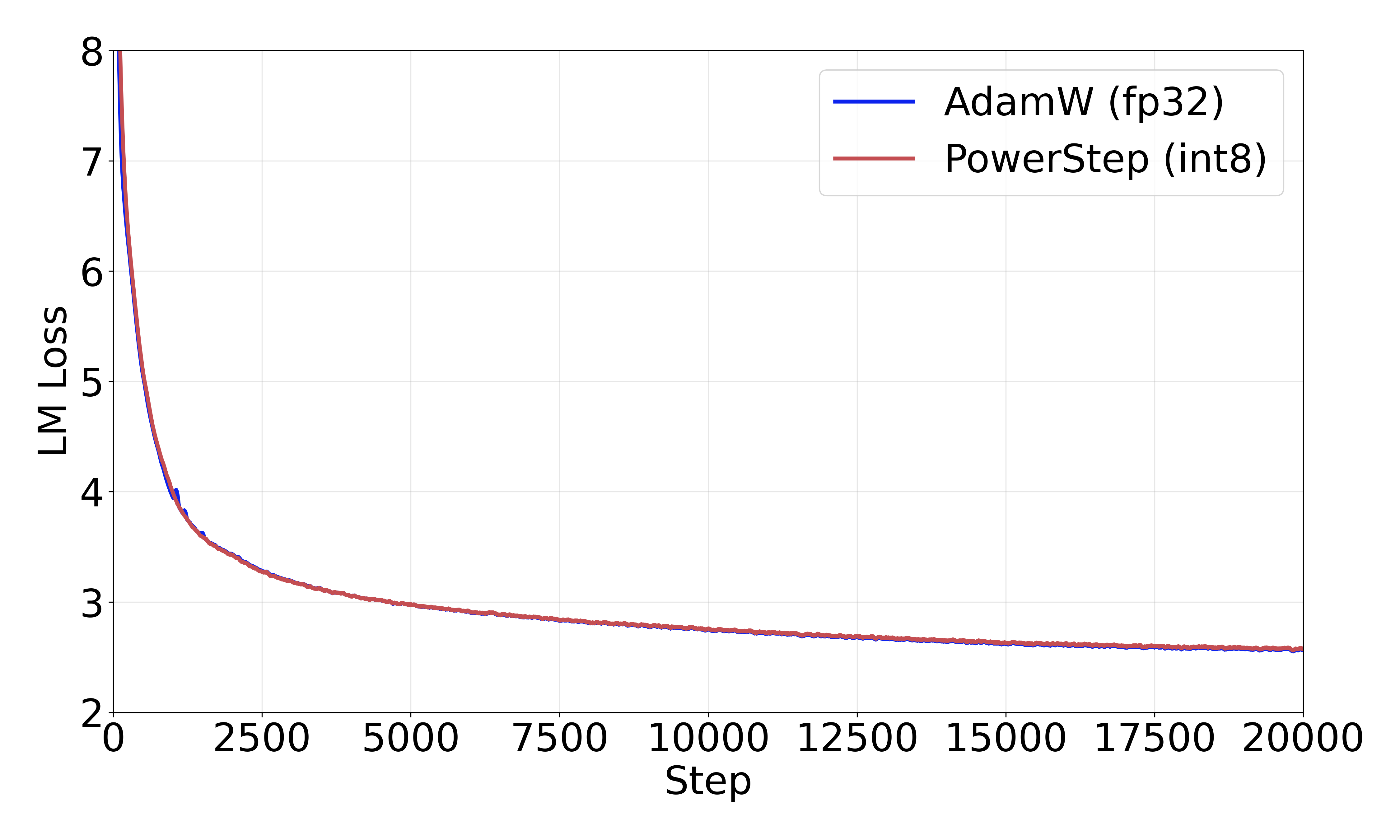}} \quad
    \subfloat[Qwen3-235B-A22B]{\includegraphics[width=0.45\linewidth]{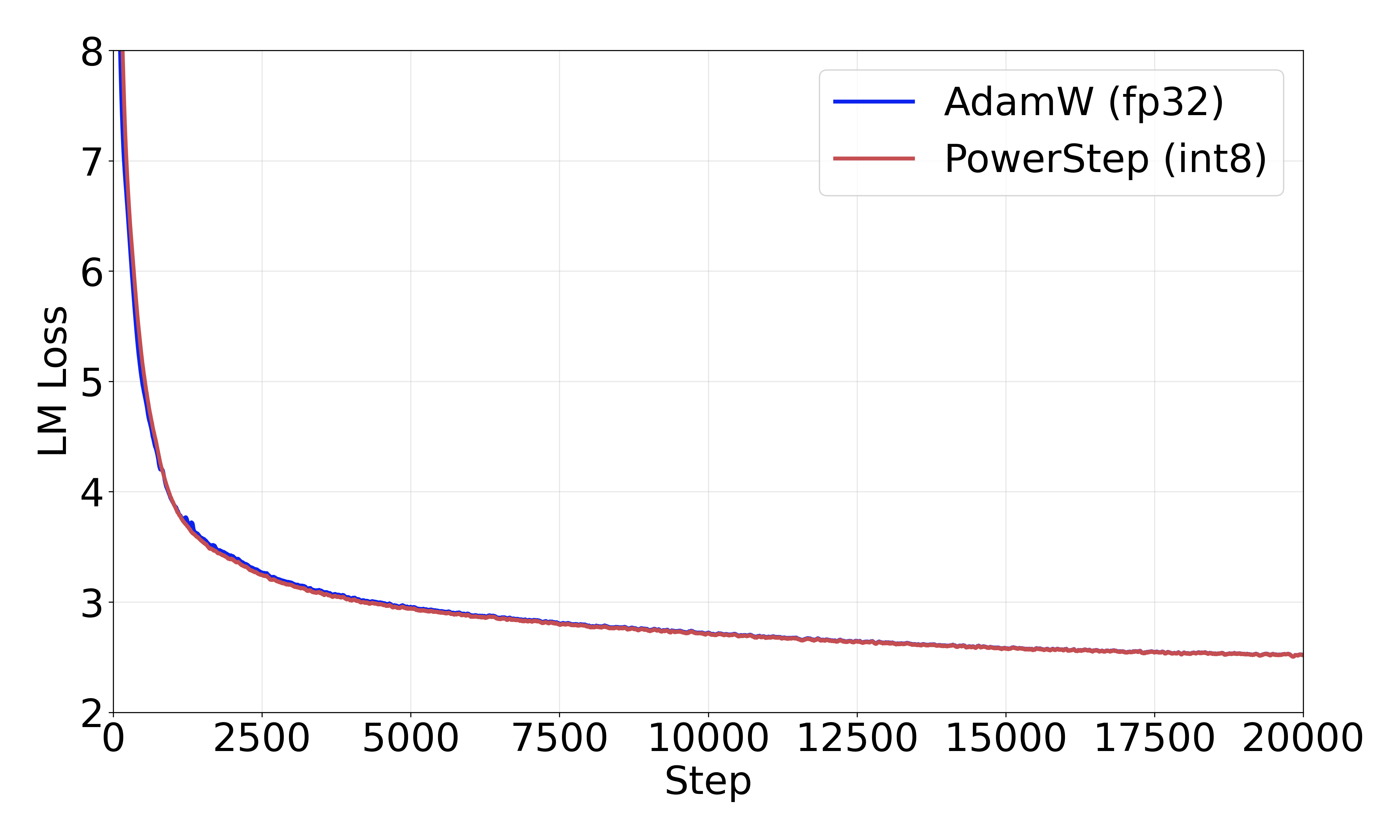}}
    \caption{Training loss on large-scale models. PowerStep with \texttt{int8} quantization achieves convergence parity with full-precision AdamW.}
    \label{fig:large_exp}
\end{figure}

\begin{table}[ht]
    \centering
    \small        
    \caption{Average optimizer state memory per NPU (MB). All values are reported for training on 256 NPUs (see Table~\ref{tab:para} for details). PowerStep with \texttt{int8} quantization reduces the memory footprint by approximately $8\times$ compared to full-precision AdamW.}
    \label{tab:memory}
    \vspace{0.2cm}
    \begin{tabular}{l c c}
        \toprule
        {Model}  & {AdamW (\texttt{fp32})} & {PowerStep (\texttt{int8})}  \\
        \midrule
        DeepSeek-V2-Lite (16B) & 429.0 & \textbf{55.3}   \\
        Qwen3-30B-A3B    & 864.0 & \textbf{111.4}    \\
        Qwen3-32B       & 976.5 & \textbf{125.9}  \\
        Qwen3-235B-A22B & 6768.0 & \textbf{872.4}  \\
        \bottomrule
    \end{tabular}    
\end{table}

\section{Conclusion and future work}
\label{sec:conclusion}

We introduced PowerStep, a memory-efficient alternative to second-moment adaptive optimizers. By applying the signed power transform to the momentum buffer, PowerStep achieves instantaneous momentum magnitude modulation while converging at the optimal $O(1/\sqrt{T})$ rate for non-convex stochastic optimization. Extensive experiments on Transformer models up to 235B parameters demonstrate that PowerStep matches Adam's convergence speed while halving optimizer memory. Low-precision implementations (\texttt{int8}) further reduce the memory footprint by $\sim\!8\times$ compared to full precision Adam without sacrificing performance. These results establish PowerStep as a principled, scalable and practical choice for large-scale training.

A promising direction for future work is integrating PowerStep's memory-efficient $\ell_p$ adaptivity with structured matrix-valued updates. The recently proposed Muon optimizer \citep{jordan2024muon}, derived from steepest descent under a matrix norm \citep{bernstein2024old}, accelerates Transformer training by orthogonalizing momentum matrices to regularize their spectrum. Combining PowerStep's adaptivity and low-precision robustness with Muon's spectral regularization could yield a new class of optimizers that are simultaneously adaptive, memory-efficient, numerically stable and spectrally accelerated. Beyond this, exploring more aggressive quantization (e.g., \texttt{int4}), evaluating on downstream language tasks (e.g., fine-tuning) and extending PowerStep to vision and multimodal domains are also promising avenues for future investigation.

\newpage

\bibliographystyle{apalike}
\bibliography{main}

\newpage

\appendix

\section{Experimental details}
\label{sec:exp_app}
\subsection{Hyperparameters}

Table~\ref{tab:lr} reports the per-model learning rate configurations. 
Common settings across all experiments: decoupled weight decay $\lambda = 0.1$ and gradient norm clipping at $1.0$. For MoE models, an auxiliary load-balancing loss with coefficient $1 \times 10^{-3}$ is applied. A 2000-step linear learning rate warmup from zero to $\eta_{\max}$ followed by cosine decay to $\eta_{\min}$ is applied. For the fairness of comparison, the same learning rate schedule is used for all optimizers within each model.

We acknowledge that a fully rigorous demonstration would require independent learning rate sweeps for each optimizer at every model scale. Given our evaluation suite of ten models and five optimizers, such sweeps were precluded by computational constraints. However, the structural analogy between the AdamW and PowerStep and the consistent convergence parity across all model scales collectively support the fairness of the comparison. A mismatched learning rate would manifest as scale-dependent degradation; we observe no such trend. We therefore conclude that PowerStep's ability to match AdamW while halving optimizer memory is not an artifact of learning rate mismatch. A systematic sensitivity analysis is left to future work. We also provide an ablation study of learning rates in Appendix \ref{sec:lr}.

\begin{table}[h]
    \centering
    \small        
    \caption{Training hyperparameters}
    \label{tab:lr}
    \vspace{0.2cm}
    \begin{tabular}{l c c c c}
        \toprule
        {Model}  & Batch size & Context length & $\eta_{\max}$ & $\eta_{\min}$ \\
        \midrule
        GPT-2-Small (124M) & 480 & 1024 & $6\times 10^{-4}$ & $6\times 10^{-5}$ \\
        GPT-2-Medium (350M) & 480 & 1024& $6\times 10^{-4}$ & $6\times 10^{-5}$ \\
        Qwen3-0.6B & 256 & 2048 & $5\times 10^{-4}$ & $5\times 10^{-5}$ \\
        Qwen3-1.7B & 256 & 2048& $5\times 10^{-4}$ & $5\times 10^{-5}$ \\
        Qwen3-4B & 256 & 2048& $5\times 10^{-4}$ & $5\times 10^{-5}$ \\
        Qwen3-8B & 256 & 2048& $3\times 10^{-4}$ & $3\times 10^{-5}$ \\
        DeepSeek-V2-Lite (16B) & 256 & 2048& $2\times 10^{-4}$ & $2\times 10^{-5}$ \\
        Qwen3-30B-A3B & 256 & 2048   & $2\times 10^{-4}$ & $2\times 10^{-5}$ \\
        Qwen3-32B   & 256 & 2048    & $2\times 10^{-4}$ & $2\times 10^{-5}$ \\
        Qwen3-235B-A22B & 256 & 2048 & $2\times 10^{-4}$ & $2\times 10^{-5}$ \\
        \bottomrule
    \end{tabular}
    
\end{table}

\subsection{Parallelism configuration}

Table~\ref{tab:para} details the parallelism configuration in training for each model. We denote data parallelism by DP, tensor parallelism by TP, pipeline parallelism by PP and expert parallelism by EP. The total number of NPUs satisfies $\text{NPUs} = \text{DP} \times \text{TP} \times \text{PP}$. For MoE architectures, DP and EP share the same communication group with $\text{EP} \leq \text{DP}$, following \citet{deepseekai2024deepseekv2}.

\begin{table}[h]
    \centering
    \small        
    \caption{Parallelism configuration across models}
    \label{tab:para}
    \vspace{0.2cm}
    \begin{tabular}{l c c c c c}
        \toprule
        {Model}  & NPUs & DP & TP & PP & EP \\
        \midrule
        GPT-2-Small (124M)   & 8   & 8  & 1 & 1 & N/A \\
        GPT-2-Medium (350M)  & 8   & 8  & 1 & 1 & N/A \\
        Qwen3-0.6B           & 8   & 8  & 1 & 1 & N/A \\
        Qwen3-1.7B           & 8   & 8  & 1 & 1 & N/A \\
        Qwen3-4B             & 8   & 8  & 1 & 1 & N/A \\
        Qwen3-8B             & 32  & 16 & 1 & 2 & N/A \\
        DeepSeek-V2-Lite (16B) & 256 & 256 & 1 & 1 & 8 \\
        Qwen3-30B-A3B        & 256  & 64 & 1 & 4 & 4 \\
        Qwen3-32B            & 256 & 16 & 8 & 2 & N/A \\
        Qwen3-235B-A22B      & 256 & 64  & 1 & 4 & 8 \\
        \bottomrule
    \end{tabular}
    
\end{table}

\section{Ablation on learning rates}
\label{sec:lr}

A persistent concern in optimizer evaluation is whether reported performance parity reflects genuine algorithmic quality or merely a learning rate mismatch favoring one method. To address this, we conduct a controlled sensitivity analysis on GPT-2-Small (124M), varying $\eta_{\max} \in \{1\times 10^{-4}, 2\times 10^{-4}, 4\times 10^{-4}, 6\times 10^{-4}, 8\times 10^{-4}, 1\times 10^{-3}\}$ with $\eta_{\min} = 0.1 \cdot \eta_{\max}$, while keeping all other hyperparameters fixed at the values used in Section~\ref{sec:comparison}.

Figure~\ref{fig:lr} reports the resulting training loss. PowerStep's sensitivity profile closely mirrors AdamW's across the full range, with no sign of the systematic divergence or instability that would signal an unfair comparison. Notably, both optimizers converge faster at larger learning rates, indicating that the learning rates used in our main experiments are not biased in favor of either method. These results support the conclusion that PowerStep achieves AdamW-style adaptivity without requiring a second-moment buffer.

\begin{figure}[h]
    \centering
    \subfloat[AdamW]{\includegraphics[width=0.45\linewidth]{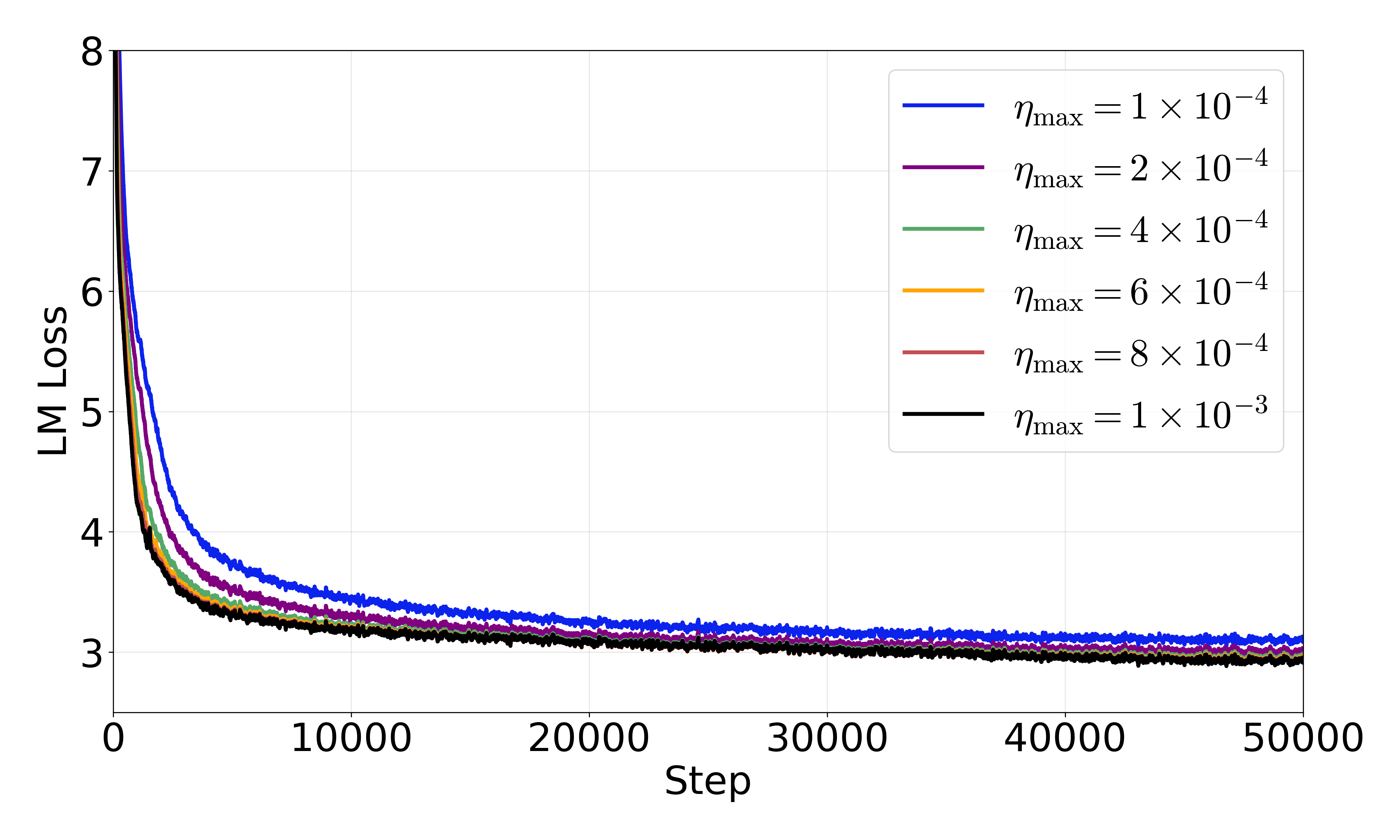}} \quad
    \subfloat[PowerStep]{\includegraphics[width=0.45\linewidth]{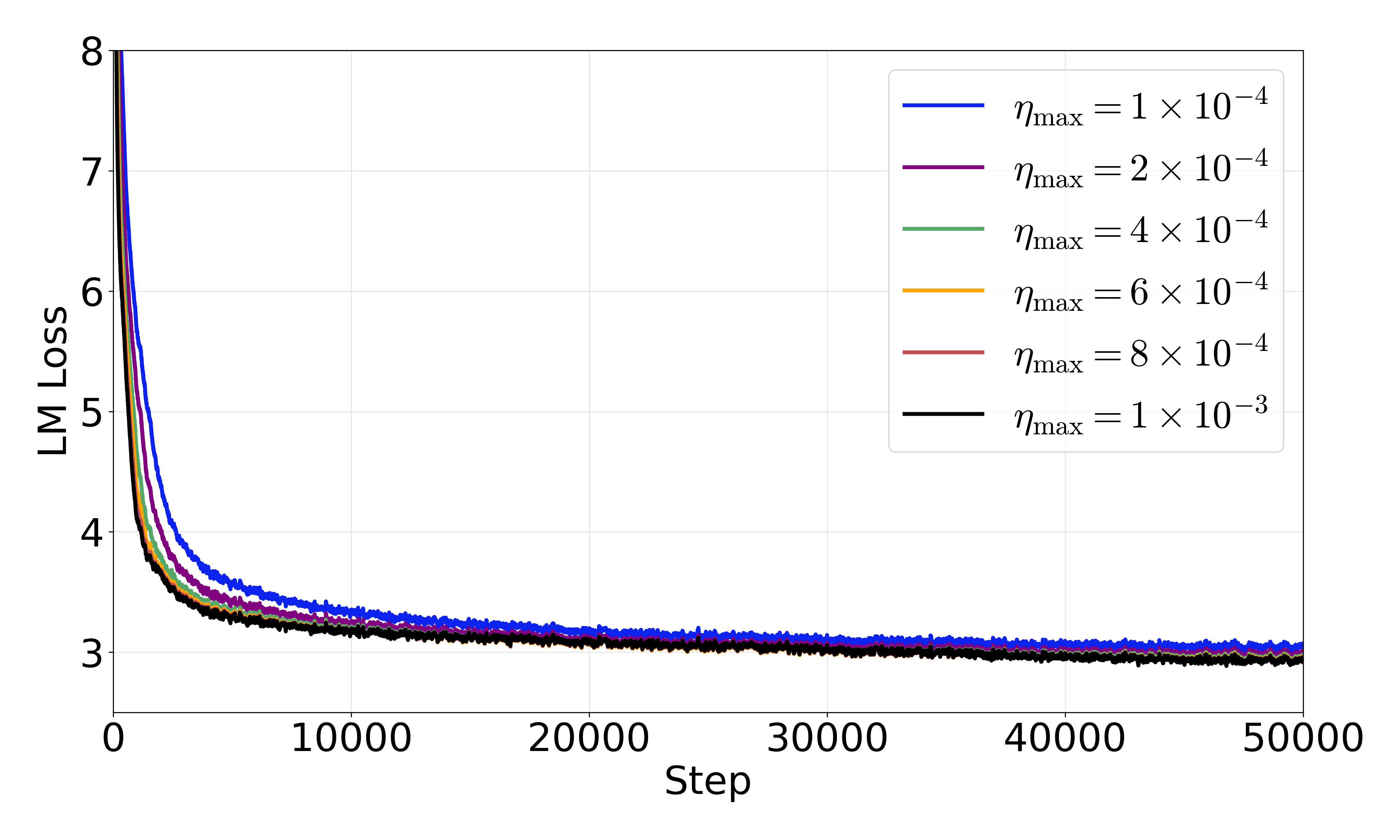}} 
    \caption{Training loss on GPT-2-Small (124M)}
    \label{fig:lr}
\end{figure}

\section{Comparison to Stacey}
\label{sec:stacey}

PowerStep can be viewed as a simplified, memory-efficient variant of Stacey-$(p,2)$~\citep{luo2025stacey} that removes the primal-dual auxiliary variables and the $\epsilon$-stabilization term and employs heavy-ball momentum. To assess whether these simplifications incur any performance cost, we conduct a direct comparison between the two optimizers on GPT-2-Small (124M) and GPT-2-Medium (350M).
For Stacey-$(p,2)$, we adopt the hyperparameters from~\citep{luo2025stacey}: $\alpha=0.1$, $\beta_1=0.9$, $\beta_2=0.99$, $\tau=0.001$, and $\epsilon=1\times 10^{-8}$. We set $p=11$, corresponding to $\beta = 1/(p-1) = 0.1$, which matches PowerStep's power exponent for a controlled comparison; lower values of $p$ lead to slower convergence for Stacey. For learning rates, we evaluate $\eta_{\max} \in \{6 \times 10^{-4}, 1 \times 10^{-3}\}$ with $\eta_{\min} = 0.1 \cdot \eta_{\max}$ across both optimizers. All other settings follow Section~\ref{sec:comparison}.
Figure~\ref{fig:stacey} reports the training loss trajectories. PowerStep converges faster in the early stages of training. Both optimizers ultimately reach comparable final loss values.

\begin{figure}[h]
    \centering    
    \subfloat[GPT-2-Small (124M)]{\includegraphics[width=0.45\linewidth]{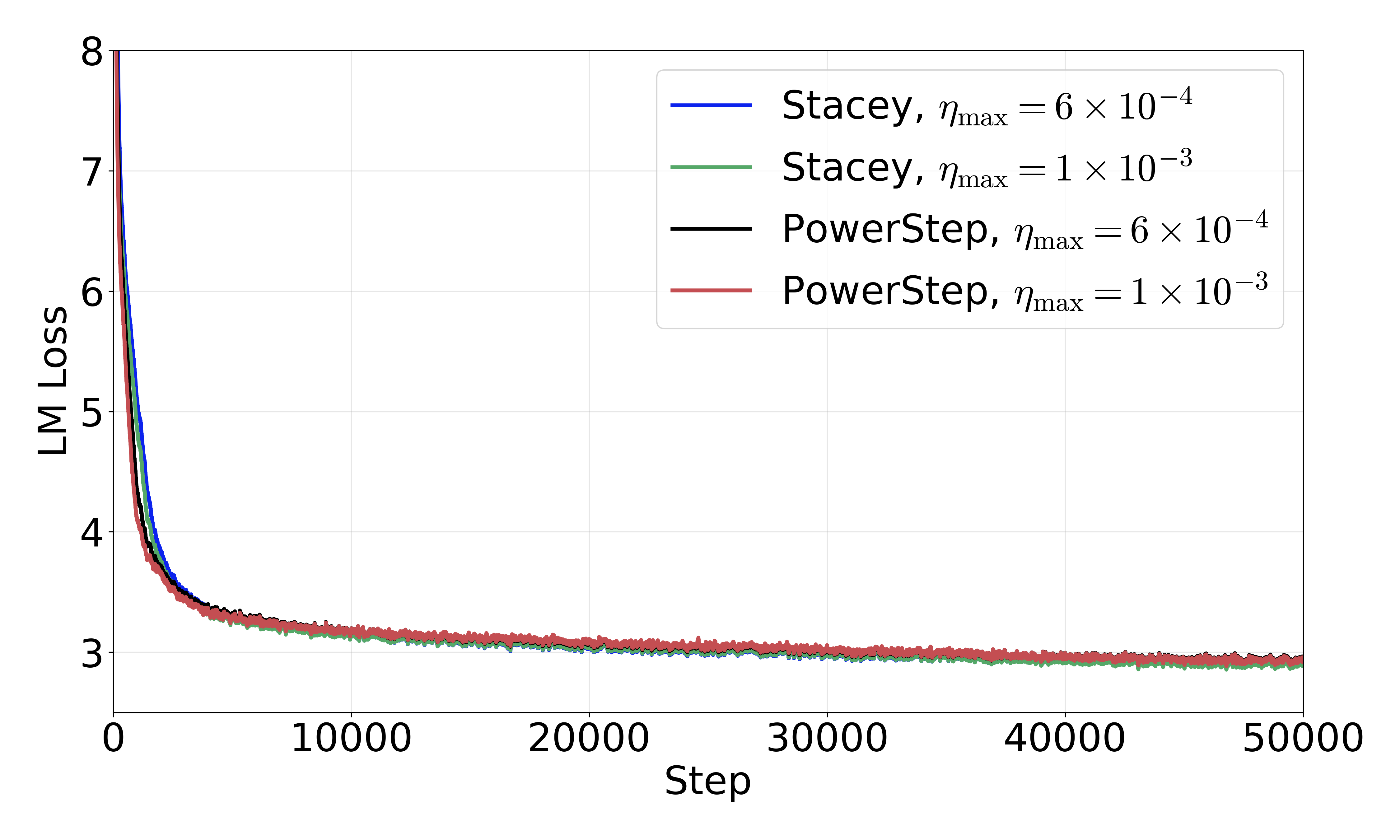}} 
    \subfloat[GPT-2-Medium (350M)]{\includegraphics[width=0.45\linewidth]{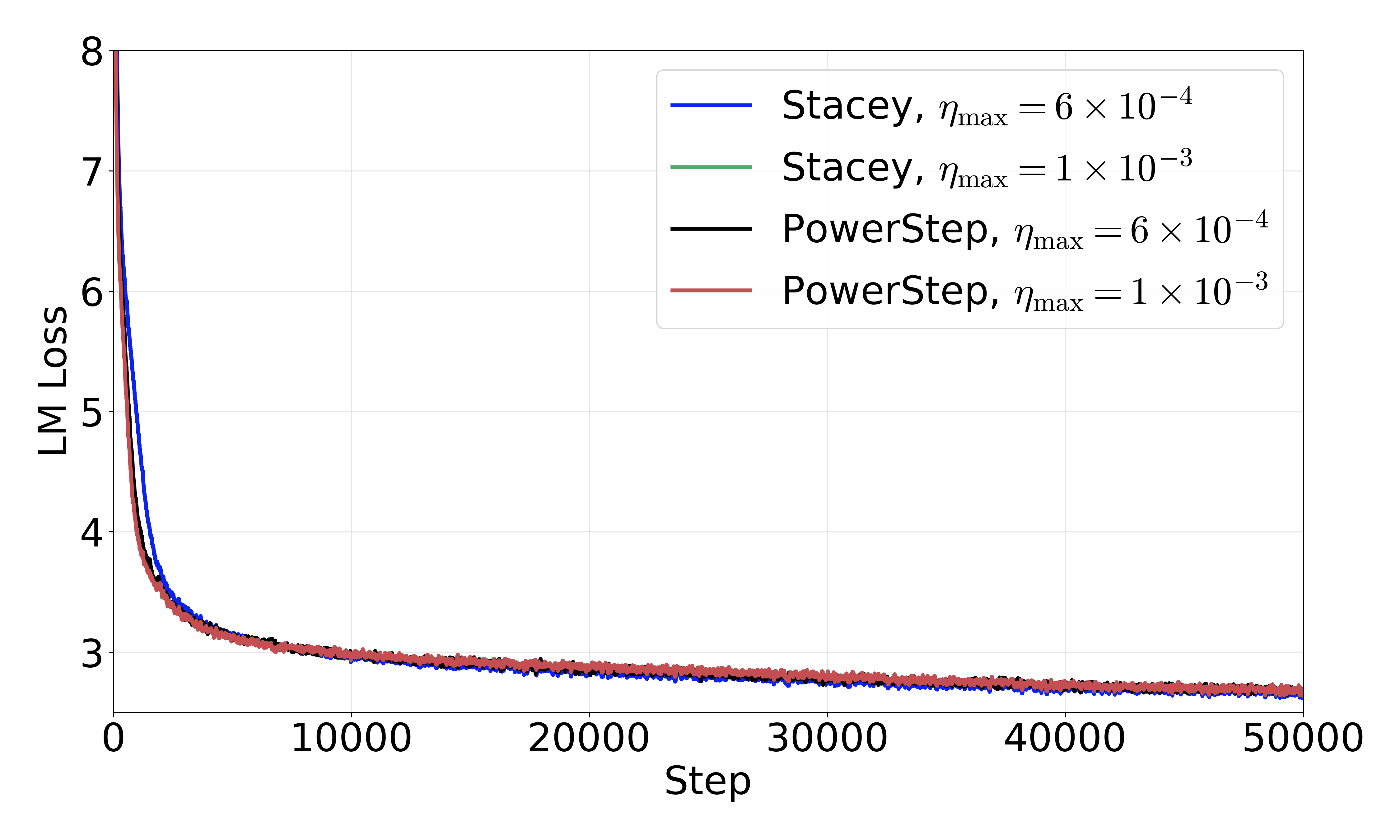}} 
    \caption{Training loss comparison between PowerStep and Stacey-$(p,2)$}
    \label{fig:stacey}
\end{figure}

\newpage

\section{Validation results}
\label{sec:val_loss}

The validation loss results on small-scale and large-scale models are reported in Tables~\ref{tab:valid_loss_small} and~\ref{tab:valid_loss_large}, respectively. The impact of optimizer state quantization on validation performance is examined separately in Table~\ref{tab:quant_valid}.

As shown in Table~\ref{tab:valid_loss_small}, on small-scale dense models ranging from 124M to 8B parameters, PowerStep achieves validation loss comparable to AdamW across all model sizes. While AdamW holds a slight edge on several models, the performance gap is marginal, confirming that PowerStep's single-buffer design does not sacrifice overall convergence quality.

Table~\ref{tab:quant_valid} isolates the effect of \texttt{int8} quantization on the momentum buffer. On GPT-2-Small and GPT-2-Medium, PowerStep with \texttt{int8} quantization matches both its full-precision counterpart and full-precision AdamW, with no measurable degradation in validation loss. AdamW under \texttt{int8} quantization is omitted because it leads to training collapse (see Figure~\ref{fig:quant} of the main text). These results confirm that PowerStep's heavy-ball momentum buffer is inherently robust to aggressive compression, in contrast to Adam's second-moment estimator.

Table~\ref{tab:valid_loss_large} further demonstrates PowerStep's scalability to large-scale models up to 235B parameters, including both dense and MoE architectures. PowerStep with \texttt{int8} quantization matches full-precision AdamW on all four models. On DeepSeek-V2-Lite and Qwen3-235B-A22B, PowerStep (\texttt{int8}) achieves lower validation loss ($2.618$ and $2.524$, respectively), while on Qwen3-30B-A3B and Qwen3-32B the gap is within $0.004-0.010$. These results establish that PowerStep's 8-bit training recipe preserves final model quality while reducing optimizer memory by approximately $8\times$, as quantified in Table~\ref{tab:memory} of the main text. 

\begin{table}[h]
    \centering
    \small        
    \caption{Validation loss on small-scale dense models. The best result per model is highlighted in bold. Although PowerStep slightly trails AdamW in some cases, the performance gap remains marginal: at most $0.028$ and typically $\leq 0.02$ across all models. All optimizer states are in \texttt{fp32}.}
    \label{tab:valid_loss_small}
    \vspace{0.2cm}
    \begin{tabular}{l c c c c c}
        \toprule
        {Model}  & AdamW & AdamS & SignSGD & pbSGDM & PowerStep \\
        \midrule
        GPT-2-Small (124M) & 2.950 & 2.951 & \textbf{2.942} & 3.377 & 2.949 \\
        GPT-2-Medium (350M) & \textbf{2.711} & 2.725 & 2.736 & 3.110 & 2.717 \\
        Qwen3-0.6B & 2.912 & 2.912 & \textbf{2.897} & 3.198 & 2.923 \\
        Qwen3-1.7B & \textbf{2.780} & {2.782} & 2.794 & 3.034 & 2.799 \\
        Qwen3-4B & 2.690 & \textbf{2.672} & 2.680 & 2.712 & 2.712 \\
        Qwen3-8B & \textbf{2.660} & 4.689 & 5.045 & 2.853 & 2.688 \\    
        \bottomrule
    \end{tabular}    
\end{table}

\begin{table}[h]
    \centering
    \small        
    \caption{Validation loss under optimizer state quantization. The best result per model is shown in bold. AdamW with \texttt{int8} quantization leads to training collapse (Figure~\ref{fig:quant}) and is therefore omitted. PowerStep with \texttt{int8} matches full-precision AdamW, confirming that aggressive compression incurs no degradation in model quality.}
    \label{tab:quant_valid}
    \vspace{0.2cm}
    \begin{tabular}{l c c c}
        \toprule
        {Model}  & {AdamW (\texttt{fp32})} & {PowerStep (\texttt{fp32})} & {PowerStep (\texttt{int8})} \\
        \midrule
        GPT-2-Small (124M)   & 2.950 & 2.949 & \textbf{2.948} \\
        GPT-2-Medium (350M)  & \textbf{2.711} & 2.717 & 2.716 \\
        \bottomrule
    \end{tabular}    
\end{table}

\begin{table}[h!]
    \centering
    \small    
    \caption{Validation loss on large-scale dense and MoE models. The better result per model is highlighted in bold. PowerStep (\texttt{int8}) matches AdamW (\texttt{fp32}) across all models with negligible differences, demonstrating that the $\sim\!8\times$ memory reduction preserves validation performance.}
    \label{tab:valid_loss_large}
    \vspace{0.2cm}
    \begin{tabular}{l c c}
        \toprule
        {Model} & {AdamW (\texttt{fp32})} & {PowerStep (\texttt{int8})} \\
        \midrule
        DeepSeek-V2-Lite (16B) & 2.623 & \textbf{2.618} \\
        Qwen3-30B-A3B & \textbf{2.620} & 2.624 \\
        Qwen3-32B & \textbf{2.572} & 2.582 \\
        Qwen3-235B-A22B & 2.525 & \textbf{2.524} \\
        \bottomrule
    \end{tabular}    
\end{table}

\newpage
\section{Analysis of \texttt{int8} quantization robustness}
\label{sec:quantization-analysis}

PowerStep's empirical robustness to aggressive \texttt{int8} quantization (Section~\ref{sec:quantization}) stands in contrast to AdamW, which diverges under the same compression. We provide an analysis that explains this discrepancy.

\subsection{Quantization model}
\label{sec:quantization-model}

We model blockwise dynamic \texttt{int8} quantization as follows. For a vector $\mathbf{x} \in \mathbb{R}^d$ partitioned into blocks of size $B$, the quantized representation $\mathcal{Q}(\mathbf{x})$ satisfies
\begin{equation}
\mathcal{Q}(\mathbf{x}) = \mathbf{x} + \boldsymbol{\delta}, \quad \|\boldsymbol{\delta}\|_\infty \leq \frac{\|\mathbf{x}\|_{\max}}{2^7} = \frac{\|\mathbf{x}\|_{\max}}{128},
\label{eq:quantization-model}
\end{equation}
where $\|\mathbf{x}\|_{\max}$ is the maximum absolute value within the block and the factor $2^7$ arises from the 7 mantissa bits of \texttt{int8} (one bit reserved for the sign). The relative error per coordinate is therefore bounded by $1/128 \approx 0.78\%$ of the block's dynamic range.

\subsection{Error propagation in AdamW}
\label{sec:adamw-error}

AdamW stores two buffers: the first moment $\mathbf{m}_t$ and the second moment $\mathbf{v}_t$. Under quantization, the stored quantities are $\tilde{\mathbf{m}}_t = \mathbf{m}_t + \boldsymbol{\delta}_t^m$ and $\tilde{\mathbf{v}}_t = \mathbf{v}_t + \boldsymbol{\delta}_t^v$. The AdamW update (ignoring weight decay) is
\begin{equation}
\boldsymbol{\theta}_t = \boldsymbol{\theta}_{t-1} - \eta \cdot \frac{\tilde{\mathbf{m}}_t}{\sqrt{\tilde{\mathbf{v}}_t} + \epsilon},
\label{eq:adamw-update}
\end{equation}
where $\epsilon = 10^{-8}$ is a small constant for numerical stability.

The critical vulnerability lies in the reciprocal square root operation. 
The first-order Taylor expansion of $f(x) = 1/(\sqrt{x} + \epsilon)$ around $v_{t,i}$ is
\begin{equation}
f(v_{t,i} + \delta_{t,i}^v) \approx f(v_{t,i}) + f'(v_{t,i})\,\delta_{t,i}^v,
\qquad
f'(x) = -\frac{1}{2\sqrt{x}\,(\sqrt{x} + \epsilon)^2}.
\label{eq:taylor-expansion}
\end{equation}
For a coordinate $i$, substituting $\tilde{v}_{t,i} = v_{t,i} + \delta_{t,i}^v$ yields
\begin{equation}
\frac{1}{\sqrt{\tilde{v}_{t,i}} + \epsilon}
\approx
\frac{1}{\sqrt{v_{t,i}} + \epsilon}
- \frac{\delta_{t,i}^v}{2\sqrt{v_{t,i}}\,(\sqrt{v_{t,i}} + \epsilon)^2}.
\label{eq:taylor-reciprocal}
\end{equation}

The role of $\epsilon$ is to prevent division by zero when $v_{t,i}$ is very small. However, under \texttt{int8} quantization, this protection becomes insufficient. Because $v_{t,i}$ accumulates squared gradients, it is typically very small at initialization and for parameters receiving sparse or weak gradients. In full precision, $\epsilon = 10^{-8}$ safely dominates the denominator when $v_{t,i} \ll 10^{-8}$, yielding a stable update of approximately $\tilde{m}_{t,i} / 10^{-8}$. Under \texttt{int8} quantization, the stored $\tilde{v}_{t,i}$ carries an error $\delta_{t,i}^v$ whose magnitude scales with the block's dynamic range. When the block contains a single large gradient (e.g., $\|\mathbf{v}_t\|_{\max} \approx 1$), the quantization error $\delta_{t,i}^v \approx 1/128 \approx 0.0078$ can vastly exceed the true value for coordinates where $v_{t,i} \approx 10^{-8}$. The error term in Equation~\eqref{eq:taylor-reciprocal} then becomes
\begin{equation}
\frac{\delta_{t,i}^v}{2 \sqrt{v_{t,i}} \, (\sqrt{v_{t,i}} + \epsilon)^2} \approx \frac{0.0078}{2 \cdot 10^{-4} \cdot 10^{-8}} \approx 4 \times 10^9,
\label{eq:error-explosion}
\end{equation}
completely overwhelming the signal. Moreover, unlike the benign $\epsilon$ safeguard, this quantization error is not zero-mean and accumulates in the momentum buffer across iterations, leading to rapid divergence.

\subsection{Error propagation in PowerStep}
\label{sec:powerstep-error}

PowerStep stores only the momentum buffer $\mathbf{m}_t$ in \texttt{int8}. The quantized buffer is $\tilde{\mathbf{m}}_t = \mathbf{m}_t + \boldsymbol{\delta}_t$. The update is
\begin{equation}
\boldsymbol{\theta}_t = \boldsymbol{\theta}_{t-1} - \eta \cdot \Phi_\beta(\tilde{\mathbf{m}}_t).
\label{eq:powerstep-update}
\end{equation}

The signed power transform $\Phi_\beta(\mathbf{x}) = \operatorname{sign}(\mathbf{x}) \odot |\mathbf{x}|^\beta$ with $\beta = 0.1$ provides four layers of protection.

\paragraph{Layer 1: H\"older continuity of $\Phi_\beta$.}
Lemma~\ref{lemma:holder} establishes that for any $\mathbf{x}, \mathbf{y} \in \mathbb{R}^d$,
\begin{equation}
\|\Phi_\beta(\mathbf{x}) - \Phi_\beta(\mathbf{y})\|_{1+\beta} \leq C_\beta \|\mathbf{x} - \mathbf{y}\|_{1+\beta}^\beta,
\label{eq:holder-inequality}
\end{equation}
with $C_\beta = 2^{1-\beta} d^{(1-\beta)/(1+\beta)}$. Setting $\mathbf{x} = \tilde{\mathbf{m}}_t$ and $\mathbf{y} = \mathbf{m}_t$ and using the norm equivalence $\|\boldsymbol{\delta}_t\|_{1+\beta} \leq d^{\frac{1}{1+\beta}} \|\boldsymbol{\delta}_t\|_\infty$, we obtain
\begin{equation}
\|\Phi_\beta(\tilde{\mathbf{m}}_t) - \Phi_\beta(\mathbf{m}_t)\|_{1+\beta} \leq C_\beta d^{\frac{\beta}{1+\beta}} \|\boldsymbol{\delta}_t\|_\infty^\beta.
\label{eq:holder-quantization}
\end{equation}

Since $\beta = 0.1$, the exponent $\beta$ on the quantization error $\|\boldsymbol{\delta}_t\|_\infty$ significantly attenuates its impact. For a block with dynamic range $\|\mathbf{m}_t\|_{\max} \approx 1$, the \texttt{int8} error is $\|\boldsymbol{\delta}_t\|_\infty \approx 1/128 \approx 0.0078$. Raising this to the power $0.1$ yields $0.0078^{0.1} \approx 0.61$, making the perturbation comparable in magnitude to the signal. Critically, however, $\Phi_\beta$ is a continuous, bounded nonlinearity: unlike the reciprocal square root, it has no singularity at zero.

\paragraph{Layer 2: bounded amplification at small signal values.}
The derivative of $\Phi_\beta$ at coordinate $x$ is
\begin{equation}
\frac{d}{dx} \Phi_\beta(x) = \beta |x|^{\beta-1}.
\label{eq:derivative}
\end{equation}
For AdamW, the analogous sensitivity is $\frac{1}{2} v^{-3/2}$, which diverges as $v \to 0$. For PowerStep with $\beta = 0.1$, the sensitivity is $0.1 \cdot |x|^{-0.9}$, which also grows as $|x| \to 0$, but the exponent $-0.9$ is less severe than $-1.5$ for the reciprocal square root. More importantly, this sensitivity is integrated over the momentum buffer, which is a linear accumulator of gradients and thus does not suffer from the extreme dynamic range compression that affects $\mathbf{v}_t$.

\paragraph{Layer 3: linear accumulation preserves dynamic range.}
The momentum buffer $\mathbf{m}_t = \sum_{k=0}^{t-1} \gamma^k \mathbf{g}_{t-k}$ is a linear combination of gradient vectors. 
In contrast, Adam's second moment $\mathbf{v}_t$ accumulates squared gradients, compressing the dynamic range by squaring. 
For a typical Transformer, gradient magnitudes span several orders of magnitude. 
Squaring compresses a factor of $10^4$ into $10^8$ in the second moment, pushing small values close to machine precision. 
The linear accumulation in $\mathbf{m}_t$ preserves the original dynamic range, keeping values well above the quantization granularity. 

\paragraph{Layer 4: heavy-ball momentum alleviates update stalling.}
EMA momentum is defined as
\begin{equation}
\mathbf{m}_t = \beta \mathbf{m}_{t-1} + (1-\beta)\mathbf{g}_t,
\label{eq:ema-momentum}
\end{equation}
where $\beta \in (0,1)$ is the decay coefficient. 
\citet{topollai2026understanding} show that EMA-based moments under optimizer state quantization suffer from the state-update stalling problem: because the state $\mathbf{m}_{t-1}$ is in low-precision and the incoming high-precision gradient $\mathbf{g}_t$ is scaled by $1-\beta$, the effective per-step increment can be too small to change the stored value.
PowerStep's heavy-ball momentum~\citep{polyak1964some} alleviates this stalling: whereas EMA momentum weights the current gradient by only $1-\beta \approx 0.1$, heavy-ball momentum adds the full gradient $\mathbf{g}_t$ (weight $1$), yielding an order-of-magnitude larger per-step increment. 
This makes it substantially less likely that the update falls below the quantization step size and stalls.

\subsection{Summary}
\label{sec:quantization-summary}

PowerStep's robustness to \texttt{int8} quantization arises from a confluence of four factors: 
(i) the absence of a reciprocal square root singularity, 
(ii) the H\"older continuity of $\Phi_\beta$, which attenuates quantization noise with exponent $\beta = 0.1$, 
(iii) the elimination of the second-moment buffer and use of linear accumulation, which preserves dynamic range, and 
(iv) heavy-ball momentum, whose full-weight gradient updates make first-moment stalling far less likely than under EMA. 
As a result, PowerStep's momentum buffer remains responsive under quantization without requiring the stochastic rounding or periodic resets that are necessary for quantized Adam~\citep{han2025qadam,topollai2026understanding}.
This analysis also motivates the conjecture that PowerStep would remain stable under even more aggressive quantization (e.g., \texttt{int4}), a direction for future investigation.

\newpage

\section{Complete proofs}
\label{app:proofs}

\setcounter{assumption}{0}
\setcounter{lemma}{0}
\setcounter{theorem}{0}
\setcounter{corollary}{0}

We provide complete proofs of all lemmas and Theorem~\ref{thm:convergence} stated in Section~\ref{sec:convergence}. For self-containedness, we restate each assumption and result before its proof.

\begin{assumption}[$L$-Smoothness]
\label{assum:smoothness_app}
$f$ is continuously differentiable and there exists a constant $L > 0$ such that, for all $\mathbf{x}, \mathbf{y} \in \mathbb{R}^d$,
\begin{equation}
    \|\nabla f(\mathbf{x}) - \nabla f(\mathbf{y})\|_2 \leq L \|\mathbf{x} - \mathbf{y}\|_2.
\end{equation}
\end{assumption}

\begin{assumption}[Bounded Below]
\label{assum:bounded_app}
There exists $f^* \in \mathbb{R}$ such that $f(\bm{\theta}) \geq f^*$ for all $\bm{\theta} \in \mathbb{R}^d$.
\end{assumption}

\begin{assumption}[Bounded Gradient]
\label{assum:grad_bound_app}
There exists a constant $G > 0$ such that $\|\nabla f(\bm{\theta})\|_2 \leq G$ for all $\bm{\theta} \in \mathbb{R}^d$.
\end{assumption}

\begin{assumption}[Unbiased Gradient]
\label{assum:unbiased_app}
At each iteration $t$, the stochastic gradient $\mathbf{g}_t$ satisfies
\begin{equation}
\mathbb{E}[\mathbf{g}_t | \bm{\theta}_{t-1}] = \nabla f(\bm{\theta}_{t-1}).
\end{equation}
\end{assumption}

\begin{assumption}[Bounded Variance]
\label{assum:variance_app}
There exists a constant $\sigma > 0$ such that, for all $t \geq 1$,
\begin{equation}
\mathbb{E}\bigl[\|\mathbf{g}_t - \nabla f(\bm{\theta}_{t-1})\|_2^2 | \bm{\theta}_{t-1}\bigr] \leq \sigma^2.
\end{equation}
\end{assumption}

\begin{lemma}[Induced Norm Structure]
\label{lemma:metric_app}
For any vector $\mathbf{m} \in \mathbb{R}^d$ and $\beta \in (0, 1]$, 
\begin{equation}
\langle \mathbf{m}, \Phi_\beta(\mathbf{m}) \rangle =  \|\mathbf{m}\|_{1+\beta}^{1+\beta}.
\end{equation}
\end{lemma}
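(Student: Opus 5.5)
The identity is purely algebraic and coordinate-wise, so the plan is to expand the inner product by coordinates and simplify each term using the definition of $\Phi_\beta$. By definition $\Phi_\beta(\mathbf{m})_i = \operatorname{sign}(m_i)\,|m_i|^{\beta}$, so
\begin{equation*}
\langle \mathbf{m}, \Phi_\beta(\mathbf{m}) \rangle = \sum_{i=1}^d m_i \operatorname{sign}(m_i)\,|m_i|^{\beta}.
\end{equation*}

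The key step is the scalar identity $m_i \operatorname{sign}(m_i) = |m_i|$, valid for every real $m_i$. This includes $m_i = 0$, where both sides vanish under the convention $\operatorname{sign}(0)=0$. Under any other convention the product is still $0 \cdot \operatorname{sign}(0) = 0$, so the identity is unaffected.

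Each summand then becomes $|m_i|\cdot|m_i|^{\beta} = |m_i|^{1+\beta}$. At a zero coordinate this still holds: since $\beta > 0$ we have $0^{\beta} = 0$ and $0^{1+\beta}=0$, so no indeterminate $0^0$ arises. This is exactly where the restriction $\beta\in(0,1]$ enters; in fact only $\beta>0$ is used, and the upper bound $\beta\le 1$ is not needed here.

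Summing over $i$ gives $\sum_i |m_i|^{1+\beta}$, which by the definition of the Minkowski norm with exponent $p = 1+\beta \ge 1$ equals $\|\mathbf{m}\|_{1+\beta}^{1+\beta}$. This completes the argument.

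There is no genuine obstacle. The only point requiring care is the treatment of zero coordinates, namely the sign convention and the requirement $\beta>0$, and I would state this explicitly in a single line.
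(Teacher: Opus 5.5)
Your proof is correct and follows essentially the same coordinate-wise computation as the paper, which writes $m_i = \operatorname{sign}(m_i)|m_i|$, uses $(\operatorname{sign}(m_i))^2 = 1$ for nonzero coordinates, and treats $m_i = 0$ separately. Your use of $m_i \operatorname{sign}(m_i) = |m_i|$ handles the zero case a little more uniformly, and your remark that only $\beta > 0$ is actually needed is accurate.
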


\begin{proof}
By definition of the signed power transform $\Phi_\beta(\mathbf{x}) = \operatorname{sign}(\mathbf{x}) \odot |\mathbf{x}|^\beta$, we compute the inner product coordinate-wise,
\begin{align}
\langle \mathbf{m}, \Phi_\beta(\mathbf{m}) \rangle 
&= \sum_{i=1}^d m_i \cdot \operatorname{sign}(m_i) |m_i|^\beta \\
&= \sum_{i=1}^d \bigl(\operatorname{sign}(m_i) |m_i|\bigr) \cdot \operatorname{sign}(m_i) |m_i|^\beta \\
&= \sum_{i=1}^d (\operatorname{sign}(m_i))^2 |m_i|^{1+\beta} \\
&= \sum_{i=1}^d |m_i|^{1+\beta} = \|\mathbf{m}\|_{1+\beta}^{1+\beta},
\end{align}
where we used $m_i = \operatorname{sign}(m_i)|m_i|$ and $(\operatorname{sign}(m_i))^2 = 1$ for $m_i \neq 0$. When $m_i = 0$, the term contributes zero to the sum regardless.
\end{proof}

\begin{lemma}[Norm Relationship]
\label{lemma:stability_app}
For any $\mathbf{m} \in \mathbb{R}^d$ and $\beta \in (0, 1]$,
\begin{equation}
\|\Phi_\beta(\mathbf{m})\|_2^2 = \|\mathbf{m}\|_{2\beta}^{2\beta} \leq d^{1-\beta} \|\mathbf{m}\|_2^{2\beta}.
\end{equation}
\end{lemma}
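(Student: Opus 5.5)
The plan is to prove the equality by a direct coordinatewise computation and the inequality by Hölder's inequality, equivalently by concavity of $t\mapsto t^\beta$.

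\emph{The equality.} By definition, the $i$-th coordinate of $\Phi_\beta(\mathbf{m})$ is $\operatorname{sign}(m_i)|m_i|^\beta$. Squaring it gives $(\operatorname{sign}(m_i))^2|m_i|^{2\beta}$, which equals $|m_i|^{2\beta}$ whenever $m_i\neq 0$. When $m_i=0$, both sides vanish, since $\beta>0$ gives $0^{2\beta}=0$. Summing over $i$ yields
\[
\|\Phi_\beta(\mathbf{m})\|_2^2=\sum_{i=1}^d |m_i|^{2\beta}=\|\mathbf{m}\|_{2\beta}^{2\beta},
\]
where $\|\cdot\|_{2\beta}^{2\beta}$ is read as the sum of $2\beta$-th powers. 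This reading is the natural one even when $2\beta<1$, where $\|\cdot\|_{2\beta}$ is only a quasi-norm.

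\emph{The inequality.} Set $a_i=m_i^2\ge 0$, so that $|m_i|^{2\beta}=a_i^\beta$. Since $\beta\in(0,1]$, the map $t\mapsto t^\beta$ is concave on $[0,\infty)$, and Jensen's inequality over the uniform distribution on $\{1,\dots,d\}$ gives
\[
\frac{1}{d}\sum_{i=1}^d a_i^\beta\le\Bigl(\frac{1}{d}\sum_{i=1}^d a_i\Bigr)^\beta .
\]
Multiplying through by $d$ gives
\[
\sum_{i=1}^d a_i^\beta\le d^{1-\beta}\Bigl(\sum_{i=1}^d a_i\Bigr)^\beta=d^{1-\beta}\|\mathbf{m}\|_2^{2\beta}.
\]
The same bound follows from Hölder's inequality with exponents $1/\beta$ and $1/(1-\beta)$ applied to $\sum_i 1\cdot a_i^\beta$.

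\emph{Edge cases.} The only points needing attention are the convention at $m_i=0$, handled above, and the endpoint $\beta=1$. At $\beta=1$ the inequality is an equality and the Hölder exponent $1/(1-\beta)$ degenerates, so I would use the Jensen form, which covers the full range $\beta\in(0,1]$ uniformly.
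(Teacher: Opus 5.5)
Your proposal is correct and follows essentially the paper's argument: the same coordinatewise computation for the equality, and Hölder's inequality with exponents $1/\beta$ and $1/(1-\beta)$ against the all-ones vector for the bound. Your Jensen form is that same inequality, and it covers $\beta=1$ without the separate case the paper uses.
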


\begin{proof}
The equality follows directly from the definition,
\begin{equation}
\|\Phi_\beta(\mathbf{m})\|_2^2
= \sum_{i=1}^d \bigl|\operatorname{sign}(m_i) |m_i|^\beta\bigr|^2
= \sum_{i=1}^d |m_i|^{2\beta}
= \|\mathbf{m}\|_{2\beta}^{2\beta}.
\end{equation}

For the inequality, we consider two cases. If $\beta = 1$, then $|\Phi_1(\mathbf{m})|{2}^{2} = |\mathbf{m}|2^{2}$ and the bound holds with equality. If $\beta \in (0, 1)$, we apply H\"older's inequality, which states that for vectors $\mathbf{u}, \mathbf{v} \in \mathbb{R}^d$ and conjugate exponents $p, q \geq 1$ with $1/p + 1/q = 1$,
\begin{equation}
\sum_{i=1}^d |u_i v_i| \leq \biggl( \sum_{i=1}^d |u_i|^p \biggr)^{1/p} \biggl( \sum_{i=1}^d |v_i|^q \biggr)^{1/q}.
\end{equation}
Setting $u_i = |m_i|^{2\beta}$, $v_i = 1$, with exponents $p = 1/\beta$ and $q = 1/(1-\beta)$ (which satisfy $1/p + 1/q = \beta + (1-\beta) = 1$),
\begin{align}
\|\mathbf{m}\|_{2\beta}^{2\beta}
= \sum_{i=1}^d |m_i|^{2\beta} \cdot 1
&\leq \biggl( \sum_{i=1}^d \bigl(|m_i|^{2\beta}\bigr)^{1/\beta} \biggr)^{\beta}
\biggl( \sum_{i=1}^d 1^{1/(1-\beta)} \biggr)^{1-\beta} \\
&= \biggl( \sum_{i=1}^d |m_i|^2 \biggr)^{\beta} \cdot d^{1-\beta}
= d^{1-\beta} \|\mathbf{m}\|_2^{2\beta}.
\end{align}
\end{proof}

\begin{lemma}[H\"older Continuity of $\Phi_\beta$]
\label{lemma:holder_app}
For any $\mathbf{x}, \mathbf{y} \in \mathbb{R}^d$ and $\beta \in (0, 1]$,
\begin{equation}
\|\Phi_\beta(\mathbf{x}) - \Phi_\beta(\mathbf{y})\|_{1+\beta} 
\leq C_\beta \|\mathbf{x} - \mathbf{y}\|_{1+\beta}^\beta,
\end{equation}
where $C_\beta = 2^{1-\beta} d^{(1-\beta)/(1+\beta)} \leq 2d$.
\end{lemma}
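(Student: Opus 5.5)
The plan is to reduce everything to a scalar Hölder estimate and then sum over coordinates. First I would prove that for all real $a,b$ and $\beta\in(0,1]$,
\begin{equation}
|\Phi_\beta(a)-\Phi_\beta(b)| \le 2^{1-\beta}|a-b|^\beta .
\end{equation}
The case $\beta=1$ is trivial, so take $\beta<1$ and split by sign.

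If $a,b$ have the same sign, say $a\ge b\ge 0$, then concavity of $t\mapsto t^\beta$ gives subadditivity, $a^\beta\le b^\beta+(a-b)^\beta$. Hence $|a^\beta-b^\beta|\le|a-b|^\beta\le 2^{1-\beta}|a-b|^\beta$.

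If $a>0>b$, write $s=a$ and $r=-b$. The left side is $s^\beta+r^\beta$, while $|a-b|=s+r$. Concavity (the power-mean inequality) gives $s^\beta+r^\beta\le 2\left(\tfrac{s+r}{2}\right)^\beta=2^{1-\beta}(s+r)^\beta$. This mixed-sign case is exactly where the constant $2^{1-\beta}$ comes from, so it is the only delicate point in the scalar step.

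Next, set $p=1+\beta$. Raising the scalar bound to the $p$-th power and summing gives
\begin{equation}
\|\Phi_\beta(\mathbf{x})-\Phi_\beta(\mathbf{y})\|_{p}^{p}\le 2^{(1-\beta)p}\sum_i |x_i-y_i|^{\beta p}.
\end{equation}
The remaining task is to bound $\sum_i |\delta_i|^{\beta(1+\beta)}$ by a power of $\|\boldsymbol\delta\|_{1+\beta}$, where $\boldsymbol\delta=\mathbf{x}-\mathbf{y}$. Since $\beta(1+\beta)\le 1+\beta$, I would apply Hölder's inequality with exponents $1/\beta$ and $1/(1-\beta)$, in the same way as in Lemma~\ref{lemma:stability_app}:
\begin{equation}
\sum_i |\delta_i|^{\beta(1+\beta)}\le \Bigl(\sum_i|\delta_i|^{1+\beta}\Bigr)^{\beta} d^{1-\beta}=d^{1-\beta}\|\boldsymbol\delta\|_{1+\beta}^{\beta(1+\beta)}.
\end{equation}
Taking $(1+\beta)$-th roots then yields
\begin{equation}
\|\Phi_\beta(\mathbf{x})-\Phi_\beta(\mathbf{y})\|_{1+\beta}\le 2^{1-\beta}d^{(1-\beta)/(1+\beta)}\|\boldsymbol\delta\|_{1+\beta}^{\beta}=C_\beta\|\boldsymbol\delta\|_{1+\beta}^{\beta}.
\end{equation}

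Finally, the bound $C_\beta\le 2d$ is immediate, since $2^{1-\beta}\le 2$ and $d^{(1-\beta)/(1+\beta)}\le d$ for $d\ge1$.

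The main obstacle is bookkeeping rather than anything conceptual. The scalar mixed-sign case needs the power-mean bound, and the dimension factor needs the Hölder step with the correct exponent, which gives $d^{1-\beta}$ inside the $p$-th power and $d^{(1-\beta)/(1+\beta)}$ after taking roots.
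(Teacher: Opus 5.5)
Your proof is correct and follows the paper's argument essentially step for step. Both use the same sign-split scalar bound $|\Phi_\beta(a)-\Phi_\beta(b)|\le 2^{1-\beta}|a-b|^\beta$ (subadditivity of $t^\beta$ for same signs, concavity/Jensen for opposite signs), then sum the $(1+\beta)$-th powers and apply H\"older with exponents $1/\beta$ and $1/(1-\beta)$ to extract the factor $d^{1-\beta}$ before taking $(1+\beta)$-th roots.
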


\begin{proof}
We first establish a coordinate-wise bound. For any $a, b \in \mathbb{R}$,
\begin{equation}
|\operatorname{sign}(a)|a|^{\beta} - \operatorname{sign}(b)|b|^{\beta}| \leq 2^{1-\beta}|a - b|^{\beta}.
\end{equation}
This follows by a sign case analysis. If $a$ and $b$ share the same sign,
the left-hand side reduces to $\bigl||a|^{\beta} - |b|^{\beta}\bigr| \leq \bigl||a| - |b|\bigr|^{\beta} \leq |a - b|^{\beta}$,
using $|u^{\beta} - v^{\beta}| \leq |u - v|^{\beta}$ for $u, v \geq 0$ (by subadditivity of $f(x) = x^{\beta}$, which is concave with $f(0)=0$).
If $a$ and $b$ have opposite signs, then
$|\operatorname{sign}(a)|a|^{\beta} - \operatorname{sign}(b)|b|^{\beta}|
= |a|^{\beta} + |b|^{\beta}
\leq 2^{1-\beta}(|a| + |b|)^{\beta}
= 2^{1-\beta}|a - b|^{\beta}$,
where the inequality $x^{\beta} + y^{\beta} \leq 2^{1-\beta}(x + y)^{\beta}$ for $x, y \geq 0$
follows from Jensen's inequality and concavity of $f(x) = x^{\beta}$.

Summing over coordinates and raising to the power $1+\beta$,
\begin{align}
\|\Phi_\beta(\mathbf{x}) - \Phi_\beta(\mathbf{y})\|_{1+\beta}^{1+\beta}
&= \sum_{i=1}^d |\operatorname{sign}(x_i)|x_i|^\beta - \operatorname{sign}(y_i)|y_i|^\beta|^{1+\beta} \\
&\leq 2^{(1-\beta)(1+\beta)} \sum_{i=1}^d |x_i - y_i|^{\beta(1+\beta)}.
\end{align}

Apply H\"older's inequality with $p = 1/\beta$ and $q = 1/(1-\beta)$,
\begin{align}
\sum_{i=1}^d |x_i - y_i|^{\beta(1+\beta)} \cdot 1
&\leq \biggl( \sum_{i=1}^d \bigl(|x_i - y_i|^{\beta(1+\beta)}\bigr)^{1/\beta} \biggr)^{\beta}
   \biggl( \sum_{i=1}^d 1^{1/(1-\beta)} \biggr)^{1-\beta} \\
&= \|\mathbf{x} - \mathbf{y}\|_{1+\beta}^{\beta(1+\beta)} \cdot d^{1-\beta}.
\end{align}

Therefore,
\begin{equation}
\|\Phi_\beta(\mathbf{x}) - \Phi_\beta(\mathbf{y})\|_{1+\beta}^{1+\beta}
\leq 2^{1-\beta^2} d^{1-\beta} \|\mathbf{x} - \mathbf{y}\|_{1+\beta}^{\beta(1+\beta)}.
\end{equation}

Taking the $(1+\beta)$-th root and noting $2^{(1-\beta^2)/(1+\beta)} = 2^{1-\beta}$,
\begin{equation}
\|\Phi_\beta(\mathbf{x}) - \Phi_\beta(\mathbf{y})\|_{1+\beta}
\leq 2^{1-\beta} d^{(1-\beta)/(1+\beta)} \|\mathbf{x} - \mathbf{y}\|_{1+\beta}^\beta = C_\beta \|\mathbf{x} - \mathbf{y}\|_{1+\beta}^\beta,
\end{equation}
with $C_\beta \leq 2d$ since $2^{1-\beta} \leq 2$ and $d^{(1-\beta)/(1+\beta)} \leq d$.
\end{proof}

\begin{lemma}[Momentum Bound]
\label{lemma:momentum_bound_app}
Under Assumptions~\ref{assum:grad_bound_app}--\ref{assum:variance_app}, for PowerStep with $\gamma \in [0,1)$ and any $t \geq 1$,
\begin{equation}
\mathbb{E}\bigl[\|\mathbf{m}_t\|_2^2\bigr] \leq \frac{2(G^2 + \sigma^2)}{(1-\gamma)^2}.
\end{equation}
Consequently, for any $\beta \in (0,1]$,
\begin{equation}
\mathbb{E}\bigl[\|\Phi_\beta(\mathbf{m}_t)\|_2^2\bigr] \leq d^{1-\beta} 2^\beta \left( \frac{G^2 + \sigma^2}{(1-\gamma)^2} \right)^{\beta} =: M_\beta.
\end{equation}
\end{lemma}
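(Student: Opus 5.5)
Unrolling the recursion $\mathbf{m}_t = \gamma\mathbf{m}_{t-1} + \mathbf{g}_t$ with $\mathbf{m}_0 = \mathbf{0}$ gives $\mathbf{m}_t = \sum_{k=0}^{t-1}\gamma^k \mathbf{g}_{t-k}$. The plan is to bound $\|\mathbf{m}_t\|_2$ through this weighted sum and then pass to the transformed update via Lemma~\ref{lemma:stability_app} and Jensen's inequality.

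\textbf{Step 1: second moment of a single gradient.} By Assumptions~\ref{assum:unbiased_app} and~\ref{assum:variance_app}, conditionally on $\bm{\theta}_{s-1}$ the cross term in the bias--variance split vanishes. Hence
\[
\mathbb{E}\|\mathbf{g}_s\|_2^2 = \mathbb{E}\|\nabla f(\bm{\theta}_{s-1})\|_2^2 + \mathbb{E}\|\mathbf{g}_s-\nabla f(\bm{\theta}_{s-1})\|_2^2 \le G^2+\sigma^2,
\]
using Assumption~\ref{assum:grad_bound_app}. The cruder bound $\|\mathbf{g}_s\|_2^2 \le 2\|\nabla f\|_2^2 + 2\|\mathbf{g}_s - \nabla f\|_2^2$ gives $2(G^2+\sigma^2)$ without any conditioning and already matches the stated constant. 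I will use this version.

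\textbf{Step 2: bound on $\mathbb{E}\|\mathbf{m}_t\|_2^2$.} Set $S=\sum_{k=0}^{t-1}\gamma^k \le 1/(1-\gamma)$. Jensen's inequality (convexity of $\|\cdot\|_2^2$) with weights $\gamma^k/S$ gives
\[
\|\mathbf{m}_t\|_2^2 \le S\sum_{k=0}^{t-1}\gamma^k\|\mathbf{g}_{t-k}\|_2^2 .
\]
Taking expectations and applying Step 1 yields $\mathbb{E}\|\mathbf{m}_t\|_2^2 \le S^2\cdot 2(G^2+\sigma^2) \le 2(G^2+\sigma^2)/(1-\gamma)^2$. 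This is the main step. The one subtlety is that the $\mathbf{g}_{t-k}$ are not independent, since later gradients depend on earlier iterates. The weighted Cauchy--Schwarz/Jensen route avoids needing any independence or martingale cross-term cancellation, so it is the route to take.

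\textbf{Step 3: bound on the transformed update.} By Lemma~\ref{lemma:stability_app}, $\|\Phi_\beta(\mathbf{m}_t)\|_2^2 \le d^{1-\beta}(\|\mathbf{m}_t\|_2^2)^{\beta}$. Since $x\mapsto x^\beta$ is concave for $\beta\in(0,1]$, Jensen's inequality gives $\mathbb{E}[(\|\mathbf{m}_t\|_2^2)^\beta] \le (\mathbb{E}\|\mathbf{m}_t\|_2^2)^\beta$. Inserting Step 2 gives
\[
d^{1-\beta}\bigl(2(G^2+\sigma^2)/(1-\gamma)^2\bigr)^\beta = d^{1-\beta}2^\beta\bigl((G^2+\sigma^2)/(1-\gamma)^2\bigr)^\beta = M_\beta .
\]
The only points needing care are the Jensen steps, and neither is a real obstacle.
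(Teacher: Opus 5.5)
Your proof is correct, and your Step 3 is the paper's final step, but your Step 2 bounds $\mathbb{E}\|\mathbf{m}_t\|_2^2$ by a different route.

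The paper splits $\mathbf{m}_t$ into two parts and recombines them:
\begin{itemize}
\item a signal part $\mathbf{a}_t=\sum_k\gamma^k\nabla f(\bm{\theta}_{t-k-1})$, bounded deterministically by $G/(1-\gamma)$ via the triangle inequality;
\item a noise part $\mathbf{b}_t=\sum_k\gamma^k\bm{\xi}_{t-k}$, whose second moment is at most $\sigma^2/(1-\gamma^2)$ because $\{\bm{\xi}_s\}$ is treated as a martingale difference sequence, so cross terms vanish;
\item the recombination $\|\mathbf{a}_t+\mathbf{b}_t\|_2^2\le 2\|\mathbf{a}_t\|_2^2+2\|\mathbf{b}_t\|_2^2$, after which $1/(1-\gamma)$ is loosened to $1/(1-\gamma)^2$.
\end{itemize}

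Your weighted Jensen inequality $\|\mathbf{m}_t\|_2^2\le S\sum_k\gamma^k\|\mathbf{g}_{t-k}\|_2^2$ needs only a per-step second-moment bound. It uses no orthogonality across time and, in your crude version, not even unbiasedness. This is a real advantage. The vanishing of $\mathbb{E}\langle\bm{\xi}_{t-i},\bm{\xi}_{t-j}\rangle$ requires the noise to be conditionally mean-zero given the whole past. That is standard, but slightly more than Assumption~\ref{assum:unbiased_app} literally says, since it conditions on $\bm{\theta}_{s-1}$ alone. Also, with the sharper version of your Step 1 you would get the bound without the factor $2$.

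What the paper's decomposition buys is a noise contribution of order $\sigma^2/(1-\gamma^2)$ rather than $\sigma^2/(1-\gamma)^2$, reflecting the variance reduction from averaging. It also yields a standalone estimate $\mathbb{E}\|\mathbf{b}_t\|_2^2\le\sigma^2/(1-\gamma)$, which the proof of Lemma~\ref{lemma:alignment_app} later cites from this lemma. Your argument proves the statement as written but does not supply that intermediate bound.
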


\begin{proof}
\textbf{Step 1: Unrolling the momentum.}
Expanding $\mathbf{m}_t = \gamma \mathbf{m}_{t-1} + \mathbf{g}_t$ with $\mathbf{m}_0 = \mathbf{0}$ gives
\begin{equation}
\mathbf{m}_t = \sum_{k=0}^{t-1} \gamma^k \mathbf{g}_{t-k}.
\end{equation}

Decompose $\mathbf{g}_s = \nabla f(\bm{\theta}_{s-1}) + \bm{\xi}_s$, where $\bm{\xi}_s = \mathbf{g}_s - \nabla f(\bm{\theta}_{s-1})$ is zero-mean with $\mathbb{E}[\|\bm{\xi}_s\|_2^2 | \bm{\theta}_{s-1}] \leq \sigma^2$. Then
\begin{equation}
\mathbf{m}_t = \underbrace{\sum_{k=0}^{t-1} \gamma^k \nabla f(\bm{\theta}_{t-k-1})}_{\mathbf{a}_t} 
            + \underbrace{\sum_{k=0}^{t-1} \gamma^k \bm{\xi}_{t-k}}_{\mathbf{b}_t}.
\end{equation}

\textbf{Step 2: Bounding the signal term.}
By the triangle inequality and $\|\nabla f(\cdot)\|_2 \leq G$,
\begin{equation}
\|\mathbf{a}_t\|_2 \leq \sum_{k=0}^{t-1} \gamma^k G \leq \frac{G}{1-\gamma}.
\end{equation}

\textbf{Step 3: Bounding the noise term.}
Since $\{\bm{\xi}_s\}$ is a martingale difference sequence, cross terms vanish: $\mathbb{E}[\langle \bm{\xi}_{t-i}, \bm{\xi}_{t-j} \rangle] = 0$ for $i \neq j$. Hence,
\begin{equation}
\mathbb{E}\bigl[\|\mathbf{b}_t\|_2^2\bigr] 
= \sum_{k=0}^{t-1} \gamma^{2k} \mathbb{E}\bigl[\|\bm{\xi}_{t-k}\|_2^2\bigr]
\leq \sigma^2 \sum_{k=0}^{t-1} \gamma^{2k}
\leq \frac{\sigma^2}{1-\gamma^2} \leq \frac{\sigma^2}{1-\gamma}.
\end{equation}

\textbf{Step 4: Combining.}
Using $\|\mathbf{a}_t + \mathbf{b}_t\|_2^2 \leq 2\|\mathbf{a}_t\|_2^2 + 2\|\mathbf{b}_t\|_2^2$ and taking expectations,
\begin{align}
\mathbb{E}\bigl[\|\mathbf{m}_t\|_2^2\bigr] 
&\leq 2\,\mathbb{E}\bigl[\|\mathbf{a}_t\|_2^2\bigr] + 2\,\mathbb{E}\bigl[\|\mathbf{b}_t\|_2^2\bigr] \\
&\leq 2\left(\frac{G}{1-\gamma}\right)^2 + 2\frac{\sigma^2}{1-\gamma} \leq \frac{2(G^2 + \sigma^2)}{(1-\gamma)^2},
\end{align}
where the last step uses $1/(1-\gamma) \leq 1/(1-\gamma)^2$ for $\gamma \in [0,1)$.

\textbf{Step 5: Bounding the transformed update.}
From Lemma~\ref{lemma:stability_app}, $\|\Phi_\beta(\mathbf{m}_t)\|_2^2 \leq d^{1-\beta} \|\mathbf{m}_t\|_2^{2\beta}$. Taking expectations and applying Jensen's inequality (since $f(x) = x^\beta$ is concave for $\beta \in (0,1]$),
\begin{align}
\mathbb{E}\bigl[\|\Phi_\beta(\mathbf{m}_t)\|_2^2\bigr] 
&\leq d^{1-\beta} \, \mathbb{E}\bigl[\|\mathbf{m}_t\|_2^{2\beta}\bigr]
\leq d^{1-\beta} \bigl(\mathbb{E}\bigl[\|\mathbf{m}_t\|_2^2\bigr]\bigr)^\beta \\
&\leq d^{1-\beta} \left( \frac{2(G^2 + \sigma^2)}{(1-\gamma)^2} \right)^\beta =: M_\beta.
\end{align}
\end{proof}

\begin{lemma}[Descent Inequality]
\label{lemma:descent_app}
Under Assumption~\ref{assum:smoothness_app}, the iterates of PowerStep with learning rate $\eta_t$ satisfy
\begin{equation}
\mathbb{E}[f(\bm{\theta}_t)] \leq \mathbb{E}[f(\bm{\theta}_{t-1})] 
- \eta_t \mathbb{E}\bigl[\langle \nabla f(\bm{\theta}_{t-1}), \Phi_\beta(\mathbf{m}_t) \rangle\bigr] 
+ \frac{L \eta_t^2}{2} \mathbb{E}\bigl[\|\Phi_\beta(\mathbf{m}_t)\|_2^2\bigr].
\end{equation}
\end{lemma}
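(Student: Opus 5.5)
This is the standard descent lemma for $L$-smooth functions, applied to the PowerStep step (with $\lambda=0$) and then averaged. The argument has three short steps.

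\textbf{Step 1: quadratic upper bound.} From Assumption~\ref{assum:smoothness_app} I first derive, for all $\mathbf{x},\mathbf{y}$,
\begin{equation*}
f(\mathbf{y}) \le f(\mathbf{x}) + \langle \nabla f(\mathbf{x}), \mathbf{y}-\mathbf{x}\rangle + \tfrac{L}{2}\|\mathbf{y}-\mathbf{x}\|_2^2 .
\end{equation*}
Since $f$ is continuously differentiable, the fundamental theorem of calculus along the segment gives
\begin{equation*}
f(\mathbf{y})-f(\mathbf{x})-\langle \nabla f(\mathbf{x}),\mathbf{y}-\mathbf{x}\rangle=\int_0^1\langle \nabla f(\mathbf{x}+s(\mathbf{y}-\mathbf{x}))-\nabla f(\mathbf{x}),\mathbf{y}-\mathbf{x}\rangle\,ds .
\end{equation*}
Cauchy--Schwarz and the Lipschitz bound make the integrand at most $sL\|\mathbf{y}-\mathbf{x}\|_2^2$, and integrating over $s$ produces the factor $L/2$.

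\textbf{Step 2: instantiate at the iterates.} I take $\mathbf{x}=\bm{\theta}_{t-1}$ and $\mathbf{y}=\bm{\theta}_t$. The update rule~(\ref{eq:powerstep2}) gives $\mathbf{y}-\mathbf{x}=-\eta_t\Phi_\beta(\mathbf{m}_t)$. Substituting yields the pathwise inequality
\begin{equation*}
f(\bm{\theta}_t)\le f(\bm{\theta}_{t-1})-\eta_t\langle\nabla f(\bm{\theta}_{t-1}),\Phi_\beta(\mathbf{m}_t)\rangle+\tfrac{L\eta_t^2}{2}\|\Phi_\beta(\mathbf{m}_t)\|_2^2 ,
\end{equation*}
which holds for every realization of the stochastic gradients.

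\textbf{Step 3: take expectations.} I take total expectation of both sides and use linearity; $\eta_t$ is deterministic, so it pulls out of each expectation. Nothing about the correlation between $\mathbf{m}_t$ and $\nabla f(\bm{\theta}_{t-1})$ needs to be resolved here. That correlation is deferred to Lemma~\ref{lemma:alignment}.

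The only step needing care is that the expectations are finite.
\begin{itemize}
\item $\mathbb{E}\|\Phi_\beta(\mathbf{m}_t)\|_2^2<\infty$ by Lemma~\ref{lemma:momentum_bound_app}.
\item By Cauchy--Schwarz and Assumption~\ref{assum:grad_bound_app}, the inner product is bounded in absolute value by $G\|\Phi_\beta(\mathbf{m}_t)\|_2$, so it is integrable.
\item $\mathbb{E}f(\bm{\theta}_{t-1})$ is finite by induction on $t$. It holds at the deterministic $\bm{\theta}_0$, and the pathwise inequality together with $f\ge f^*$ propagates finiteness from step $t-1$ to step $t$.
\end{itemize}
If these integrability assumptions are left implicit, the lemma is immediate.
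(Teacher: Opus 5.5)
Your proof is correct and follows the paper's argument: apply the $L$-smoothness quadratic upper bound with $\bm{\theta}=\bm{\theta}_{t-1}$ and $\bm{\phi}=\bm{\theta}_t$, substitute $\bm{\theta}_t-\bm{\theta}_{t-1}=-\eta_t\Phi_\beta(\mathbf{m}_t)$, and take total expectation. You go a bit further than the paper, which leaves two things implicit: the derivation of the quadratic bound from the Lipschitz gradient, and the integrability checks, which draw on the bounded-below, bounded-gradient and variance assumptions rather than Assumption~\ref{assum:smoothness_app} alone.
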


\begin{proof}
By $L$-smoothness (Assumption~\ref{assum:smoothness_app}), for all $\bm{\theta}, \bm{\phi} \in \mathbb{R}^d$,
\begin{equation}
f(\bm{\phi}) \leq f(\bm{\theta}) + \langle \nabla f(\bm{\theta}), \bm{\phi} - \bm{\theta} \rangle + \frac{L}{2} \|\bm{\phi} - \bm{\theta}\|_2^2.
\end{equation}

Substituting $\bm{\theta} = \bm{\theta}_{t-1}$, $\bm{\phi} = \bm{\theta}_t$, and using the update $\bm{\theta}_t = \bm{\theta}_{t-1} - \eta_t \Phi_\beta(\mathbf{m}_t)$,
\begin{align}
f(\bm{\theta}_t) 
&\leq f(\bm{\theta}_{t-1}) - \eta_t \langle \nabla f(\bm{\theta}_{t-1}), \Phi_\beta(\mathbf{m}_t) \rangle 
   + \frac{L\eta_t^2}{2} \|\Phi_\beta(\mathbf{m}_t)\|_2^2.
\end{align}

Taking total expectation yields the claim.
\end{proof}

\begin{lemma}[Gradient Alignment]
\label{lemma:alignment_app}
Under Assumptions~\ref{assum:smoothness_app}--\ref{assum:grad_bound_app}, for PowerStep with learning rate $\eta_t$ and $\gamma \in [0,1)$, there exists a constant $C_0 > 0$ depending on $L$, $\gamma$, $G$, $\sigma$, $d$, and $\beta$ such that for all $t \geq 1$,
\begin{equation}
\mathbb{E}\bigl[\langle \nabla f(\bm{\theta}_{t-1}), \Phi_\beta(\mathbf{m}_t) \rangle\bigr] 
\geq \mathbb{E}\bigl[\|\nabla f(\bm{\theta}_{t-1})\|_{1+\beta}^{1+\beta}\bigr] - C_0(1 + \eta_t^\beta).
\end{equation}
\end{lemma}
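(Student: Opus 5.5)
We compare $\Phi_\beta(\mathbf{m}_t)$ against $\Phi_\beta$ applied to a suitably scaled copy of the current true gradient, then control the gap through the H\"older continuity of Lemma~\ref{lemma:holder_app}. Write $\mathbf{g} := \nabla f(\bm{\theta}_{t-1})$ and split
\begin{equation}
\langle \mathbf{g}, \Phi_\beta(\mathbf{m}_t)\rangle = \langle \mathbf{g}, \Phi_\beta(\mathbf{g})\rangle + \langle \mathbf{g}, \Phi_\beta(\mathbf{m}_t) - \Phi_\beta(\mathbf{g})\rangle .
\end{equation}
By Lemma~\ref{lemma:metric_app} the first term equals $\|\mathbf{g}\|_{1+\beta}^{1+\beta}$. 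For the second term we apply H\"older's inequality with conjugate exponents $(1+\beta)/\beta$ and $1+\beta$, and then Lemma~\ref{lemma:holder_app}:
\begin{equation}
|\langle \mathbf{g}, \Phi_\beta(\mathbf{m}_t)-\Phi_\beta(\mathbf{g})\rangle| \le \|\mathbf{g}\|_{(1+\beta)/\beta}\, C_\beta \|\mathbf{m}_t - \mathbf{g}\|_{1+\beta}^{\beta}.
\end{equation}
Norm equivalence on $\mathbb{R}^d$ together with Assumption~\ref{assum:grad_bound_app} bounds $\|\mathbf{g}\|_{(1+\beta)/\beta}$ by a constant of the form $d^{c}G$.

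It remains to bound $\mathbb{E}\|\mathbf{m}_t-\mathbf{g}\|_{1+\beta}^\beta$. By Jensen's inequality (concavity of $x^{\beta/2}$) and norm equivalence, it suffices to control $\mathbb{E}\|\mathbf{m}_t-\mathbf{g}\|_2^2$ up to $d$-dependent factors. We unroll $\mathbf{m}_t$ as in the proof of Lemma~\ref{lemma:momentum_bound_app}:
\begin{equation}
\mathbf{m}_t-\mathbf{g} = \sum_{k\ge1}\gamma^k\nabla f(\bm{\theta}_{t-k-1}) + \mathbf{b}_t .
\end{equation}
The noise part $\mathbf{b}_t$ has second moment at most $\sigma^2/(1-\gamma^2)$. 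The signal part is bounded by $\gamma G/(1-\gamma)$. Both contributions are independent of $t$ and $\eta_t$, and after raising to the power $\beta$ they give the constant noise floor in $C_0\cdot 1$.

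To exhibit the drift structure that produces the $\eta_t^\beta$ term, we refine the signal part. We write $\nabla f(\bm{\theta}_{t-k-1}) = \mathbf{g} + (\nabla f(\bm{\theta}_{t-k-1})-\mathbf{g})$. By $L$-smoothness, the deviation is at most $L\sum_{j}\eta_{t-j}\|\Phi_\beta(\mathbf{m}_{t-j})\|_2$, whose second moment Lemma~\ref{lemma:momentum_bound_app} controls through $M_\beta$.

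Since $\sum_{k<t}\gamma^k \ne 1$, the comparison point should really be $\Phi_\beta$ of $\mathbf{g}\sum_k\gamma^k$ rather than of $\mathbf{g}$. Fixing this produces a factor $s_t^\beta$ with $s_t=\sum_{k<t}\gamma^k \ge 1$, and $s_t^\beta \ge 1$, so no loss is incurred in the main term. The stale-gradient drift is then of order $L\sqrt{M_\beta}\sum_k\gamma^k\sum_{j<k}\eta_{t-j}$. The step sizes $\eta_{t-j}$ for $j<k$ are not individually at most $\eta_t$. We handle this by assuming a nonincreasing schedule with $\eta_{t-j}\le c\,\eta_t$ for geometrically weighted $j$; for $\eta/\sqrt{t}$ this holds up to $\gamma$-dependent constants, or alternatively we absorb the discrepancy into the constant part. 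Raising to the power $\beta$ via Jensen then gives the $O(\eta_t^\beta)$ piece. Collecting everything, including the factors $C_\beta d^{c}G$, defines $C_0$.

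The main obstacle is this drift bookkeeping: the past step sizes must be controlled by $\eta_t$ uniformly in $t$, and the expectation must be moved inside the power $\beta$ while the product with $\|\mathbf{g}\|$ is present. Because $\|\mathbf{g}\|$ is bounded deterministically by $G$, Jensen applies cleanly once the norm factors are pulled out. As stated, the lemma already allows a $t$-independent constant $C_0$, so the cruder route suffices if the refinement stalls: bound $\mathbb{E}\|\mathbf{m}_t-\mathbf{g}\|_2^2$ by a constant, which gives the lemma with $C_0\cdot 1$ alone. The $\eta_t^\beta$ part is then carried only for the finer interpretation.
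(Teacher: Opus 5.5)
Your proof is correct. Its skeleton matches the paper's: Lemma~\ref{lemma:metric_app}, then the dual-norm pairing with $\|\nabla f(\bm{\theta}_{t-1})\|_{(1+\beta)/\beta}\le\|\nabla f(\bm{\theta}_{t-1})\|_2\le G$, then Lemma~\ref{lemma:holder_app}, then Jensen on $\mathbb{E}\|\mathbf{m}_t-\nabla f(\bm{\theta}_{t-1})\|_{1+\beta}^{\beta}$. Where you differ is in how you decompose the error, and your decomposition is the one that actually holds for heavy-ball momentum.

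The paper writes $\mathbf{m}_t-\bar{\mathbf{g}}_t=\mathbf{b}_t+\mathbf{d}_t$ with $\mathbf{d}_t=\sum_{k}\gamma^{k+1}(\bar{\mathbf{g}}_{t-1-k}-\bar{\mathbf{g}}_{t-k})$ and claims $\mathbb{E}\|\mathbf{d}_t\|=O(\eta_t)$. That telescoping identity belongs to the EMA recursion $\mathbf{m}_t=\gamma\mathbf{m}_{t-1}+(1-\gamma)\mathbf{g}_t$. For $\mathbf{m}_t=\gamma\mathbf{m}_{t-1}+\mathbf{g}_t$ one instead has $\mathbf{m}_t-\bar{\mathbf{g}}_t-\mathbf{b}_t=\sum_{k\ge1}\gamma^k\nabla f(\bm{\theta}_{t-k-1})$, which is exactly your signal term. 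It can be as large as $\gamma G/(1-\gamma)$ and does not shrink with $\eta_t$. The paper also replaces the past step sizes $\eta_{t-k-1}\ge\eta_t$ by $\eta_t$, the issue you flagged. And it uses $\|\cdot\|_{1+\beta}\le\|\cdot\|_2$, which goes the wrong way for $1+\beta<2$; you correctly keep the $d$-dependent norm-equivalence factor.

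So your crude route is what genuinely proves the statement: bound this bias by a constant and absorb it into the ``$1$'' of $C_0(1+\eta_t^\beta)$. Your comparison against $\Phi_\beta(s_t\nabla f(\bm{\theta}_{t-1}))$ with $s_t\ge1$ is the right repair if one wants a true stale-gradient term of order $\eta_t^\beta$. For $\eta_t=\eta/\sqrt{t}$ one has $\eta_{t-j}/\eta_t\le\sqrt{j+1}$, which is summable against the geometric weights. The noise floor stays constant either way, so the final form of the bound is unchanged.

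Two caveats apply to both your proof and the paper's. First, the noise bound uses Assumptions~\ref{assum:unbiased_app}--\ref{assum:variance_app}, which the statement does not list. Second, because $C_0$ is $t$-independent, the lemma is nearly vacuous: Cauchy--Schwarz with Lemma~\ref{lemma:momentum_bound_app} already gives it. For the same reason, in the downstream telescoped descent inequality the term $C_0\sum_t\eta_t$ contributes $O(1)$, not $O(1/\sqrt{T})$, to the weighted average.
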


\begin{proof}
Let $\bar{\mathbf{g}}_t = \nabla f(\bm{\theta}_{t-1})$ and $\bm{\delta}_t = \mathbf{m}_t - \bar{\mathbf{g}}_t$. Unrolling the momentum update,
\begin{equation}
\bm{\delta}_t = \underbrace{\sum_{k=0}^{t-1} \gamma^k \bm{\xi}_{t-k}}_{\mathbf{b}_t} 
              + \underbrace{\sum_{k=0}^{t-1} \gamma^{k+1} (\bar{\mathbf{g}}_{t-1-k} - \bar{\mathbf{g}}_{t-k})}_{\mathbf{d}_t},
\end{equation}
where $\bm{\xi}_s = \mathbf{g}_s - \bar{\mathbf{g}}_s$.

By $L$-smoothness and Lemma~\ref{lemma:momentum_bound_app},
\begin{equation}
\mathbb{E}\bigl[\|\mathbf{d}_t\|_{1+\beta}\bigr] \leq \mathbb{E}\bigl[\|\mathbf{d}_t\|_2\bigr] 
\leq \frac{\gamma L \sqrt{M_\beta}}{1-\gamma} \eta_t.
\end{equation}

By Lemma~\ref{lemma:metric_app} and the dual norm inequality,
\begin{align}
\langle \bar{\mathbf{g}}_t, \Phi_\beta(\mathbf{m}_t) \rangle 
&\geq \|\bar{\mathbf{g}}_t\|_{1+\beta}^{1+\beta} 
   - \|\bar{\mathbf{g}}_t\|_{(1+\beta)/\beta} \cdot \|\Phi_\beta(\bar{\mathbf{g}}_t + \bm{\delta}_t) - \Phi_\beta(\bar{\mathbf{g}}_t)\|_{1+\beta} \\
&\geq \|\bar{\mathbf{g}}_t\|_{1+\beta}^{1+\beta} 
   - G \cdot C_\beta \|\bm{\delta}_t\|_{1+\beta}^\beta,
\end{align}
since $\|\bar{\mathbf{g}}_t\|_{(1+\beta)/\beta} \leq \|\bar{\mathbf{g}}_t\|_2 \leq G$ and by Lemma~\ref{lemma:holder_app}.

Using $\bm{\delta}_t = \mathbf{b}_t + \mathbf{d}_t$ and the subadditivity of $f(x) = x^\beta$,
\begin{equation}
\|\bm{\delta}_t\|_{1+\beta}^\beta \leq \|\mathbf{b}_t\|_{1+\beta}^\beta + \|\mathbf{d}_t\|_{1+\beta}^\beta.
\end{equation}

Taking expectations and applying Jensen's inequality,
\begin{equation}
\mathbb{E}\bigl[\|\bm{\delta}_t\|_{1+\beta}^\beta\bigr]
\leq \bigl(\mathbb{E}\bigl[\|\mathbf{b}_t\|_{1+\beta}\bigr]\bigr)^\beta 
   + \bigl(\mathbb{E}\bigl[\|\mathbf{d}_t\|_{1+\beta}\bigr]\bigr)^\beta.
\end{equation}

For the noise term, by Lemma~\ref{lemma:momentum_bound_app},
\begin{equation}
\mathbb{E}\bigl[\|\mathbf{b}_t\|_{1+\beta}\bigr] \leq \sqrt{\mathbb{E}\bigl[\|\mathbf{b}_t\|_2^2\bigr]} \leq \frac{\sigma}{\sqrt{1-\gamma}}.
\end{equation}

For the drift term,
\begin{equation}
\mathbb{E}\bigl[\|\mathbf{d}_t\|_{1+\beta}\bigr] \leq \frac{\gamma L \sqrt{M_\beta}}{1-\gamma} \eta_t.
\end{equation}

Substituting,
\begin{equation}
\mathbb{E}\bigl[\|\bm{\delta}_t\|_{1+\beta}^\beta\bigr]
\leq \left(\frac{\sigma}{\sqrt{1-\gamma}}\right)^\beta 
   + \left(\frac{\gamma L \sqrt{M_\beta}}{1-\gamma}\right)^\beta \eta_t^\beta.
\end{equation}

Setting $C_0 = G C_\beta \max((\sigma/\sqrt{1-\gamma})^\beta, (\gamma L \sqrt{M_\beta}/(1-\gamma))^\beta)$ completes the proof.
\end{proof}

\begin{theorem}[Convergence Rate]
\label{thm:convergence_app}
Under Assumptions~\ref{assum:smoothness_app}--\ref{assum:variance_app}, let $\{\boldsymbol{\theta}_t\}_{t=1}^T$ be generated by PowerStep with learning rate $\eta_t = \eta/\sqrt{t}$ for some $\eta > 0$ and momentum coefficient $\gamma \in [0,1)$. Then for any $\beta \in (0,1]$,
\begin{equation}
\min_{t \in [T]} \mathbb{E}\left[\|\nabla f(\boldsymbol{\theta}_{t-1})\|_{2}^{2}\right] = O\left(\frac{1}{\sqrt{T}}\right).
\end{equation}
\end{theorem}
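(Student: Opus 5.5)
The plan is the standard telescoping argument: combine Lemma~\ref{lemma:descent_app} with Lemma~\ref{lemma:alignment_app}, sum over $t=1,\dots,T$, and normalize by $\sum_t \eta_t$. First, substituting the alignment lower bound into the descent inequality gives, for every $t$,
\begin{equation*}
\eta_t\,\mathbb{E}\bigl[\|\nabla f(\bm{\theta}_{t-1})\|_{1+\beta}^{1+\beta}\bigr]
\leq \mathbb{E}[f(\bm{\theta}_{t-1})]-\mathbb{E}[f(\bm{\theta}_t)] + C_0\eta_t(1+\eta_t^\beta) + \frac{L\eta_t^2}{2}M_\beta ,
\end{equation*}
where $M_\beta$ from Lemma~\ref{lemma:momentum_bound_app} controls the quadratic term uniformly in $t$. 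Summing and using Assumption~\ref{assum:bounded_app} telescopes the function values to $f(\bm{\theta}_0)-f^*$.

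Second, I would insert $\eta_t=\eta/\sqrt{t}$ and use the elementary estimates $\sum_{t\le T}\eta_t\ge \eta\sqrt{T}$, $\sum_t\eta_t^2\le \eta^2(1+\log T)$ and $\sum_t \eta_t^{1+\beta}=O(T^{(1-\beta)/2})$. Lower bounding the left side by $(\min_t \mathbb{E}\|\nabla f\|_{1+\beta}^{1+\beta})\sum_t\eta_t$ and dividing yields a bound of the form $\frac{f(\bm{\theta}_0)-f^*}{\eta\sqrt T}+O(\log T/\sqrt T)+O(T^{-\beta/2})$, plus whatever remains of the $C_0$ contribution. Third, I would convert to the Euclidean norm: since $1+\beta\le 2$, $\|\mathbf{x}\|_2\le\|\mathbf{x}\|_{1+\beta}$. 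Combined with $\|\nabla f\|_2\le G$, this gives
\begin{equation*}
\|\nabla f\|_2^2\le G^{1-\beta}\,\|\nabla f\|_2^{1+\beta}\le G^{1-\beta}\,\|\nabla f\|_{1+\beta}^{1+\beta},
\end{equation*}
which transfers the rate to the quantity in Theorem~\ref{thm:convergence_app}.

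The main obstacle, which I expect to be decisive, is the term $C_0\sum_t\eta_t$. As stated in Lemma~\ref{lemma:alignment_app}, $C_0$ contains a noise floor $GC_\beta(\sigma/\sqrt{1-\gamma})^\beta$ that does not shrink with $t$. After dividing by $\sum_t\eta_t$ it leaves an $O(1)$ residual, not $O(1/\sqrt T)$, so the naive telescoping only gives convergence to a $\sigma$-dependent neighborhood of stationarity. The same issue limits the drift part: $\eta_t^\beta$ yields only $O(T^{-\beta/2})$, which is slower than $T^{-1/2}$ when $\beta<1$.

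To close this gap, my first attempt would be to exploit unbiasedness (Assumption~\ref{assum:unbiased_app}) more carefully than the worst-case Hölder bound does. I would condition on $\bm{\theta}_{t-1}$ and aim to show that the noise $\mathbf{b}_t$ enters the expected inner product only at second order, after a Taylor expansion of $\Phi_\beta$ around $\bar{\mathbf{g}}_t$ wherever the coordinates are bounded away from zero. If that fails, the fallback is to strengthen the hypotheses. Two options would make the noise floor scale with $\eta_t$ and recover the stated $O(1/\sqrt T)$: a momentum parameter $\gamma_t\to1$ chosen so that the variance of $\mathbf{b}_t$ decays like $1-\gamma_t$, or a minibatch size growing so that $\sigma^2\propto 1/\sqrt t$. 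In that case I would state explicitly which additional assumption is used.
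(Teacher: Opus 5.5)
There is a genuine gap, and it is exactly the one you isolated. The paper's proof does not get past it either. The paper runs your telescoping and reaches
$\sum_t\eta_t\,\mathbb{E}\|\nabla f(\bm{\theta}_{t-1})\|_{1+\beta}^{1+\beta}\le\Delta_0+2C_0\eta\sqrt{T}+\tfrac{2C_0\eta^{1+\beta}}{1-\beta}T^{(1-\beta)/2}+\tfrac{LM_\beta\eta^2}{2}(1+\log T)=O(\sqrt{T})$.
It then divides by $\sum_t\eta_t=\Theta(\sqrt{T})$ and asserts $O(1/\sqrt{T})$. The quotient is in fact $O(1)$: a residual of order $C_0$, plus the $T^{-\beta/2}$ and $\log T/\sqrt{T}$ terms you found.

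Lemma~\ref{lemma:alignment_app} with a constant $C_0$ carries no information at all. Cauchy--Schwarz, $\|\nabla f\|_2\le G$, $\mathbb{E}\|\Phi_\beta(\mathbf{m}_t)\|_2^2\le M_\beta$ and $\|\mathbf{x}\|_{1+\beta}^{1+\beta}\le d^{(1-\beta)/2}\|\mathbf{x}\|_2^{1+\beta}$ already give it with $C_0=d^{(1-\beta)/2}G^{1+\beta}+G\sqrt{M_\beta}$. So no argument resting on it can produce a rate.

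Your Euclidean conversion $\|\nabla f\|_2^2\le G^{1-\beta}\|\nabla f\|_{1+\beta}^{1+\beta}$ is correct and is the right way to do that step. The paper's version fails twice: it uses $\|\mathbf{x}\|_p\ge d^{1/p-1/q}\|\mathbf{x}\|_q$ for $p<q$, which is false, and it applies Jensen to the concave map $x\mapsto x^{(1+\beta)/2}$ in the wrong direction.

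Your first repair cannot succeed, because for $\beta\in(0,1)$ the statement is false under Assumptions~\ref{assum:smoothness_app}--\ref{assum:variance_app}. The noise in $\mathbf{m}_t$ has variance of order $\sigma^2/(1-\gamma^2)$ however small $\eta_t$ is. So even a second-order bias $\tfrac12\Phi_\beta''\,\mathbb{E}[b^2]$ does not shrink with $t$, and $\Phi_\beta''$ is unbounded near $0$. Note also that heavy-ball $\mathbf{m}_t$ concentrates near $\bar{\mathbf{g}}_t/(1-\gamma)$, not $\bar{\mathbf{g}}_t$.

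For a concrete counterexample, take $d=1$, $\gamma=0$, and $f$ the Huber function with $f'(\theta)=\theta$ on $[-R,R]$. Use i.i.d.\ noise equal to $-a$ with probability $1-p$ and $a(1-p)/p$ with probability $p<1/2$, where $0<a<R$. The mean field $h(\theta)=\mathbb{E}\,\Phi_\beta(f'(\theta)+\xi)$ is increasing with
\[
h(0)=a^\beta(1-p)^\beta\bigl(p^{1-\beta}-(1-p)^{1-\beta}\bigr)<0,\qquad h(a)=p^{1-\beta}a^\beta>0.
\]
So PowerStep is a Robbins--Monro scheme for the root $\theta^\star\in(0,a)$ of $h$. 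The iterates converge in probability to $\theta^\star$, where $f'(\theta^\star)=\theta^\star\neq0$. Starting from $\theta_0=\theta^\star$, $\min_{t\le T}\mathbb{E}[f'(\theta_{t-1})^2]$ stays bounded away from $0$. The nonlinearity turns skewed zero-mean noise into a bias that no step-size schedule removes. Extra structure (symmetric noise, growing batches, or $\gamma_t\to1$) is therefore unavoidable, and with it you would be proving a different theorem.

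Even then, check the arithmetic of your fallbacks:
\begin{itemize}
\item The noise floor from the H\"older bound scales like $\sigma^\beta$, not $\sigma^2$. So $\sigma_t^2\propto t^{-1/2}$ only gives a floor $\propto t^{-\beta/4}$; you would need roughly $\sigma_t^2\lesssim t^{-1/\beta}$.
\item With $\gamma_t\to1$, it is the normalized noise $(1-\gamma_t)\mathbf{b}_t$, not $\mathbf{b}_t$, whose variance decays. The price is a drift term growing like $\eta_t/(1-\gamma_t)$.
\item With $\eta_t=\eta/\sqrt{t}$, the $\sum_t\eta_t^2$ term still leaves a $\log T$ factor in this argument.
\end{itemize}
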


\begin{proof}
From Lemma~\ref{lemma:descent_app} with learning rate $\eta_t$, we have
\begin{equation}
\mathbb{E}[f(\boldsymbol{\theta}_t)] \leq \mathbb{E}[f(\boldsymbol{\theta}_{t-1})] - \eta_t \mathbb{E}\big[\langle\nabla f(\boldsymbol{\theta}_{t-1}), \Phi_{\beta}(\mathbf{m}_t)\rangle\big] + \frac{L\eta_t^2}{2} \mathbb{E}\big[\|\Phi_{\beta}(\mathbf{m}_t)\|_2^2\big].
\end{equation}

Summing over $t = 1, \ldots, T$ and telescoping the left-hand side,
\begin{equation}
\mathbb{E}[f(\boldsymbol{\theta}_T)] - f(\boldsymbol{\theta}_0) \leq -\sum_{t=1}^{T} \eta_t \mathbb{E}\big[\langle\nabla f(\boldsymbol{\theta}_{t-1}), \Phi_{\beta}(\mathbf{m}_t)\rangle\big] + \frac{L}{2} \sum_{t=1}^{T} \eta_t^2 \mathbb{E}\big[\|\Phi_{\beta}(\mathbf{m}_t)\|_2^2\big].
\end{equation}

By Assumption~\ref{assum:bounded_app} (bounded below), $\mathbb{E}[f(\boldsymbol{\theta}_T)] \geq f^*$, so $\mathbb{E}[f(\boldsymbol{\theta}_T)] - f(\boldsymbol{\theta}_0) \geq -\Delta_0$ where $\Delta_0 = f(\boldsymbol{\theta}_0) - f^*$. Rearranging,
\begin{equation}
\sum_{t=1}^{T} \eta_t \mathbb{E}\big[\langle\nabla f(\boldsymbol{\theta}_{t-1}), \Phi_{\beta}(\mathbf{m}_t)\rangle\big] \leq \Delta_0 + \frac{L}{2} \sum_{t=1}^{T} \eta_t^2 \mathbb{E}\big[\|\Phi_{\beta}(\mathbf{m}_t)\|_2^2\big].
\end{equation}

By Lemma~\ref{lemma:momentum_bound_app}, $\mathbb{E}[\|\Phi_{\beta}(\mathbf{m}_t)\|_2^2] \leq M_{\beta}$ for all $t$. With $\eta_t = \eta/\sqrt{t}$,
\begin{equation}
\sum_{t=1}^{T} \eta_t^2 = \eta^2 \sum_{t=1}^{T} \frac{1}{t} \leq \eta^2(1 + \log T).
\end{equation}

Thus,
\begin{equation}
\frac{L}{2} \sum_{t=1}^{T} \eta_t^2 \mathbb{E}\big[\|\Phi_{\beta}(\mathbf{m}_t)\|_2^2\big] \leq \frac{L M_{\beta} \eta^2}{2}(1 + \log T).
\end{equation}

From Lemma~\ref{lemma:alignment_app}, we have
\begin{equation}
\mathbb{E}\big[\langle\nabla f(\boldsymbol{\theta}_{t-1}), \Phi_{\beta}(\mathbf{m}_t)\rangle\big] \geq \mathbb{E}\big[\|\nabla f(\boldsymbol{\theta}_{t-1})\|_{1+\beta}^{1+\beta}\big] - C_0(1 + \eta_t^{\beta}).
\end{equation}

Substituting into the telescoped inequality,
\begin{equation}
\sum_{t=1}^{T} \eta_t \mathbb{E}\big[\|\nabla f(\boldsymbol{\theta}_{t-1})\|_{1+\beta}^{1+\beta}\big] \leq \Delta_0 + C_0 \sum_{t=1}^{T} \eta_t + C_0 \sum_{t=1}^{T} \eta_t^{1+\beta} + \frac{L M_{\beta} \eta^2}{2}(1 + \log T).
\end{equation}

Now bound the sums involving $\eta_t = \eta/\sqrt{t}$,
\begin{equation}
\sum_{t=1}^{T} \eta_t = \eta \sum_{t=1}^{T} \frac{1}{\sqrt{t}} \leq 2\eta\sqrt{T},
\end{equation}
\begin{equation}
\sum_{t=1}^{T} \eta_t^{1+\beta} = \eta^{1+\beta} \sum_{t=1}^{T} \frac{1}{t^{(1+\beta)/2}} \leq \eta^{1+\beta} \cdot \frac{2}{1-\beta} T^{(1-\beta)/2} \quad (\text{for } \beta < 1).
\end{equation}

For $\beta = 1$, the sum is $\eta^2 \sum_{t=1}^{T} 1/t \leq \eta^2(1 + \log T)$.

Therefore,
\begin{equation}
\sum_{t=1}^{T} \eta_t \mathbb{E}\big[\|\nabla f(\boldsymbol{\theta}_{t-1})\|_{1+\beta}^{1+\beta}\big] \leq \Delta_0 + 2C_0\eta\sqrt{T} + \frac{2C_0\eta^{1+\beta}}{1-\beta} T^{(1-\beta)/2} + \frac{L M_{\beta} \eta^2}{2}(1 + \log T) = O(\sqrt{T}).
\end{equation}

Since $\sum_{t=1}^{T} \eta_t = \Theta(\sqrt{T})$, the weighted average satisfies
\begin{equation}
\frac{\sum_{t=1}^{T} \eta_t \mathbb{E}\big[\|\nabla f(\boldsymbol{\theta}_{t-1})\|_{1+\beta}^{1+\beta}\big]}{\sum_{t=1}^{T} \eta_t} = O\left(\frac{1}{\sqrt{T}}\right).
\end{equation}

The minimum over iterates is bounded by this weighted average, yielding
\begin{equation}
\min_{t \in [T]} \mathbb{E}\big[\|\nabla f(\boldsymbol{\theta}_{t-1})\|_{1+\beta}^{1+\beta}\big] = O\left(\frac{1}{\sqrt{T}}\right).
\end{equation}

To convert to the $\ell_2$ norm, we apply norm equivalence. For any $\mathbf{x} \in \mathbb{R}^d$ and $0 < p < q$, $\|\mathbf{x}\|_p \geq d^{1/p - 1/q} \|\mathbf{x}\|_q$. Setting $p = 1+\beta$ and $q = 2$ (valid since $1+\beta \leq 2$ for $\beta \leq 1$),
\begin{equation}
\|\nabla f(\boldsymbol{\theta}_{t-1})\|_{1+\beta} \geq d^{\frac{1}{1+\beta} - \frac{1}{2}} \|\nabla f(\boldsymbol{\theta}_{t-1})\|_2 = d^{\frac{1-\beta}{2(1+\beta)}} \|\nabla f(\boldsymbol{\theta}_{t-1})\|_2.
\end{equation}

Raising both sides to power $1+\beta$ and taking expectations,
\begin{equation}
\mathbb{E}\big[\|\nabla f(\boldsymbol{\theta}_{t-1})\|_{1+\beta}^{1+\beta}\big] \geq d^{\frac{1-\beta}{2}} \mathbb{E}\big[\|\nabla f(\boldsymbol{\theta}_{t-1})\|_{2}^{1+\beta}\big].
\end{equation}

By Jensen's inequality, $\mathbb{E}[\|\nabla f(\boldsymbol{\theta}_{t-1})\|_2^{1+\beta}] \geq \big(\mathbb{E}[\|\nabla f(\boldsymbol{\theta}_{t-1})\|_2^2]\big)^{\frac{1+\beta}{2}}$. Substituting,
\begin{equation}
\min_{t \in [T]} \Big(\mathbb{E}\big[\|\nabla f(\boldsymbol{\theta}_{t-1})\|_2^2\big]\Big)^{\frac{1+\beta}{2}} = O\left(\frac{1}{\sqrt{T}}\right).
\end{equation}

Raising both sides to power $2/(1+\beta)$ absorbs the exponent into the hidden constant, yielding
\begin{equation}
\min_{t \in [T]} \mathbb{E}\big[\|\nabla f(\boldsymbol{\theta}_{t-1})\|_2^2\big] = O\left(\frac{1}{\sqrt{T}}\right),
\end{equation}
where the hidden constant includes a factor of $d^{(1-\beta)/(1+\beta)}$ from the norm equivalence step. This completes the proof.
\end{proof}


\end{document}